\begin{document}
\title{Learning payoffs while routing in skill-based queues}
\author{Sanne van Kempen, Jaron Sanders, Fiona Sloothaak, Maarten G. Wolf}
\date{}
\maketitle

\begin{abstract}
Motivated by applications in service systems, we consider queueing systems where each customer must be handled by a server with the right skill set. 
We focus on optimizing the routing of customers to servers in order to maximize the total payoff of customer--server matches.
In addition, customer--server dependent payoff parameters are assumed to be unknown \emph{a priori}. 
We construct a machine learning algorithm that adaptively learns the payoff parameters while maximizing the total payoff and prove that it achieves polylogarithmic regret.
Moreover, we show that the algorithm is asymptotically optimal up to logarithmic terms by deriving a regret lower bound. 
The algorithm leverages the basic feasible solutions of a static linear program as the action space. 
The regret analysis overcomes the complex interplay between queueing and learning by analyzing the convergence of the queue length process to its stationary behavior. 
We also demonstrate the performance of the algorithm numerically, and have included an experiment with time-varying parameters highlighting the potential of the algorithm in non-static environments. 

\end{abstract}

\section{Introduction}
\label{sec:introduction}
Service systems such as contact centers, computer networks, and manufacturing systems are widely used in practice \citep{Harchol2013,Chen2020,Koole2002}. 
Achieving the highest possible quality of service in such systems is consequently of general importance. 
However, actually doing so is typically quite challenging because of complex interactions within the system, between e.g.\ customers and servers.
Moreover, service provisioning is an intrinsically uncertain process.

In this paper, we develop a machine learning algorithm that can attain the highest possible performance in one such service system.
Specifically, we focus on a finite skill-based queueing system with customer--server dependent random payoffs.
One such system is illustrated in Figure~\ref{fig:intro_qsystem_example}.
We consider different customers to have different needs, and different servers to be better at helping certain types of customer over others.
In fact, some servers might even be unable to help some types of customers. 
We model these aspects by assuming that there are compatibility relations between customers and servers, and by assuming that whenever a customer is served by a server, a random payoff is generated that depends on the specific customer--server pairing.
Since the different servers are shared between the different customer types, and since they are limited in their number, the highest possible average reward can only be achieved by optimizing how one matches customers to servers.

\begin{figure}[h]
    \begin{center}
        \begin{tikzpicture}[
            server/.style={circle, minimum size = .8cm, thick,draw},
            scale=.7
            ]
            \scriptsize
            \foreach \x in {1,2,3,4,5}{ 
                \node[draw = none] at (4*\x,0) (queue\x) {};
                \node[above of = queue\x, yshift = -.5cm] (labelqueue\x) {};
                \draw[thick] (queue\x.center) --++(-.5,0) --++(0,1.2); 
                \draw[thick] (queue\x.center) --++(.5,0) --++(0,1.2); 
                \node[draw = none, above of = queue\x, yshift = .2cm] {$\downarrow \lambda_\x$};
            }

            \fill[black] ($(queue1.center) + (-.4,.4)$) rectangle ++(.8,-0.3);
            \fill[black] ($(queue1.center) + (-.4,.8)$) rectangle ++(.8,-0.3);
            \fill[black] ($(queue1.center) + (-.4,1.2)$) rectangle ++(.8,-0.3);
            \fill[black] ($(queue2.center) + (-.4,.4)$) rectangle ++(.8,-0.3);
            \fill[black] ($(queue3.center) + (-.4,.4)$) rectangle ++(.8,-0.3);
            \fill[black] ($(queue5.center) + (-.4,.4)$) rectangle ++(.8,-0.3);
            \fill[black] ($(queue5.center) + (-.4,.8)$) rectangle ++(.8,-0.3);

            \foreach \x in {1,2,3,4,5}{
                \node[server] at (4*\x,-4) (server\x) {};
                \node at (server\x)  {$\mu_\x$};
            }

            \draw[very thick, black] (queue1.center) -- (server1.north) node[pos=0.3, left]
            {$\theta_{11}$};
            \draw[very thick, black] (queue1.center) -- (server2.north) node[pos=0.25, left,yshift=-.05cm]
            {$\theta_{12}$};
            \draw[very thick, black] (queue1.center) -- (server3.north) node[pos=0.1, right, yshift = .1cm]
            {$\theta_{13}$};

            \draw[very thick, black] (queue2.center) -- (server1.north);
            \draw[very thick, black] (queue2.center) -- (server2.north);
            \draw[very thick, black] (queue2.center) -- (server3.north);
            \draw[very thick, black] (queue2.center) -- (server5.north);
            \draw[very thick, black] (queue3.center) -- (server4.north);
            \draw[very thick, black] (queue3.center) -- (server5.north);
            \draw[very thick, black] (queue4.center) -- (server5.north);
            \draw[very thick, black] (queue5.center) -- (server3.north);
            \draw[very thick, black] (queue5.center) -- (server4.north);
            \draw[very thick, black] (queue5.center) -- (server5.north);
        \end{tikzpicture}
    \end{center}
    \caption{A skill-based queueing system with compatibility lines and customer--server dependent payoffs. Here, the $\lambda_i$ denote arrival rates, the $\mu_j$ denote service rates, and the $\theta_{ij}$ indicate the average payoff generated upon service completion of a type-$i$ customer at server $j$.}
    \label{fig:intro_qsystem_example}
\end{figure}
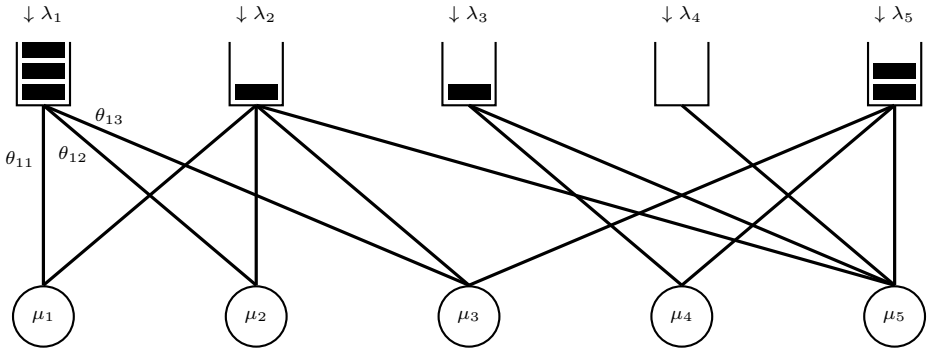

Within queueing literature, a canonical tool for doing server allocation is routing policies.
Routing policies decide which customer is served by which server, and at what time.
However, routing policies typically do not take into account (i) compatibility relations or (ii) payoffs generated by different customer--server pairs. 
Furthermore, they usually do not consider (iii) uncertainty in and/or a lack of knowledge on  model parameters (such as the average payoffs $\theta_{ij}$).

Our machine learning algorithm, Algorithm~\ref{alg:learning_alg} in Section~\ref{sec:learning_alg}, does take into account aspects (i)--(iii). 
Letting $D_{ij}(t)$ refer to the number of departures of type-$i$ customers at server $j$ up to time $t$, and $\calL$ to the set of compatibility lines, Algorithm~\ref{alg:learning_alg} maximizes the long-term expected reward rate

\begin{align}
    \frac 1t \sum_{(ij)\in\calL} \theta_{ij} \E(D_{ij}(t)).
    \label{eq:intro_reward}
\end{align}

Algorithm~\ref{alg:learning_alg} does so while 
(a) honoring the queueing dynamics and compatibility relations;
(b) guaranteeing the stability of all queues; 
(c) being able to hone in on the true system parameters $\theta_{ij}$ when either misspecified or unknown \emph{a priori}; 
and 
(d) dealing with the exploration--exploitation dilemma efficiently. 
These aspects (a)-(d) will be discussed in more detail in Section~\ref{sec:Main-contributions} below, but 
may be summarized in one sentence as follows: we establish in Theorem~\ref{thm:regret_ub} that Algorithm~\ref{alg:learning_alg} achieves a polylogarithmic regret, and in Theorem~\ref{thm:regret_lb} that this is (asymptotically), up to logarithmic terms, the best possible  within a certain class of stable policies.

The nature of the optimization problem that we study, required us to combine techniques from different fields to develop Algorithm~\ref{alg:learning_alg} and prove Theorems~\ref{thm:regret_lb} and \ref{thm:regret_ub}.
Firstly, we were inspired by techniques for \gls{MAB} problems from \emph{machine learning} for the learning aspect.
Secondly, in the regret analysis of Algorithm~\ref{alg:learning_alg}, we had to rely on \emph{queueing theory} to establish the typical behavior of the queue length process under the dynamic routing policy. 
Finally, a key idea was to leverage \emph{combinatorial optimization} to come up with `good' actions for Algorithm~\ref{alg:learning_alg}.

Let us briefly discuss these three facets in more detail:

\emph{
How queueing dynamics complicate decision making and how MAB techniques can be adapted to cope. }\\
Contrary to the classical \gls{MAB} setting \citep{Lai1985}, 
we have to deal with queueing dynamics and stability constraints that complicate the problem.
Specifically, both the decision moments and the set of available routing decisions are subject to the underlying random queueing dynamics:
we can only route a customer if there are a compatible customer and server available. 
Moreover, simply routing each customer to the server with the highest payoff might not result in a stable system in the long run. 

To overcome this, we define episodes and only make a decision on the routing strategy at the start of each episode. 
The estimated reward of the chosen action is updated after each episode based on observed payoff samples. 
This way, similar to the \gls{MAB} setting, the algorithm learns the average reward of each action adaptively. 
To face the exploration--exploitation tradeoff efficiently, we use \gls{UCB} for the payoff parameters \citep{Auer2002a}.

\emph{How queueing theory is used to analyze Algorithm~\ref{alg:learning_alg}.}\\
Each episode, Algorithm~\ref{alg:learning_alg} chooses an action that determines a fixed routing policy for the duration of that episode. 
This means that the queue length process will show different behavior in each episode. 
Worse yet, at the moment the routing policy changes, the queueing system will likely be far from its new equilibrium behavior. 
It then takes some time to adjust to the new environment. 
Still, in order to prove convergence results of Algorithm~\ref{alg:learning_alg}, we must show that sufficient payoff samples of all the different lines are collected. 
To this end, we utilize known results of the stationary behavior and analyze the typical time of convergence of the queue length process to stationarity.

\emph{How combinatorial optimization allows one to identify `good' actions.}\\
The optimal routing problem inherently poses combinatorial challenges.
In particular, the optimal routing problem can be considered as a stochastic variant of the optimal transport problem \citep{Bertsimas1997} with a finite set of possible solutions. 
We formulate an optimal transport \gls{LP} and exploit its structural properties in the construction of our algorithm:
first, we characterize the basic feasible solutions of the \gls{LP} in terms of routing rates.
We then consider the different sets of routing rates as candidate routing policies in an \gls{MAB} setting. 
In the analysis, we exploit the dual of the \gls{LP} to decompose the regret. 

\subsection{Main contributions}
\label{sec:Main-contributions}

Our main contributions can be summarized as follows:

{\bf Regret lower bound.}
We present an asymptotic regret lower bound for a class of routing policies satisfying certain stability constraints in Theorem~\ref{thm:regret_lb}. The stability constraints ensure that the queue backlog does not grow infinitely large. 
In particular, we prove that any policy of this class must suffer $\Omega(\ln(t))$ regret to learn the payoff parameters. 

In an \gls{MAB} setting, a regret lower bound is often described in terms of a \emph{suboptimality gap} for each suboptimal action. The gap measures the instant loss in reward by choosing this action over the optimal action \citep{Burnetas1996,Auer2002a,Lai1985}. 
A straightforward implementation of this approach proves to be difficult since in our model, 
some suboptimal routing decision might be necessary to maintain stability. 
Therefore, it is not trivial to identify the optimal action at each decision moment, nor to quantify the suboptimality gaps. 

Our solution to this problem is as follows: we split the lines into an `optimal' and `suboptimal' set, based on the optimal solution of the optimal transport \gls{LP}, and consequently quantify the suboptimality gaps using the dual. 
\citep{Burnetas2017,Burnetas2012}~use a similar method to obtain a regret lower bound in an \gls{MAB} setting with cost constraints, although their work is limited to programs with only two equality constraints. 
The proof of Theorem~\ref{thm:regret_lb} also uses a change--of--measure argument, a technique that is frequently used in the \gls{MAB} setting~\citep{Lai1985, Burnetas1996, Burnetas1997, Lattimore2020}.

{\bf Adaptive learning \& routing algorithm.}
We present a machine learning algorithm, that utilizes the basic feasible solutions of the optimal transport \gls{LP} as actions in Algorithm~\ref{alg:learning_alg}---a method that, to the best of our knowledge, has not been explored before in this setting.
We consider the related optimal transport \gls{LP}, where the objective function (that includes the payoff parameter  $\theta$) is unknown. 
However, by the properties of the \gls{LP}, the set of basic feasible solutions is finite and the optimal solution is attained at one of these solutions \citep{Bertsimas1997}. 
Hence, we identify the set of basic feasible solutions as actions. 
Each solution uniquely defines a set of routing rates. 
Moreover, the objective function is approximated using \gls{UCB} indices of the payoff parameters. 

The learning is schematically depicted in Figure~\ref{fig:architecture} and works as follows.
At the start of each episode, the algorithm chooses the action with maximal (estimated) reward.
During the episode, customers are routed according to the rates of the chosen action.
In particular, customers are assigned a label corresponding to one of the compatible servers upon arrival (illustrated by colors in Figure~\ref{fig:architecture}). 
Each server serves customers with matching label in \gls{FCFS} order, and service completions generate payoffs. 
If, at the start of an episode, the chosen action differs from the action of the previous episode, the algorithm assigns new labels for all queueing customers sampled according to the routing rates of the new action.

For each line, we maintain a \gls{UCB} index of its payoff parameter. 
At the end of the episode, the payoff samples are used to update the \gls{UCB} indices.
The reward of an action is then computed as the sum over all lines of the \gls{UCB} index weighted by the routing rate of the action.
We assume Bernoulli payoffs for simplicity. 

\begin{figure}[H]
    \centering
    \begin{subfigure}{1\linewidth}
        \centering
        \begin{tikzpicture}[
            node distance=1.5cm and 2cm,
            server/.style={circle, minimum size = .2cm, thick,draw},
                scale=.7]
                
                \foreach \x/\y in {1/black,2/black,3/black}{
                    \node[draw = none] at (-10+1*\x,-4.3) (queue\x) {};
                    \node[above of = queue\x, yshift = 0cm] (labelqueue\x) {};
                    \draw[thick] (queue\x.center) --++(-.2,0) --++(0,.6); 
                    \draw[thick] (queue\x.center) --++(.2,0) --++(0,.6);
                    \node[draw = none, above of = queue\x, yshift = -.8cm] {\scriptsize $ \lambda_\x$};
                    \node[server, below of=queue\x, yshift=.5cm]  (server\x) {};
                    \node at (server\x) {\scriptsize $\mu_\x$};
                }

                \foreach \x in {1,2,3} {
                    \foreach \y in {1,2,3} {
                        \draw[thick] (queue\x.center) -- (server\y.north);
                    }
                }
                \node[draw=none, right of = queue3, yshift=-.1cm, xshift=-.4cm] (arrow1) {\Huge $\Rightarrow$};
            
            \foreach \action/\label in {1/1,2/2,3/3,4/4,6/n} {
                \foreach \i in {1,2,3} {
                    \node[rectangle, draw, minimum size =.27cm, inner sep=0pt] (r\action\i) at (-7+.55*\i+2.6*\action,-4) {};
                    \node[circle, draw, minimum size =.27cm, inner sep=0pt] (c\action\i) at (-7+.55*\i+2.6*\action,-5) {}; 
                }
                \node[above of = r\action2, yshift = -1cm] (labelaction\action) {\small action $\label$};
            }
            \node[draw=none, right of = r43, yshift=-.45cm] (dots) {\huge $\dots$};
            
            \draw[thick] (r11.south) -- (c11.north);
            \draw[thick] (r12.south) -- (c12.north);
            \draw[thick] (r13.south) -- (c13.north);
        
            \draw[thick] (r21.south) -- (c21.north);
            \draw[thick] (r21.south) -- (c22.north);
            \draw[thick] (r22.south) -- (c22.north);
            \draw[thick] (r22.south) -- (c23.north);
            \draw[thick] (r23.south) -- (c23.north);

            \draw[thick] (r31.south) -- (c32.north);
            \draw[thick] (r32.south) -- (c33.north);
            \draw[thick] (r33.south) -- (c33.north);
            \draw[thick] (r33.south) -- (c31.north);

            \draw[thick] (r61.south) -- (c61.north);
            \draw[thick] (r61.south) -- (c62.north);
            \draw[thick] (r62.south) -- (c62.north);
            \draw[thick] (r63.south) -- (c62.north);
            \draw[thick] (r63.south) -- (c63.north);

            \draw[thick] (r41.south) -- (c41.north);
            \draw[thick] (r42.south) -- (c42.north);
            \draw[thick] (r43.south) -- (c42.north);
            \draw[thick] (r43.south) -- (c43.north);

        \end{tikzpicture}
    \caption{Identify actions.}
    \label{fig:architecture_actions}
    \end{subfigure}

    \begin{subfigure}{1\linewidth}
        \centering
        \begin{tikzpicture}
            \definecolor{darkgray}{RGB}{150,150,150}
            \definecolor{lightgray}{RGB}{220,220,220}
            \definecolor{lightblue}{RGB}{172,217,255}
        \begin{scope}[shift={(1,-7)}, scale=.7, transform shape]
            \node[circle, draw, align=center,  minimum size = 1cm] at (-10,-2) (MAB) {};
            \node[below=1.5ex of MAB] {\small \bf MAB}; 
            \draw ($(MAB.south) + (1,-1)$) arc[start angle=0, end angle=180, radius=1cm];

            \foreach \i/\c in {1/black,2/red,3/lightblue} { 
                \node[draw = none] (q_bottom\i) at ($(MAB.south) + (-2+\i,-3.5)$){};
                \draw[thick] (q_bottom\i.center) --++(-.35,0) --++(0,1.2); 
                \draw[thick] (q_bottom\i.center) --++(.35,0) --++(0,1.2); 
                \node[draw, circle, very thick, color=\c, minimum size = .7cm, fill] (c_bottom\i) at ($(MAB.south) + (-2+\i,-4.3)$) {};
            }
            \node[draw = none, below of = c_bottom2] (label_episode_routing) {\small \shortstack{route during \\ episode $k$}};
            \fill[black] ($(q_bottom1.center) + (-.25,.3)$) rectangle ++(.5,-0.2);
            \fill[red] ($(q_bottom1.center) + (-.25,.6)$) rectangle ++(.5,-0.2);
            \fill[black] ($(q_bottom1.center) + (-.25,.9)$) rectangle ++(.5,-0.2);
            \fill[red] ($(q_bottom2.center) + (-.25,.3)$) rectangle ++(.5,-0.2);
            \fill[lightblue] ($(q_bottom2.center) + (-.25,.6)$) rectangle ++(.5,-0.2);
            \fill[lightblue] ($(q_bottom3.center) + (-.25,.3)$) rectangle ++(.5,-0.2);
            \fill[lightblue] ($(q_bottom3.center) + (-.25,.6)$) rectangle ++(.5,-0.2);

            \draw[->, thick] ($(MAB.west) + (-1.3,-.7)$) to [out=180,in=180] node[pos=0.5, left,align=center] {\shortstack{choose \\ action}}  ($(q_bottom1.center) + (-.35,.7) + (-.5,0)$) ;
            \draw[->, thick] ($(q_bottom3.center) + (.35,.7) + (.5,0)$) to [out=0,in=0] node[pos=0.5, right] {\shortstack{update \\ belief}} ($(MAB.east) + (1.3,-.7)$) ;
        \end{scope}
        \begin{scope}[shift={(-1.7,-11.5)}, xscale=.65, scale=.4]
            \draw[ thick, black, ->] (0,0) -- (0,6.5) node[pos=0.9,left,rotate=90,yshift=.5cm,xshift=.15cm,align=center] {\scriptsize \shortstack{queue length \\ customer type 1}};
            \draw[ thick, black, ->] (0,0) -- (35.2,0);
            \draw[ thick] (0,0) --++ (-.7,0);
            \draw[ thick] (0,0) --++ (0,-.4);
            \draw[ thick] (13,-.7) -- (13,6.5);
            \draw[ thick] (30,-.7) -- (30,6.5);
            \draw[fill=green,opacity=0.25] (0,0) rectangle ++(5,6.5);
            \draw[fill=yellow,opacity=0.25] (5,0) rectangle ++(8,6.5);
            \draw[fill=green,opacity=0.25] (13,0) rectangle ++(6,6.5);
            \draw[fill=yellow,opacity=0.25] (19,0) rectangle ++(11,6.5);
            \draw[fill=green,opacity=0.25] (30,0) rectangle ++(5.2,6.5);
            \node[draw=none]  at (6.5,-1) (labelepi1) {\scriptsize episode $k$};
            \node[draw=none]  at (21.5,-1) (labelepi2) {\scriptsize episode $k+1$};
            \node[draw=none]  at (35,-1) (labeltime) {\scriptsize time};
            \draw[ thick] (0,6) -- (0.2,6) -- (0.2,5.5) -- (0.4,5.5) -- (0.4,6) -- (0.6,6) -- (0.6,5.5) -- (0.8,5.5) -- (0.8,5) -- (1,5) -- (1,4.5) -- (1.2,4.5) -- (1.2,4) -- (1.8,4) -- (1.8,4.5) -- (2,4.5) -- (2,4) -- (2.4,4) -- (2.4,3.5) -- (2.6,3.5) -- (2.6,3) -- (3,3) -- (3,2.5) -- (3.2,2.5) -- (3.2,2) -- (3.6,2) -- (3.6,1.5) -- (3.8,1.5) -- (3.8,2) -- (4.2,2) -- (4.2,1.5) -- (4.6,1.5) -- (4.6,1) --  (5,1);
            \draw[ thick] (5,1) --++ (0,.5) --++ (.4,0) --++ (0,-.5) --++ (.4,0) --++ (0,.5) --++ (.4,0) --++ (0,.5) --++ (.6,0) --++ (0,-.5) --++ (.4,0) --++ (0,-.5) --++ (.2,0) --++ (0,-.5) --++ (.4,0) --++ (0,-.5) --++ (.4,0) --++ (0,.5) --++ (.2,0) --++ (0,-.5)  --++ (.4,0) --++ (.4,0) --++ (0,.5) --++ (.6,0) --++ (0,.5) --++ (.4,0) --++ (.2,0)  --++ (.4,0) --++ (0,-.5) --++ (.2,0) --++ (0,-.5) --++ (.4,0)--++ (0,.5) --++ (.2,0) --++ (0,.5) --++ (.6,0);
            \draw[ thick] (12,1) --++ (.2,0) --++ (.4,0) --++ (0,.5) --++ (.2,0) --++ (0,-.5) --++ (.4,0) --++ (0,.5) --++ (.2,0) --++ (0,.5) --++ (.6,0) --++ (0,.5) --++ (.2,0) --++ (0,.5) --++ (.2,0) --++ (0,.5) --++ (.4,0) --++ (0,-.5) --++ (.2,0) --++ (0,.5) --++ (.4,0) --++ (0,.5) --++ (.6,0) --++ (0,.5) --++ (.2,0) --++ (0,-.5) --++ (.4,0) --++ (0,.5) --++ (.2,0) --++ (0,.5) --++ (.2,0) --++ (0,.5) --++ (.6,0)--++ (0,-.5) --++ (.6,0) --++ (0,-.5) --++ (.4,0) --++ (0,-.5) --++ (.2,0) --++ (0,.5) --++ (.6,0) --++ (0,.5) --++ (.2,0) --++ (0,-.5) --++ (.4,0) --++ (0,.5) --++ (.2,0) --++ (0,.5) --++ (.6,0) --++ (0,.5) --++ (.4,0) --++ (0,-.5) --++ (.2,0) --++ (0,-.5) --++ (.6,0) --++ (0,-.5) --++ (.6,0) --++ (0,-.5) --++ (.2,0) --++ (0,.5) --++ (.4,0) --++ (0,-.5) --++ (.4,0) --++ (0,.5) --++ (.2,0) --++ (0,.5) --++ (.4,0) --++ (0,.5) --++ (.6,0) --++ (.2,0) --++ (0,-.5) --++ (.2,0) --++ (0,-.5) --++ (.4,0) --++ (0,.5) --++ (.4,0) --++ (0,.5) --++ (.6,0) --++ (0,.5) --++ (.4,0) --++ (0,-.5) --++ (.4,0) --++ (0,.5) --++ (.4,0) --++ (0,.5) --++ (.4,0) --++ (0,-.5) --++ (.2,0) --++ (0,.5) --++ (.6,0) --++ (0,-.5) --++ (.4,0) --++ (0,-.5) --++ (.4,0) --++ (0,.5) --++ (.2,0);
            \draw[ thick] (30,6) --++ (.2,0) --++ (0,-.5) --++ (.4,0) --++ (0,-.5) --++ (.2,0) --++ (0,.5) --++ (.2,0) --++ (0,-.5) --++ (.4,0)  --++ (0,-.5) --++ (.2,0) --++ (0,-.5) --++ (.6,0) --++ (0,.5)  --++ (.4,0) --++ (0,-.5) --++ (.2,0) --++ (0,.5) --++ (.2,0) --++ (0,-.5) --++ (.2,0) --++ (0,-.5)  --++ (.4,0)  --++ (0,.5)  --++ (.6,0) --++ (0,.5)  --++ (.2,0)  --++ (0,-.5) --++ (.4,0) --++ (0,-.5)  --++ (.2,0)  --++ (0,.5) --++ (.2,0);
        \end{scope}
        \end{tikzpicture}
    \caption{Routing dynamics and possible queue length behavior.}
    \label{fig:architecture_routing}
    \end{subfigure}

    \caption{Schematic overview of Algorithm~\ref{alg:learning_alg}.}
    \label{fig:architecture}
\end{figure}
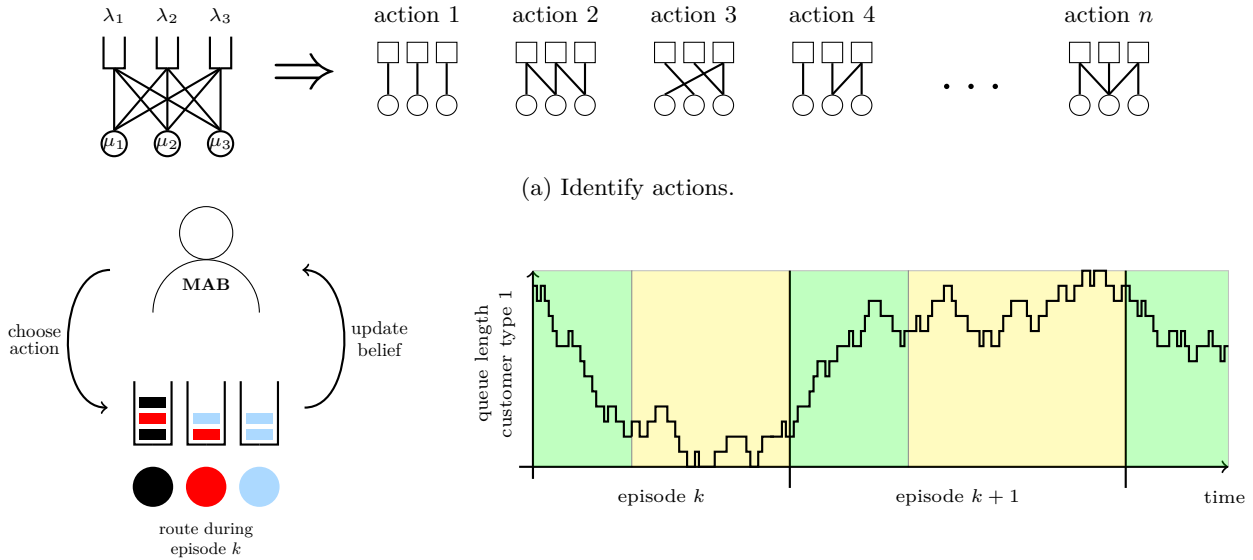

{\bf Regret upper bound.}
We show that the asymptotic regret of our policy is $\calO(\ln^{2\beta}(t))$ for any $\beta > 1$ in Theorem~\ref{thm:regret_ub}. 
This regret is only slightly worse than the $\calO(\ln(t))$ regret of the benchmark UCB policy~\citep{Auer2002a} in the classical \gls{MAB} setting, even though our learning problem is more challenging.
The regret analysis relies on concentration inequalities that bound the probability that the empirical average of payoff samples deviates far from its mean. 
However, it differs from the standard \gls{MAB} techniques since the number of routing decision and therefore the number of samples is subject to queueing dynamics. 
We deal with this issue by splitting each episode into a transient and a stationary phase and analyzing the convergence time of the queue length process to its stationary measure.
We let the episodes grow in length such that the probability that the queue length process does not reach stationarity decreases sufficiently fast. 

{\bf Numerical experiments.}
Lastly, we analyze Algorithm~\ref{alg:learning_alg} and its performance in several numerical experiments. 
We find that in our experiments, the average reward of Algorithm~\ref{alg:learning_alg} converges quickly to the optimal reward and chooses the optimal action most of the time, even in a larger system with many different actions. 
The algorithm outperforms a benchmark greedy policy that myopically optimizes the instantaneous payoff without considering long-term effects. 
Even though the theoretical analysis of Algorithm~\ref{alg:learning_alg} is for a static setting, we test its robustness against the change of a parameter value and find that the algorithm adapts adequately. 
This highlights the potential applicability of our algorithm beyond static environments. 

\subsection{Related literature}
\label{sec:Related-literature}
The majority of literature on optimal routing in skill-based queues does not account for customer--server dependent payoffs, and focuses on waiting time minimization~\citep{Koole2002,Gurvich2010,Visschers2012,Adan2012,Adan2014,Yang2022,Krishnasamy2018,Lee2021,Choudhury2021,Zhong2022}.
In this scenario, the $c\mu$ (or $c\mu/\theta$ in case of abandonments) routing policy that prioritizes customer types based on a ranking depending on the holding costs, is known to be asymptotically optimal \citep{Zhong2022}.
Therefore, an intuitive approach for an adaptive learning policy is to route according to a variant of the $c\mu$ policy where the unknown parameters are replaced by empirical estimators. 
In our case however, the objective in~\eqref{eq:intro_reward} is difficult to evaluate in full generality, since the departure process depends on the availability of servers and the state of the queues which are random and fairly intractable. 
Even for simple routing policies like \gls{FCFS} \gls{ALIS}, the expected number of type-$i$ departures at server $j$ is, to the best of our knowledge, unknown for finite time.
Only a select few convergence results are known for specific queueing systems \citep{Adan2012,Adan2014}.

On top of the classical exploration--exploitation tradeoff, learning in queueing systems requires one to deal with limited capacity and the interplay between queueing and learning (queueing degrades learning and \emph{vice versa}). 
Some works decouple learning and queueing by splitting the time horizon into distinct exploration and exploitation phases \citep{Johari2021,Zhong2022,Jia2022}, while others directly implement a reinforcement learning algorithm \citep{Liu2019,Choudhury2021}, or apply Bayesian inference \citep{Shah2020,Bimpikis2019}. 
Our Algorithm~\ref{alg:learning_alg}, however, is an integrated learning and routing policy. 

The focus in our work lies on online payoff maximization in skill-based queues. 
Payoff maximization problems with unknown utility functions are well studied in literature \citep{Tan2012,Tan2020,Vera2021,Agarwal2013,Agrawal2014,Yu2017}.
In the setting of queueing systems, there exists a variety of algorithms based on different approaches. 
For example, \cite{Sun2023,Hsu2022,Kim2021a} propose utility guided algorithms based on a static \gls{LP}. This method is an extension of the Lyapunov drift plus penalty reward, where the Lyapunov drift assures stability and the penalty is used for payoff maximization.
\cite{Sun2023} obtain moment bounds for the maximal queue length in the system and an instance independent regret upper bound of order $\calO(\sqrt{t\ln(t)})$.
However, the algorithm suffers a linear loss in reward due to stability constraints.
In our work, we instead split the regret into a queueing and a learning component. 

Other algorithms for online payoff maximization in queueing systems are considered by \cite{Liu2020,Steiger2022}, who use Lyapunov drift analysis.
A different method is used by \cite{Fu2022}, who present a routing algorithm that combines the Join--the--Shortest--Queue (JSQ) routing scheme with a confidence ball learning algorithm (introduced by \cite{Dani2008}) in the setting of optimization under bandit feedback.
It is shown that, given a fixed horizon, the threshold parameter $K$ in the JSQ-$K$ algorithm can be tuned in such a way that polylogarithmic regret can be attained, but no guarantees are provided on the convergence of the routing rates to their the optimal values. 
In our work, we consider an asymptotic result and therefore our algorithm does not require the knowledge of the time horizon. 
Lastly, in \citep{Tan2012}, a primal--dual method is used for reward maximization in an online advertisement setting.
An algorithm is presented  where the current queue length is deducted from the estimated server--customer payoffs to penalize congestion and near optimality with respect to an oracle reward is proven.

The analysis of our learning algorithm relies on establishing convergence properties of an episodic queue length process to a stationary probability measure. 
Similar convergence properties have been used in~\citep{Jia2022,Chen2019,Besbes2015}.
We let the episodes grow in duration so that convergence is achieved with increasing probability. 
\cite{Sanders2016,Comte2023,Jiang2010,Liu2019} use similar concepts in different settings of optimal control in queueing networks.

The methodology of our learning algorithm bears most resemblance with the achievable region approach in \citep{Bertsimas1995,Dacre1999}, where the goal is to solve an optimization problem with an unknown utility function using the feasible region spanned by a set of constraints. 
\cite{Bertsimas1995,Dacre1999} consider the optimal routing problem in a static setting with known parameters, while our policy integrates learning and optimization in an online fashion. \\

\subsection{Outline}
The remainder of this work is organized as follows. 
In Section~\ref{sec:problem_setting}, we  discuss the model, optimal routing problem, and regret formulation.
Next, in Section~\ref{sec:regret}, we study the regret of routing policies and present an asymptotic regret lower bound for a class of routing policies. 
In Section~\ref{sec:learning_alg}, we present the adaptive learning algorithm and show that its asymptotic regret is of polylogarithmic order. 
Lastly, in Section~\ref{sec:simulation}, we present a numerical implementation of the algorithm.

\section{Model description}
\label{sec:problem_setting}
In this section we describe the queueing system and routing policies in more detail. 
We analyze a related deterministic optimal transport \gls{LP} and discuss its properties. 
Lastly, we present a definition of regret of a routing policy, which is based on the optimal transport \gls{LP}.

\subsection{Arrival and service process}
\label{sec:arr_serv_process}
We consider a continuous-time queueing system with a fixed set of customer types $\calI = \{1,\dots,I\}$, servers $\calJ = \{1,\dots,J\}$, and a set of lines connecting compatible customer types and servers $\calL = \{(ij): i\in\calI,j\in\calJ\}$.
Here, $I,J\in\N_{> 0}$. 
Let $L = |\calL|\in\N_{> 0}$ denote the number of compatible lines.
To exclude trivial cases, we assume that $L > I+J-1$. 
We denote the set of servers that are compatible with customer type $i$ by $\calS_i$, and similarly, we denote the set of customer types that are compatible with server $j$ by $\calC_j$, i.e.,
\begin{align}
    \calS_i &= \{j\in\calJ: (ij)\in\calL\}, 
    \label{eq:def_set_servers_customer_i}
    \\
    \calC_j &= \{i\in\calI: (ij)\in\calL\}.
    \label{eq:def_set_customers_server_j}
\end{align}
An example of such a queueing network is given in Figure~\ref{fig:qsys_example_3}.
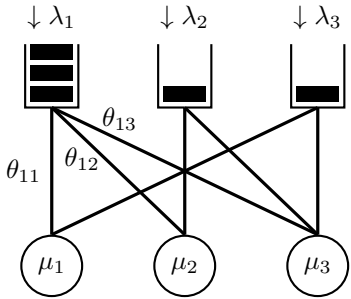
\begin{figure}[H]
    \begin{center}
        \begin{tikzpicture}[
            server/.style={circle, minimum size = .8cm, thick,draw},
            scale=.7
            ]
            \foreach \x/\y in {1/black,2/black,3/black}{ 
                \node[draw = none] at (2.5*\x,0) (queue\x) {};
                \node[above of = queue\x, yshift = -.5cm] (labelqueue\x) {};
                \draw[thick] (queue\x.center) --++(-.5,0) --++(0,1.2); 
                \draw[thick] (queue\x.center) --++(.5,0) --++(0,1.2); 
                \node[draw = none, above of = queue\x, yshift = .2cm] {$\downarrow \lambda_\x$};
            }
            \fill[black] ($(queue1.center) + (-.4,.4)$) rectangle ++(.8,-0.3);
            \fill[black] ($(queue1.center) + (-.4,.8)$) rectangle ++(.8,-0.3);
            \fill[black] ($(queue1.center) + (-.4,1.2)$) rectangle ++(.8,-0.3);

            \fill[black] ($(queue2.center) + (-.4,.4)$) rectangle ++(.8,-0.3);

            \fill[black] ($(queue3.center) + (-.4,.4)$) rectangle ++(.8,-0.3);

            \foreach \x in {1,2,3}{ 
                \node[server] at (2.5*\x,-3) (server\x) {};
                \node at (server\x) {$\mu_\x$};
            }

            \draw[very thick, black] (queue1.center) -- (server1.north) node[pos=0.5, left]
            {$\theta_{11}$};
            \draw[very thick, black] (queue1.center) -- (server2.north) node[pos=0.4, left, xshift=.07cm]
            {$\theta_{12}$};
            \draw[very thick, black] (queue1.center) -- (server3.north) node[pos=0.15, right, yshift = .1cm]
            {$\theta_{13}$};

            \draw[very thick, black] (queue2.center) -- (server2.north);
            \draw[very thick, black] (queue2.center) -- (server3.north);

            \draw[very thick, black] (queue3.center) -- (server1.north);
            \draw[very thick, black] (queue3.center) -- (server3.north);
        \end{tikzpicture}
    \end{center}
    \caption{A queueing model with $I=3$ types of customers and $J = 3$ servers with compatibility lines. 
    Here, e.g., $\calS_1 = \{1,2,3\}$ and $\calC_1 = \{1,3\}$. }
    \label{fig:qsys_example_3}
\end{figure}

For $i\in\calI$, type-$i$ customers are assumed to arrive according to a Poisson process with rate $\lambda_i > 0$.
They wait in queue $i$ until they are allocated to a compatible (and idle) server.
We assume that there are no customers in the system at time $t=0$. 
For $j\in\calJ$, service times of customers at server $j$ are assumed exponentially distributed with rate $\mu_j > 0$.
We also assume that service times are independent between servers and across customers. 
Note that the service time distribution only depends on the server and not the customer type. 
For convenience, we write $\lambda = (\lambda_1,\dots,\lambda_I)$ and similarly $\mu = (\mu_1,\dots,\mu_J$).
For stability, we assume that for any subset of customer types $\calA\subseteq \calI$,
\begin{align}\label{eq:ass_stab}
     \sum_{i \in \calA} \lambda_i < \sum_{j\in\calS_{\calA}} \mu_j.
\end{align}
Here, $\calS_{\calA} = \bigcup_{i\in\calA} \calS_i$ denotes the set of servers that are compatible with any of the customer types $i\in\calA$. 
By \mbox{\cite[Theorem 2.1]{Adan2014}}, there exists a routing policy discipline such that the joint queue length process under that discipline is ergodic.
\\

Upon the service completion of a type-$i$ customer at server $j$, a random payoff $Y_{ij}$ is obtained, sampled from a payoff distribution $P_{\theta_{ij}}$ with support $\calX \subseteq \N_{\geq 0}$ and unknown and finite mean $\theta_{ij} \in \R_{\geq 0}$.
Note that while all lines share the same payoff distribution, the mean payoff differs per line.
We denote the vector of unknown mean payoffs as  $\theta=(\theta_{ij})_{(ij)\in\calL} \in \R_{\geq 0}^L$.
As quantification of the difference between the payoff distribution for two lines $(ij)$ and $(k\ell)\in\calL$, we use the Kullback--Leibler (KL) divergence, given by
\begin{align}\label{eq:def_KL}
    I(\theta_{ij},\theta_{k\ell}) &:= \sum_{x\in\calX} P_{\theta_{ij}}(x) \ln\Bigl(\frac{P_{\theta_{ij}(x)}}{P_{\theta_{k\ell}}(x)}\Bigr).
\end{align}
We assume that the payoff distribution is such that
\begin{align}\label{eq:ass_KL_finite}
    0 < I(\theta_{ij}, \theta_{k\ell}) < \infty \ \ \textrm{when} \ \ \theta_{ij} < \theta_{k\ell}.
\end{align}

\subsection{Routing policies}
A \emph{routing policy} $\pi$ determines which customer is served by which server at what time.
We denote the probability measure and expectation with respect to routing policy $\pi$ and payoff vector $\theta\in\R_{\geq 0}^L$ by $\P_\pi^\theta$ and $\E_\pi^\theta$, respectively.
Moreover, we denote the total number of service completions, i.e., departures, of type-$i$ customers at server $j$ up to time  $t \geq 0$ by $D_{ij}(t)$.

We consider a class of routing policies that satisfy for $\eps \geq 0$ and for any payoff vector $\theta\in\R_{\geq 0}$ the following conditions: 
\begin{align}
    \lim_{t\to\infty} t\lambda_i - \sum_{j\in\calS_i} \E_\pi^\theta(D_{ij}(t)) &< \infty, \ \ \ \forall i\in\calI,
    \label{eq:def_stability}
    \\
    t(\mu_j-\eps) - \sum_{i\in\calC_j} \E_\pi^\theta(D_{ij}(t)) &\geq 0, \ \ \ \forall j\in\calJ, \ \ \ \forall t\geq 0.
    \label{eq:def_eps_restrictive}
\end{align}
Constraint~\eqref{eq:def_stability} is a notion of stability: we require that the difference between the expected number of arrivals and  departures, i.e., the expected queue length, of any customer type does not diverge.
Note that the set of policies satisfying~\eqref{eq:def_stability} is nonempty by assumption~\eqref{eq:ass_stab}.
Constraint~\eqref{eq:def_eps_restrictive} requires that the expected workload of server $j$ is at most $(\mu_j-\eps)/\mu_j$. 
If $\eps>0$, \eqref{eq:def_eps_restrictive} prevents critically loaded servers. 

We consider the reward of a policy $\pi$ with respect to payoff vector $\theta \in \R_{\geq 0}^L$ up to time $t\geq 0$ as the total average payoff, i.e.,
\begin{align}
    \sum_{(ij)\in\calL} \theta_{ij} \E_\pi^\theta(D_{ij}(t)).
    \label{eq:def_reward_policy}
\end{align}
As discussed previously, \eqref{eq:def_reward_policy} can be difficult to compute in full generality. 
Instead, we consider a static version of the optimal control problem which is discussed next.

\subsection{Optimal transport LP}
For a payoff vector $\theta\in\R_{\geq 0}^L$ and $\eps \geq 0$, let
\begin{subequations}
    \begin{align}
        \LP{\theta,\eps}: \ \ \max_x \ \ &\sum_{(ij)\in\calL} \theta_{ij} x_{ij}, \\
        \textrm{s.t.} \ \ &\sum_{j\in\calS_i} x_{ij} = \lambda_i, \ \ \ \forall i\in\calI, \label{eq:LP_eps_constr_x}\\
        &\sum_{i\in\calC_j} x_{ij} \leq \mu_j - \eps, \ \ \ \forall j\in\calJ, \label{eq:LP_eps_constr_mu} \\
        &x_{ij} \geq 0, \ \ \ \forall (ij)\in\calL. \label{eq:LP_eps_constr_nonneg} 
    \end{align}
    \label{eq:LP_eps}
\end{subequations}
\noindent The objective of \LP{\theta,\eps} is to maximize the average payoff per time unit. Each optimization variable $x_{ij}\in[0,\lambda_i]$ can be interpreted as the long-term  rate of type-$i$ customers that is routed to server $j$ per time unit.
Constraint~\eqref{eq:LP_eps_constr_x} requires that all customers are routed to some server and constraint~\eqref{eq:LP_eps_constr_mu} states that the capacity of each server must not be exceeded.
Here, $\eps \geq 0$ is a fixed amount of slack at each server.
The feasibility of \LP{\theta,\eps} depends on the value of $\eps$, which will be discussed later. 
\LP{\theta,\eps} can be regarded as a variant of the assignment problem initially presented in \citep{Shapley1971}, where the resources are divisible.
Similar \gls{LP} formulations of the static planning problem in the context of reward maximization in queueing models are used in \citep{Hsu2022, Fu2021, Fu2022, Tan2012, Tan2020, Vera2021}. 

We briefly discuss some fundamental properties of \gls{LP}s. 
Consider an \gls{LP} in standard form: $\max_x c' x$ s.t.\ $Ax = b$, $x \geq 0$ with $A \in \R^{m\times n}$.
For $B\subseteq \{1,\dots,n\}$, we denote by $\mathbf{B} = [A_{B(1)}, \dots,A_{B(m)}]\in\R^{n\times n}$  the matrix formed by the columns $B(1),\dots,B(m)$.
We call $B\subseteq \{1,\dots,n\}$ a \emph{basis} if $|B| = m$ and $\mathbf{B}$ is invertible (or equivalently, has full rank); see \citep{Bertsimas1997}.
In this case, $x_B=\mathbf{B}^{-1}b\in\R^n$ is a \emph{basic solution}. 
A basis (or equivalently a basic solution) is \emph{feasible} if $\mathbf{B}^{-1} b \geq 0$, and \emph{nondegenerate} if $(\mathbf{B}^{-1} b)_k \neq 0$ for all $k\in B$.
Lastly, recall that if an \gls{LP} has an optimal solution, then it has an optimal basic feasible solution \cite[Theorem 2.7]{Bertsimas1997}. 

In the remainder of this work, we assume nondegeneracy, i.e., we assume that all basic feasible solutions of \LP{\theta,\eps} and its dual are nondegenerate. 
We note that this assumption implies that each basic feasible solution of the primal (dual) implies a unique basic feasible solution of the dual (primal) (see e.g.\ \cite[Exercise 4.12]{Bertsimas1997}).

\subsubsection{Basic feasible solutions}
\label{sec:BFS}
We will now characterize the basic feasible solutions of \LP{\theta,\eps} in \eqref{eq:LP_eps}.
Since \LP{\theta,\eps} has a finite number of linear inequality constraints, the set of basic feasible solutions is finite~\cite[Corollary 2.1]{Bertsimas1997}. 

It is a known result that a basic feasible solution of \LP{\theta,\eps} induces a spanning forest on the bipartite graph $(\calI\cup\calJ,\calL)$ \mbox{\cite[Proposition 2]{Fu2022}}.
We extend the result of \citep{Fu2022} by providing the explicit form of any basis (see Lemma~\ref{lem:LP_basis} below) and any basic solution (see Lemma~\ref{lem:basic_solutions} below). The proofs are presented in Appendix~\ref{app:lem_LP_basis} and Appendix~\ref{app:lem_basic_solutions}, respectively. 

The introduction of Lemma~\ref{lem:LP_basis} and~\ref{lem:basic_solutions} do require some more notation. 
For $\scrL\subseteq\calL$ and $\scrJ\subseteq\calJ$ we denote by $\calG(\scrL,\scrJ)$ the subgraph of $(\calI\cup\calJ,\calL)$ induced by $\scrL$ and $\scrJ$.
We say that $\calG(\scrL,\scrJ)$ is a \emph{spanning forest} of $(\calI\cup\calJ,\calL)$ if 
(i) it is a union of trees, i.e., it contains no cycles,
(ii) each $v\in\calI\cup\calJ$ is contained in $\calG(\scrL,\scrJ)$, 
and (iii) each tree of $\calG(\scrL,\scrJ)$ contains a unique root node $j\in\scrJ$. 
In such a tree, parent and child nodes of servers are customer types and \emph{vice versa}. 
We denote the subtree rooted in $v$, including $v$ itself, by $\subtree(v)$.
We let
$\sfC(\subtree(v))$ denote the set of customer types contained in the subtree rooted in $v$, i.e., 
$\sfC(\subtree(v)) = \subtree(v)\cap\calI$. 
Similarly, $\sfS(\subtree(v)) = \subtree(v)\cap\calJ$. 
An example is shown in Figure~\ref{fig:example_SF}.

\begin{figure}[H]
    \centering
    \begin{subfigure}{0.4\linewidth}
        \centering
        \begin{tikzpicture}[
            server/.style={circle, minimum size = .6cm, thick,draw},
            q/.style={rectangle, minimum size = .6cm, thick,draw},
            scale=.7
            ]
            \foreach \x in {1,2,3}{ 
                \node[q] at (2*\x,0) (queue\x) {\x};
                \node[server] at (2*\x,-3) (server\x) {};
                \node at (server\x) {\x};
            }
            \draw[very thick, black] (queue1.south) -- (server1.north);
            \draw[very thick, black] (queue1.south) -- (server2.north);
            \draw[very thick, black] (queue1.south) -- (server3.north);
            \draw[very thick, black] (queue2.south) -- (server2.north);
            \draw[very thick, black] (queue2.south) -- (server3.north);
            \draw[very thick, black] (queue3.south) -- (server1.north);
            \draw[very thick, black] (queue3.south) -- (server3.north);
        \end{tikzpicture}
        \caption{Bipartite graph $(\calI\cup\calJ,\calJ)$.}
        \label{}
    \end{subfigure}
    \begin{subfigure}{0.4\linewidth}
        \centering
        \begin{tikzpicture}[
            scale = .7,
                server/.style={circle, minimum size = .6cm, thick,draw},
                q/.style={rectangle, minimum size = .6cm, thick,draw}
                ]
                \node[server] at (0,0) (server1) {};
                \node at (server1) {$1$};
                \node[q, below of = server1, yshift=0cm] (queue1) {1};
                \node[server, below of = queue1, yshift=0cm] (server3) {};
                \node[q, below of = server3, xshift = -.7cm, yshift=0cm] (queue2) {2};
                \node[q, below of = server3, xshift = .7cm, yshift=0cm] (queue3) {3};
                \node at (server3) {$3$};
                \node[server] at (2,0) (server2) {};
                \node at (server2) {$2$};
                \draw[thick] (server1.south) -- (queue1.north);
                \draw[thick] (queue1.south) -- (server3.north);
                \draw[thick] (server3.south) -- (queue2.north);
                \draw[thick] (server3.south) -- (queue3.north);
        \end{tikzpicture}
        \caption{The induced subgraph $\calG(\scrL,\scrJ$).}
        \label{}
    \end{subfigure}
    \caption{
    Consider the queueing system in Figure~\ref{fig:qsys_example_3} and let $\scrL = \{(11),(13),(23),(33)\}$ and $\scrJ = \{1,2\}$.
    $\calG(\scrL,\scrJ)$ is then a spanning forest of $(\calI\cup\calJ,\calL)$ consisting of two trees.
    }
    \label{fig:example_SF}
\end{figure}

We are now in position to state Lemma~\ref{lem:LP_basis} and~\ref{lem:basic_solutions}:
\begin{lemma}
    \label{lem:LP_basis}
    Let $\eps \geq 0$,  $\scrL\subseteq\calL$, and $\scrJ\subseteq\calJ$.
    $B=\scrL\cup\scrJ$ is a basis of \LP{\theta,\eps} if and only if $\calG(\scrL,\scrJ)$ is a spanning forest of $(\calI\cup\calJ,\calL)$. 
\end{lemma}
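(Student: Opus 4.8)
The plan is to first cast $\LP{\theta,\eps}$ into the standard form discussed above by adjoining a slack variable $s_j \ge 0$ to each server constraint~\eqref{eq:LP_eps_constr_mu}, so that it reads $\sum_{i\in\calC_j} x_{ij} + s_j = \mu_j - \eps$. The resulting constraint matrix $A$ has $m = I+J$ rows (one per customer, one per server) and columns indexed by $\calL\cup\calJ$: the column of a line $(ij)$ is $e_i + e_{I+j}$ (indexing customer rows by $1,\dots,I$ and server rows by $I+1,\dots,I+J$), while the column of slack $s_j$ is $e_{I+j}$. Identifying $B = \scrL\cup\scrJ$ with the line columns $(ij)\in\scrL$ together with the slack columns $j\in\scrJ$, the whole statement reduces to the linear-algebraic claim that the square matrix $\mathbf{B}$ is invertible if and only if $\calG(\scrL,\scrJ)$ is a spanning forest. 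I would record two standard incidence facts up front: (i) negating every server row turns each line column $e_i+e_{I+j}$ into the signed column $e_i-e_{I+j}$, and since row sign flips preserve column independence, a set of line columns is independent exactly when the corresponding edges are acyclic; and (ii) for a tree on $k$ vertices, deleting any one vertex row from its incidence matrix leaves a square invertible matrix (the only left-null vector of a connected graph's signed incidence matrix is constant, and it is killed by the deletion).

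For the direction ``spanning forest $\Rightarrow$ basis,'' the edge count of a forest on $I+J$ vertices with $|\scrJ|$ trees gives $|\scrL| = I+J-|\scrJ|$, hence $|B| = m$ and $\mathbf{B}$ is square. Reordering rows and columns to put the $\scrJ$ rows and the slack columns last exposes a block form $\bigl(\begin{smallmatrix} \tilde{M} & 0 \\ * & \mathrm{Id}_{|\scrJ|}\end{smallmatrix}\bigr)$, where $\tilde{M}$ is the incidence matrix of $\calG(\scrL,\scrJ)$ with all $\scrJ$ rows deleted; thus $\det\mathbf{B} = \pm\det\tilde{M}$. Because each tree meets $\scrJ$ in exactly its root, deleting the $\scrJ$ rows removes precisely one vertex per tree, so $\tilde{M}$ is block diagonal over the trees with every block invertible by fact (ii); therefore $\det\tilde{M}\neq 0$ and $B$ is a basis.

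For the converse ``basis $\Rightarrow$ spanning forest,'' the definition of a basis gives $|\scrL|+|\scrJ| = I+J$. Acyclicity of $\scrL$ is immediate from fact (i): a cycle would yield a signed combination of line columns equal to zero, contradicting invertibility. Each customer and each server must appear in $\calG(\scrL,\scrJ)$, for otherwise the corresponding row of $\mathbf{B}$ would vanish; the forest is therefore spanning, and the edge count forces exactly $|\scrJ|$ components. It remains to show that each tree contains at least one $\scrJ$ server: if a tree $T$ met no $\scrJ$ server, then the row vector $w$ equal to $+1$ on the customers of $T$, $-1$ on the servers of $T$, and $0$ elsewhere would satisfy $w\mathbf{B} = 0$---the line columns inside $T$ cancel, all other columns are disjoint from $T$, and the slack columns vanish on $w$ since $T$ avoids $\scrJ$---again contradicting invertibility. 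With $|\scrJ|$ trees and $|\scrJ|$ servers in $\scrJ$, the pigeonhole principle then places exactly one $\scrJ$ server in each tree, which we take as its root, giving property~(iii).

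The step I expect to be most delicate is the joint handling of the slack columns and the rooting condition~(iii): one must simultaneously control the edge/vertex count, the acyclicity of the line part, and the one-root-per-tree distribution of $\scrJ$. The null-vector argument ruling out a $\scrJ$-free tree is the crux, and getting the bipartite signs right---so that the line columns of a tree cancel while the slack columns drop out of $w\mathbf{B}$---is the bookkeeping that makes it work.
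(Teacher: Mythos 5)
Your proof is correct, but it takes a genuinely different route through the two core steps. For the direction ``spanning forest $\Rightarrow$ basis,'' you reorder rows and columns to expose a block-triangular structure $\bigl(\begin{smallmatrix}\tilde{M} & 0 \\ \ast & \mathrm{Id}\end{smallmatrix}\bigr)$, reduce to $\det\mathbf{B} = \pm\det\tilde{M}$, and then observe that $\tilde{M}$ is block diagonal over the trees with each block a root-deleted tree incidence matrix, hence invertible; the paper instead invokes Harary's rank formula (rank of the incidence matrix of a graph with $n$ nodes and $K$ components equals $n-K$) to count $\mathrm{rank}(\mathbf{C}_{\scrL}) = I+J-K$ and then argues joint independence of the line and slack columns via a leaf/degree-one argument. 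For the converse, both proofs get acyclicity from cycle-induced column dependence and spanning from the no-zero-row observation, but the paper again uses Harary's theorem to force the number of trees to equal $|\scrJ|$ and then rules out \emph{two} $\scrJ$-servers in one tree via a dependence along the connecting path, whereas you count components by the forest edge--vertex relation and rule out a $\scrJ$-\emph{free} tree with an explicit left-null certificate ($+1$ on the tree's customers, $-1$ on its servers), finishing by pigeonhole. The two exclusion arguments are complementary faces of the same counting, and both are sound. What your version buys is self-containedness: it avoids citing the graph-theoretic rank theorem entirely, and the determinant factorization plus the null-vector certificate make both directions verifiable by direct computation; the paper's version is more compact once Harary's theorem is taken off the shelf. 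One small bookkeeping point you handled correctly but should keep explicit: your fact on root-deleted tree incidence matrices is stated for the \emph{signed} incidence matrix, so in the bipartite unsigned setting you must first negate the server rows (your fact (i)) before applying it -- row sign flips preserve invertibility, so the composition is fine.
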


\begin{lemma}
    \label{lem:basic_solutions}
    Let $\eps \geq 0$,  $\scrL\subseteq\calL$, and $\scrJ\subseteq\calJ$.
    If $B=\scrL\cup\scrJ$ is a basis of \LP{\theta,\eps}, then its corresponding basic solution $x\in\R^L$ satisfies
    \begin{align}
        x_{ij} =
        \begin{cases}
            \sum\limits_{k\in\sfC(\subtree(i))} \lambda_k - \sum\limits_{\ell\in\sfS(\subtree(i))} (\mu_\ell-\eps), 
            &\textrm{if} \ i \ \textrm{is a child of} \ j \ \textrm{in} \ \calG(\scrL,\scrJ),
            \\
            \sum\limits_{\ell\in\sfS(\subtree(j))} (\mu_\ell-\eps) - \sum\limits_{k\in\sfC(\subtree(j))} \lambda_k, 
            &\textrm{if} \ i \ \textrm{is the parent of} \ j \ \textrm{in} \ \calG(\scrL,\scrJ), 
            \\
            0 
            &\textrm{if} \ (ij)\in\calL\setminus \scrL.
        \end{cases}
        \label{eq:basic_sol}
    \end{align}
\end{lemma}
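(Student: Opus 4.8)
The plan is to pass to the standard form of \LP{\theta,\eps}, adding a slack variable $s_j\geq 0$ to each capacity constraint~\eqref{eq:LP_eps_constr_mu}, so that the decision variables are the $x_{ij}$ for $(ij)\in\calL$ together with the slacks $(s_j)_{j\in\calJ}$, with $I+J$ equality constraints in total. Consistent with Lemma~\ref{lem:LP_basis}, I read the basis $B=\scrL\cup\scrJ$ as the line-variables indexed by $\scrL$ together with the slack-variables indexed by the forest roots $\scrJ$; every other variable is then non-basic. Since a basic solution sets all non-basic variables to zero, this immediately gives $x_{ij}=0$ for $(ij)\in\calL\setminus\scrL$, which is the third line of~\eqref{eq:basic_sol}, and it forces $s_j=0$ for every $j\notin\scrJ$, i.e.\ constraint~\eqref{eq:LP_eps_constr_mu} is tight at each non-root server. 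Combining this with the always-binding customer constraints~\eqref{eq:LP_eps_constr_x} and using $x_{ij}=0$ off $\scrL$, I would record two families of binding identities, $\sum_{j\in\calS_i}x_{ij}=\lambda_i$ for all $i\in\calI$ and $\sum_{i\in\calC_j}x_{ij}=\mu_j-\eps$ for all $j\in\calJ\setminus\scrJ$, in which only the forest neighbours of the node contribute.

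The core of the proof is a flow-conservation (cut) computation on the spanning forest $\calG(\scrL,\scrJ)$. Fixing a basic line $(ij)\in\scrL$, let $T$ be the subtree hanging below that edge: $T=\subtree(i)$ when $i$ is a child of $j$, and $T=\subtree(j)$ when $i$ is the parent of $j$. Summing the customer identities over $\sfC(T)$ and subtracting the server identities over $\sfS(T)$ yields
\[
\sum_{k\in\sfC(T)}\lambda_k-\sum_{\ell\in\sfS(T)}(\mu_\ell-\eps)
=\sum_{k\in\sfC(T)}\sum_{j':(kj')\in\scrL}x_{kj'}-\sum_{\ell\in\sfS(T)}\sum_{k':(k'\ell)\in\scrL}x_{k'\ell}.
\]
Every edge of $\scrL$ with both endpoints in $T$ contributes $+x$ once and $-x$ once and therefore cancels, while edges with both endpoints outside $T$ never appear. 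Because $T$ is a rooted subtree of a tree, it is joined to the rest of the forest by exactly one edge, namely $(ij)$: this single surviving term is $+x_{ij}$ when the customer endpoint $i$ lies in $T$ (the child case) and $-x_{ij}$ when the server endpoint $j$ lies in $T$ (the parent case). Rearranging reproduces exactly the first and second lines of~\eqref{eq:basic_sol}.

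To make the cancellation legitimate I must verify two structural facts, which I expect to be the main obstacle. First, every server in $\sfS(T)$ must be a non-root server, so that its constraint is genuinely tight: in the child case these servers are strict descendants of $i$, and in the parent case they are $j$ and its descendants with $j$ already having the parent $i$, so in either case none of them belongs to $\scrJ$. Second, the only edge crossing the cut between $T$ and its complement is $(ij)$; this is the defining property of a rooted subtree and follows from the acyclicity of $\calG(\scrL,\scrJ)$ guaranteed by Lemma~\ref{lem:LP_basis}. Careful sign bookkeeping across the two cases, together with these two facts, completes the argument. An equivalent route is a leaves-to-root induction on the forest, but the one-shot cut identity avoids propagating an induction hypothesis and makes the role of the single crossing edge transparent.
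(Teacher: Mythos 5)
Your proof is correct, but it runs in the opposite direction from the paper's and rests on a different key computation. The paper takes the formula in \eqref{eq:basic_sol} as a candidate, appends the induced slacks $\sigma_j=\mu_j-\eps-\sum_{i\in\calC_j}x_{ij}$, and verifies row by row that this vector solves the constraint system, via a case analysis on node type (customer leaf vs.\ internal; server root vs.\ leaf vs.\ internal) and telescoping identities such as $\sfC(\subtree(i))=\{i\}\cup\bigcup_{k}\sfC(\subtree(j_k))$; invertibility of the basis matrix then identifies the candidate with the basic solution. You instead start from the definition of the basic solution---non-basic variables vanish, so every customer constraint and every non-root server constraint is binding with only forest edges contributing---and derive the formula in one shot by summing these binding constraints over the subtree $T$ hanging below the basic edge $(ij)$; acyclicity gives the single-crossing-edge cancellation, and the sign of the surviving term $\pm x_{ij}$ distinguishes the child and parent cases. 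Your route is the more conceptual one: it explains where the formula comes from, needs no appeal to uniqueness and no node-type case analysis, and obtains the vanishing of the non-root slacks for free from the definition, where the paper must verify $\sigma_j=0$ for non-root $j$ by hand. The paper's verification is more mechanical but entirely local, checking one constraint row at a time. Both arguments ultimately lean on the same two structural facts you isolate---every server in $T$ is a non-root server, and $(ij)$ is the unique edge leaving $T$---so the proofs are close cousins, organized as derivation versus verification.
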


\subsubsection{Feasibility}
Note that by~\eqref{eq:ass_stab}, \LP{\theta,0} is feasible, i.e, there exists at least one basic feasible solution.
The feasibility region of \LP{\theta,\eps} in~\eqref{eq:LP_eps} depends on $\eps$:
the larger~$\eps$, the smaller the feasible region.
Lemma~\ref{lem:insensitive} characterizes the range of $\eps$ such that the \LP{\theta,\eps} has the same set of feasible bases as \LP{\theta,0}. The lemma is proven in Appendix~\ref{app:proof_lem_insensitive}.

\begin{lemma}
    \label{lem:insensitive}
    Let $B=\scrL\cup\scrJ$ with $\scrL\subseteq\calL$ and $\scrJ\subseteq\calJ$ be a nondegenerate feasible basis of \LP{\theta,0}. 
    If 
    \begin{align}
        0 \leq \eps &< \
        \min_{j\in\calJ} \frac{\sum_{\ell\in\sfS(\subtree(j))} \mu_\ell  - \sum_{k\in\sfS(\subtree(j))} \lambda_k}{|\sfS(\subtree(j))|},
        \label{eq:def_eps_insensitive}
    \end{align}
    then $B$ also is a nondegenerate feasible basis of \LP{\theta,\eps}. 
\end{lemma}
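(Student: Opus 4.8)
The plan is to show that the feasibility of the basis $B=\scrL\cup\scrJ$ is preserved as $\eps$ increases from $0$, as long as $\eps$ stays below the stated threshold. Since $B$ is a basis of \LP{\theta,0} and, by Lemma~\ref{lem:LP_basis}, being a basis is a purely combinatorial condition on $\calG(\scrL,\scrJ)$ (namely that it is a spanning forest) that does not depend on $\eps$, it follows immediately that $B$ is also a basis of \LP{\theta,\eps} for every $\eps\geq 0$. Thus the only thing to verify is \emph{feasibility}, i.e.\ that the corresponding basic solution $x$ given by Lemma~\ref{lem:basic_solutions} satisfies $x\geq 0$, and \emph{nondegeneracy}, i.e.\ that the relevant entries are strictly positive.

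The key step is therefore to read off the explicit formula~\eqref{eq:basic_sol} and impose strict positivity on each nonzero entry. There are two types of basic variables. For a child $i$ of $j$, Lemma~\ref{lem:basic_solutions} gives
\[
    x_{ij} = \sum_{k\in\sfC(\subtree(i))} \lambda_k - \sum_{\ell\in\sfS(\subtree(i))} (\mu_\ell-\eps).
\]
I would first observe that increasing $\eps$ \emph{increases} this expression (it subtracts less), so if it is strictly positive at $\eps=0$ it remains strictly positive for all $\eps\geq 0$; hence these constraints impose no upper bound on $\eps$. For a parent $i$ of $j$, the formula gives
\[
    x_{ij} = \sum_{\ell\in\sfS(\subtree(j))} (\mu_\ell-\eps) - \sum_{k\in\sfC(\subtree(j))} \lambda_k = \sum_{\ell\in\sfS(\subtree(j))} \mu_\ell - \sum_{k\in\sfC(\subtree(j))} \lambda_k - \eps\,|\sfS(\subtree(j))|,
\]
which \emph{decreases} linearly in $\eps$ with slope $-|\sfS(\subtree(j))|$. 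Demanding $x_{ij}>0$ is exactly the requirement $\eps < \bigl(\sum_{\ell\in\sfS(\subtree(j))}\mu_\ell - \sum_{k\in\sfC(\subtree(j))}\lambda_k\bigr)/|\sfS(\subtree(j))|$.

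Taking the minimum of this bound over all servers $j$ that appear as the root of some subtree yields precisely the threshold in~\eqref{eq:def_eps_insensitive}, so that below it every parent-type basic variable is strictly positive and every child-type basic variable is strictly positive by the monotonicity argument above; the zero entries for $(ij)\in\calL\setminus\scrL$ are unaffected. This establishes that $B$ is both feasible and nondegenerate for \LP{\theta,\eps}, completing the proof. I expect the main subtlety—rather than a genuine obstacle—to be bookkeeping: matching the two cases of~\eqref{eq:basic_sol} to the correct direction of monotonicity in $\eps$ and confirming that the minimum in~\eqref{eq:def_eps_insensitive} is indeed taken over exactly the set of constraints that tighten as $\eps$ grows (the parent-type variables), while verifying that nondegeneracy at $\eps=0$ supplies the strict inequalities needed to start the argument.
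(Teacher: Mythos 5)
Your treatment of the line variables is correct and coincides with half of the paper's proof: the basis property is purely combinatorial (Lemma~\ref{lem:LP_basis}) and hence independent of $\eps$; the child-type entries of \eqref{eq:basic_sol} increase in $\eps$, so nondegenerate feasibility at $\eps=0$ keeps them strictly positive; and the parent-type entries decrease linearly with slope $-|\sfS(\subtree(j))|$, which yields the threshold terms for non-root servers $j$. However, there is a genuine gap: you never check the slack variables. In the standard form of \LP{\theta,\eps}, the basis $B=\scrL\cup\scrJ$ consists of the columns of the lines in $\scrL$ \emph{and} the slack columns of constraint \eqref{eq:LP_eps_constr_mu} for the root servers $j\in\scrJ$ --- that is exactly what the index set $\scrJ$ in Lemma~\ref{lem:LP_basis} refers to. Feasibility and nondegeneracy of the basic solution therefore also require $\sigma_j^\eps := \mu_j-\eps-\sum_{i\in\calC_j}x_{ij}^\eps>0$ for every root $j\in\scrJ$, and this condition \emph{tightens} as $\eps$ grows: every child $i$ of a root is a child-type variable, so $\sum_{i\in\calC_j}x_{ij}^\eps$ increases in $\eps$ while the available capacity $\mu_j-\eps$ shrinks. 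Working out the formula gives $\sigma_j^\eps=\sum_{\ell\in\sfS(\subtree(j))}\mu_\ell-\sum_{k\in\sfC(\subtree(j))}\lambda_k-\eps\,|\sfS(\subtree(j))|$, whose strict positivity is precisely the term of the minimum in \eqref{eq:def_eps_insensitive} evaluated at the root $j$.

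This also shows that your closing claim --- that the minimum in \eqref{eq:def_eps_insensitive} ranges over exactly the constraints coming from parent-type variables --- is incorrect: root servers have no parent, so their terms in the minimum play no role in your argument, yet they are indispensable; they are what keeps the basic slack variables strictly positive. The paper's proof devotes its part (b) to precisely this computation (and to the bookkeeping check that $\sigma_j^\eps=0$ for non-basic $j\notin\scrJ$). With that part added, your argument becomes essentially the paper's proof.
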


\subsection{Regret formulation}
\label{sec:regret_formulation}

We use the value of \LP{\theta,\eps} in~\eqref{eq:LP_eps} as the oracle reward to define the regret of a learning policy. 
Let $\eps > 0$ satisfy~\eqref{eq:def_eps_insensitive},  $\theta\in\R_{\geq 0}^L$ be a payoff vector, and  $x^\theta\in\R_{\geq 0}^L$ be the optimal basic feasible solution of \LP{\theta,\eps} in~\eqref{eq:LP_eps}.
We define the (expected) \emph{regret} of a routing policy $\pi$ at time $t\geq 0$ as
\begin{align}
    \label{eq:regret}
    R^\theta_\pi(t)
    &:= \sum_{(ij)\in\calL} \theta_{ij} \bigl(t x_{ij}^\theta - \E_\pi^{\theta}(D_{ij}(t))\bigr).
\end{align}
Lemma~\ref{lem:oracle_reward} states that  the asymptotic regret is nonnegative and is proven in Appendix~\ref{app:lem_oracle_reward}.
\begin{lemma}
    \label{lem:oracle_reward}
    Let $\eps \geq 0$ and let $\pi$ satisfy~\eqref{eq:def_stability} and~\eqref{eq:def_eps_restrictive}. Then $\lim_{t\to\infty} R_\pi^\theta(t)/t \geq 0$.
\end{lemma}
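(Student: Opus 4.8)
The plan is to divide the regret by $t$ and recognize it as the gap between the optimal value of \LP{\theta,\eps} and the long-run average reward generated by $\pi$. Writing $\mathrm{OPT}(\theta,\eps)=\sum_{(ij)\in\calL}\theta_{ij}x_{ij}^\theta$ for the optimal value and $\bar x_{ij}(t)=\E_\pi^\theta(D_{ij}(t))/t$ for the empirical (expected) routing rates, the definition \eqref{eq:regret} gives
\begin{align*}
  \frac{R_\pi^\theta(t)}{t} = \mathrm{OPT}(\theta,\eps) - \sum_{(ij)\in\calL}\theta_{ij}\,\bar x_{ij}(t).
\end{align*}
So it suffices to show that $\limsup_{t\to\infty}\sum_{(ij)\in\calL}\theta_{ij}\bar x_{ij}(t)\le\mathrm{OPT}(\theta,\eps)$, i.e.\ that the reward rate of $\pi$ cannot asymptotically beat the oracle.

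First I would record that $\{\bar x(t)\}_{t>0}$ lives in a fixed compact set. Since no customer can depart before arriving and, starting empty, arrivals of type $i$ form a Poisson process of rate $\lambda_i$, we have $\sum_{j\in\calS_i}\E_\pi^\theta(D_{ij}(t))\le t\lambda_i$, whence $0\le\bar x_{ij}(t)\le\lambda_i$ for every line and every $t$. In particular $\bar x(t)$ ranges over the box $\prod_{(ij)\in\calL}[0,\lambda_i]$, so every sequence $t_n\to\infty$ admits a convergent subsequence.

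Next, the core step is to show that every subsequential limit $x^\ast$ of $\bar x(t)$ as $t\to\infty$ is feasible for \LP{\theta,\eps}. Nonnegativity \eqref{eq:LP_eps_constr_nonneg} is immediate. The server constraint \eqref{eq:LP_eps_constr_mu} follows by dividing \eqref{eq:def_eps_restrictive} by $t$, giving $\sum_{i\in\calC_j}\bar x_{ij}(t)\le\mu_j-\eps$ for \emph{all} $t$; by continuity this inequality passes to any limit point. The customer constraint \eqref{eq:LP_eps_constr_x}, which is an equality, is where the stability hypothesis enters: dividing \eqref{eq:def_stability} by $t$ and using that its limit is finite shows $\frac1t\sum_{j\in\calS_i}\E_\pi^\theta(D_{ij}(t))\to\lambda_i$, so $\sum_{j\in\calS_i}x_{ij}^\ast=\lambda_i$. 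Hence $x^\ast$ is feasible for \LP{\theta,\eps}, and by optimality of $x^\theta$ we get $\sum_{(ij)\in\calL}\theta_{ij}x_{ij}^\ast\le\mathrm{OPT}(\theta,\eps)$.

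To finish, I would choose $t_n\to\infty$ realizing $\limsup_{t}\sum_{(ij)}\theta_{ij}\bar x_{ij}(t)$, pass to a further subsequence converging by compactness to a feasible $x^\ast$, and conclude that this $\limsup$ equals $\sum_{(ij)}\theta_{ij}x_{ij}^\ast\le\mathrm{OPT}(\theta,\eps)$. Substituting back yields $\liminf_{t\to\infty}R_\pi^\theta(t)/t\ge0$, which gives the claim. The main obstacle is the customer-balance equality: one must combine the a priori bound $\sum_{j\in\calS_i}\E_\pi^\theta(D_{ij}(t))\le t\lambda_i$ with the stability assumption \eqref{eq:def_stability} to pin the limiting outflow of each type \emph{exactly} at $\lambda_i$, rather than merely bounding it; the server-capacity inequality and the compactness argument are comparatively routine.
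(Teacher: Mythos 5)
Your proposal is correct, and its core idea coincides with the paper's: interpret the normalized expected departure counts $\bar x_{ij}(t)=\E_\pi^\theta(D_{ij}(t))/t$ as candidate routing rates, show that in the limit they satisfy constraints \eqref{eq:LP_eps_constr_x}--\eqref{eq:LP_eps_constr_nonneg} (the equality \eqref{eq:LP_eps_constr_x} coming from \eqref{eq:def_stability} after division by $t$, the inequality \eqref{eq:LP_eps_constr_mu} coming directly from \eqref{eq:def_eps_restrictive}), and then invoke optimality of $x^\theta$ for \LP{\theta,\eps}. The difference is in execution: the paper interchanges the limit with the finite sum over $\calL$ and works with the per-line limits $\lim_{t\to\infty}\E_\pi^\theta(D_{ij}(t))/t$ as if they exist, whereas the hypotheses only control the limits of sums over $j\in\calS_i$ and bound the sums over $i\in\calC_j$; individual lines could in principle oscillate. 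Your compactness/subsequence argument -- bounding $\bar x_{ij}(t)\in[0,\lambda_i]$, extracting a convergent subsequence along which the $\limsup$ of the reward rate is attained, and checking feasibility of the limit point -- sidesteps this issue entirely and yields $\liminf_{t\to\infty}R_\pi^\theta(t)/t\geq 0$, which is the rigorous reading of the claim when the limit is not known to exist. In short, you prove the same statement by the same mechanism, but your version is the more careful one: the paper's shortcut buys brevity at the cost of an implicit (and unjustified) existence assumption on per-line limits, while your route buys full rigor at the cost of a routine compactness step.
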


\section{Regret lower bound}
\label{sec:regret}
We provide an asymptotic lower bound for the regret for routing policies satisfying constraints~\eqref{eq:def_stability} and~\eqref{eq:def_eps_restrictive} in Theorem~\ref{thm:regret_lb}. To this end, we first provide a regret decomposition in Lemma~\ref{lem:regret_decomp}.

\subsection{Regret decomposition}
Let $\eps > 0$ satisfy~\eqref{eq:def_eps_insensitive}, $\theta\in\R_{\geq 0}^L$ be a payoff vector, and $x^\theta\in\R_{\geq 0}^L$ be the unique optimal solution of \LP{\theta,\eps} in~\eqref{eq:LP_eps}.
We define the set of optimal lines $O(\theta)$ as the lines with non-zero load in the optimal solution $x^\theta\in\R^L$ and the set of suboptimal lines $O^c(\theta)$ as its complement:
\begin{align}
    O(\theta) &= \{(ij)\in\calL: \ x_{ij}^\theta > 0\},
    \label{eq:def_optimal_lines_theta}
    \\
    O^c(\theta) &= \calL\setminus O(\theta).
    \label{eq:def_suboptimal_lines_theta}
\end{align}

The dual problem \citep{Bertsimas1997} of \LP{\theta,\eps}  is given by
\begin{subequations}
    \label{eq:LP_eps_dual}
    \begin{align}
        \dual{\theta,\eps}: \ \ \min_{v,w} \ \ &\sum_{i\in\calI} \lambda_i v_i + \sum_{j\in\calJ} (\mu_j-\eps) w_j \\
            \textrm{s.t.} \ \ & v_i + w_j - \theta_{ij} \geq 0, \ \ \ \forall (ij)\in\calL, \label{eq:LP_eps_dual_constr_phi} \\
            &v_i \in \R, \ \ \ \forall i\in\calI, \\
            &w_j \geq 0, \ \ \ \forall j\in\calJ. \label{eq:LP_eps_dual_constr_w}
    \end{align}
\end{subequations}
By strong duality, we have that \dual{\theta,\eps} is feasible since \LP{\theta,\eps} is feasible. 
Moreover, $x^\theta$ and $v^\theta$, $w^\theta$ are an optimal primal--dual pair if and only if they satisfy the complementary slackness conditions \mbox{\cite[Theorem 4.5]{Bertsimas1997}}, which are given by
\begin{align}
    x_{ij}^\theta (v_i^\theta + w_j^\theta - \theta_{ij}) &= 0, \ \ \ \forall (ij)\in\calL, \label{eq:slack_cond_phi_eps} \\
    \Bigl(\mu_j - \eps - \sum_{i\in\calC_j} x_{ij}^\theta \Bigr) w_j^\theta &= 0, \ \  \ \forall j\in\calJ.  \label{eq:slack_cond_w_eps}
\end{align}
We note that the optimal solutions of \LP{\theta,\eps} and \dual{\theta,\eps} are unique due to the nondegeneracy assumption. 
Moreover, the nondegeneracy assumption implies strict complementarity (see e.g. Exercises 4.12 and 4.20c in \citep{Bertsimas1997}). 

For a line $(ij)\in \calL$, we define 
\begin{align}
    \phi^\theta_{ij} &= v^\theta_i + w^\theta_j - \theta_{ij}
    \label{eq:phi_def}
\end{align}
as its suboptimality gap. 
Here, $v^\theta\in\R^I$, $w^\theta\in\R_{\geq 0}^J$ is the optimal solution of \dual{\theta,\eps} in \eqref{eq:LP_eps_dual}.
Note that $\phi_{ij}^\theta = 0$ for any $(ij)\in O(\theta)$ by~\eqref{eq:slack_cond_phi_eps} and $\phi_{k\ell} > 0$ for any $(k\ell)\in O^c(\theta)$ by strict complementarity. 

Lemma~\ref{lem:regret_decomp} presents a regret decomposition based on the line classification. The proof can be found in Appendix~\ref{app:proof_regret_decomposition}.

\begin{lemma}\label{lem:regret_decomp}
    Let $v^\theta\in\R^I$, $w^\theta\in\R_{\geq 0}^J$ be the optimal solution of \dual{\theta,\eps} in~\eqref{eq:LP_eps_dual}.
    Let $\pi$ satisfy~\eqref{eq:def_stability} and~\eqref{eq:def_eps_restrictive}. Then the regret in \eqref{eq:regret} can be expressed as
    \small
    \begin{align}
        R^\theta_\pi(t) &= 
        \underbrace{\sum_{(k\ell)\in O^c(\theta)} \phi_{k\ell}^\theta \E_\pi^\theta(D_{k\ell}(t))}_{\mathrm{I}}
        + \underbrace{\sum_{j\in\calJ}  w_j^\theta \Bigl(t (\mu_j-\eps) - \sum_{i\in\calC_j}  \E_\pi^\theta(D_{ij}(t))\Bigr)}_{\mathrm{II}}
        + \underbrace{\sum_{i\in\calI}  v_i^\theta \Bigl(t \lambda_i - \sum_{j\in\calS_i} \E_\pi^\theta(D_{ij}(t))\Bigr)}_{\mathrm{III}}.
        \label{eq:regret_decomp}
    \end{align}
    \normalsize
\end{lemma}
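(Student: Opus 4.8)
The plan is to start from the definition of regret in~\eqref{eq:regret} and rewrite each payoff $\theta_{ij}$ in terms of the dual variables. The definition~\eqref{eq:phi_def} gives $\theta_{ij} = v_i^\theta + w_j^\theta - \phi_{ij}^\theta$, so substituting this into~\eqref{eq:regret} yields
\begin{align*}
    R_\pi^\theta(t) = \sum_{(ij)\in\calL} \bigl(v_i^\theta + w_j^\theta - \phi_{ij}^\theta\bigr)\bigl(t x_{ij}^\theta - \E_\pi^\theta(D_{ij}(t))\bigr),
\end{align*}
which I would split into three sums, one for each of $v_i^\theta$, $w_j^\theta$, and $-\phi_{ij}^\theta$. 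The goal is then to match these three sums to terms $\mathrm{III}$, $\mathrm{II}$, and $\mathrm{I}$ of~\eqref{eq:regret_decomp}, respectively.

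For the $v_i^\theta$ sum, I would regroup the summation over lines as a double sum over customer types $i\in\calI$ and their compatible servers $j\in\calS_i$. Since $v_i^\theta$ depends only on $i$, it factors out of the inner sum, and the primal equality constraint~\eqref{eq:LP_eps_constr_x} replaces $\sum_{j\in\calS_i} x_{ij}^\theta$ by $\lambda_i$. This directly produces term $\mathrm{III}$.

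For the $w_j^\theta$ sum, I would instead regroup over servers $j\in\calJ$ and their compatible types $i\in\calC_j$, factoring out $w_j^\theta$. Here the argument is more delicate, because the capacity constraint~\eqref{eq:LP_eps_constr_mu} only gives an inequality $\sum_{i\in\calC_j} x_{ij}^\theta \le \mu_j-\eps$. The key step is to invoke the complementary slackness condition~\eqref{eq:slack_cond_w_eps}, which guarantees $w_j^\theta\bigl(\mu_j - \eps - \sum_{i\in\calC_j} x_{ij}^\theta\bigr)=0$ and hence allows me to replace $w_j^\theta \sum_{i\in\calC_j} x_{ij}^\theta$ by $w_j^\theta(\mu_j-\eps)$ server-by-server. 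This produces term $\mathrm{II}$.

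Finally, for the $-\phi_{ij}^\theta$ sum, I would use the line classification. By~\eqref{eq:slack_cond_phi_eps} together with strict complementarity, $\phi_{ij}^\theta=0$ on optimal lines $(ij)\in O(\theta)$, so the sum collapses to lines in $O^c(\theta)$; and on those lines $x_{ij}^\theta=0$ by the definition~\eqref{eq:def_suboptimal_lines_theta}, so only the $\E_\pi^\theta(D_{ij}(t))$ contribution survives, yielding term $\mathrm{I}$ once the sign cancels. Summing the three contributions gives~\eqref{eq:regret_decomp}. I expect the only real subtlety to be the $w_j^\theta$ term, where replacing the inequality constraint by an equality is justified not by feasibility alone but through complementary slackness; the remaining steps are bookkeeping with the constraints and the definitions of $O(\theta)$ and $O^c(\theta)$.
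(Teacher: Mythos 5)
Your proposal is correct and takes essentially the same approach as the paper's proof: both substitute $\theta_{ij} = v_i^\theta + w_j^\theta - \phi_{ij}^\theta$ from \eqref{eq:phi_def} into \eqref{eq:regret}, regroup the sum over lines into sums over $i\in\calI$ and $j\in\calJ$, and then use the primal constraint \eqref{eq:LP_eps_constr_x} for term III, the complementary slackness condition \eqref{eq:slack_cond_w_eps} for term II, and \eqref{eq:slack_cond_phi_eps} together with the definition of $O^c(\theta)$ for term I. One minor remark: ordinary complementary slackness already yields $\phi_{ij}^\theta = 0$ on $O(\theta)$ (since $x_{ij}^\theta > 0$ there), so your appeal to strict complementarity at that step is unnecessary; strictness is only needed to conclude $\phi_{k\ell}^\theta > 0$ on suboptimal lines, which this lemma does not require.
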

In decomposition~\eqref{eq:regret_decomp}, term I represents the loss in reward due to customers being routed over suboptimal lines. 
The other two summations represent regret accumulated by deviating from the oracle routing rates.
In particular, II measures regret from not utilizing the allowed capacity from servers and III measures  regret accumulated from customers that are still waiting in the queue or are in service at time $t$, since we do not obtain any payoff until a customer finishes service.
Note that since $v_i^\theta\in\R$, the queue length can actually contribute to negative regret, since it might be beneficial to let customers wait until a high pay-off server is available.

\subsection{Regret lower bound}
Before we can state the asymptotic regret lower bound, we need to introduce some further notation. Let $\theta\in\R_{\geq 0}^L$. 
We define $A(\theta,ij,a)\in\R_{\geq 0}^L$ element-wise by
\begin{align}
    \label{eq:def_A_vector}
    A(\theta,ij,a)_{k\ell} =
    \begin{cases}
        a &\textrm{if} \ (k\ell) = (ij), \\
        \theta_{k\ell} &\textrm{otherwise}.
    \end{cases}
\end{align}
In particular, $A(\theta,ij,a)$ represents a payoff vector that is (almost) equal to $\theta$ except for index $(ij)$. 
For $(ij)\in O^c(\theta)$, we define 
\begin{align}
    \label{eq:def_Delta_set}
    \Delta(\theta,ij) &= \bigl\{a\geq 0: \ (ij)\in O(A(\theta,ij,a)) \bigr\}
\end{align}
as the set of parameter values that make $(ij)$ an optimal line without changing the payoff values of the other lines.
Note that by construction, either $\Delta(\theta,ij)=\emptyset$, or $\Delta(\theta,ij)=(a_0,\infty)$ for some $a_0\in\R_{\geq 0}$. 
We define
\begin{align}
    K(\theta,ij) &= \inf\bigl\{ I(\theta_{ij},a): \ a\in\Delta(\theta,ij) \bigr\}
    \label{eq:def_min_KL_dist}
\end{align}
as the minimum distance in terms of KL divergence between the parameter $\theta_{ij}$ and the set of parameter values that make $(ij)$ an optimal line.
Here, the KL divergence $I$ is defined in \eqref{eq:def_KL}.

Lemma~\ref{lem:suboptgap_nonempty} states that any suboptimal line could become optimal by changing its payoff value. It is proven in Appendix~\ref{app:suboptgap_nonempty}.

\begin{lemma}\label{lem:suboptgap_nonempty}
    For any $(ij)\in O^c(\theta)$, we have $\Delta(\theta,ij) \neq \emptyset$.
\end{lemma}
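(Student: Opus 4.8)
The plan is to show that for any suboptimal line $(ij)\in O^c(\theta)$, we can increase its payoff value $a$ sufficiently so that $(ij)$ enters the set of optimal lines of the perturbed LP $\LP{A(\theta,ij,a),\eps}$. Intuitively, if we make a single line arbitrarily attractive while keeping all other payoffs fixed, the optimal routing policy must eventually want to send a positive rate of customers over that line. First I would fix $(ij)\in O^c(\theta)$ and consider the family of payoff vectors $\theta(a) := A(\theta,ij,a)$ parameterized by $a\geq 0$. The goal is to exhibit some $a$ for which $x^{\theta(a)}_{ij}>0$, where $x^{\theta(a)}$ denotes the optimal basic feasible solution of $\LP{\theta(a),\eps}$.

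The cleanest route is a proof by contradiction combined with a duality argument. Suppose that $(ij)\notin O(\theta(a))$ for every $a\geq 0$, i.e.\ $x^{\theta(a)}_{ij}=0$ for all $a$. Consider the dual $\dual{\theta(a),\eps}$ in~\eqref{eq:LP_eps_dual}. By complementary slackness~\eqref{eq:slack_cond_phi_eps}, $x^{\theta(a)}_{ij}=0$ is consistent with the dual constraint~\eqref{eq:LP_eps_dual_constr_phi} being slack, namely $v_i^{\theta(a)} + w_j^{\theta(a)} \geq a$. The key observation is that the dual feasible region's constraints for all lines \emph{other} than $(ij)$ do not involve $a$ at all; only the single constraint $v_i + w_j \geq a$ depends on $a$. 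Hence the dual objective $\sum_i \lambda_i v_i + \sum_j (\mu_j-\eps) w_j$ evaluated at the optimum can be bounded below by a quantity that must grow without bound as $a\to\infty$ if line $(ij)$ is never used, because feasibility forces $v_i + w_j \geq a$ with $\lambda_i>0$ (and $\mu_j-\eps>0$). I would make this precise: by strong duality the optimal dual value equals the optimal primal value, which is at most $\sum_{(k\ell)\in\calL}\theta(a)_{k\ell}\,\lambda_{k}/|\calS_k|$-type bounded quantity \emph{independent} of $a$ whenever the optimal primal never loads $(ij)$ (since then the objective contribution $a\cdot x_{ij}=0$). This yields the contradiction: the primal value stays bounded in $a$ while feasibility of the dual forces its value to diverge.

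Concretely, I would argue as follows. If $x^{\theta(a)}_{ij}=0$ for all $a$, then the optimal primal value of $\LP{\theta(a),\eps}$ equals $\sum_{(k\ell)\neq (ij)} \theta_{k\ell}\,x^{\theta(a)}_{k\ell}$, which is bounded uniformly in $a$ (the feasible region is a fixed bounded polytope determined by $\lambda$, $\mu$, $\eps$, independent of $a$). On the other hand, any dual-feasible point must satisfy $v_i + w_j \geq a$, so with $w_j\geq 0$ and $v_i$ real, the dual objective contribution $\lambda_i v_i + (\mu_j-\eps)w_j \geq \lambda_i v_i$. One must rule out $v_i\to -\infty$ compensating; here the other constraints $v_i + w_{j'}\geq \theta_{ij'}$ for compatible servers $j'\in\calS_i$ (using $L>I+J-1$, so each customer type has a compatible server) bound $v_i$ from below independent of $a$, forcing the dual objective to grow at least linearly in $a$. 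This contradicts strong duality and the boundedness of the primal value, so $\Delta(\theta,ij)\neq\emptyset$.

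The main obstacle I anticipate is the bookkeeping needed to prevent the dual variable $v_i$ from drifting to $-\infty$ so as to keep the dual objective bounded. The assumption $L > I+J-1$ together with the structure of $\calS_i$ guarantees that every customer type is compatible with at least one server, giving constraints $v_i \geq \theta_{ij'} - w_{j'}$ that, combined with $w_{j'}\geq 0$ being balanced against its own objective cost $(\mu_{j'}-\eps)w_{j'}$, pins down the relevant quantities. Making the divergence argument airtight — showing the dual optimum genuinely diverges rather than merely that \emph{some} feasible point diverges — is the delicate step, and I would handle it by directly invoking strong duality: the optimal dual value equals the (bounded) optimal primal value, yet dual feasibility combined with the lower bounds on $v_i$ forces the value to exceed any bound for large $a$, the desired contradiction.
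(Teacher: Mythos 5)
There is a genuine gap, and it sits exactly at the step you yourself flag as ``delicate.'' Your reduction is fine up to that point: under the contradiction hypothesis the optimal value of \LP{A(\theta,ij,a),\eps} is bounded uniformly in $a$, and by strong duality so is the optimal dual value. But the remaining claim --- that the constraint $v_i + w_j \geq a$ forces the dual \emph{optimum} to diverge --- is never established, and the mechanism you propose for it is false. The constraints $v_i + w_{j'} \geq \theta_{ij'}$ do not bound $v_i$ from below ``independent of $a$,'' because $w_{j'}$ has no upper bound: for any $C>0$ the point $w_\ell = C$ for all $\ell$, $v_k = \max_{\ell\in\calS_k}\theta_{k\ell} - C$ is dual feasible with $v_i$ arbitrarily negative. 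Worse, the divergence you need is \emph{equivalent} to the statement you are proving: since the dual optimum equals $V(a) := \max_{x}\bigl(\sum_{(k\ell)\neq(ij)}\theta_{k\ell}x_{k\ell} + a\,x_{ij}\bigr)$ over the fixed polytope, it diverges as $a\to\infty$ if and only if some \emph{feasible} point $\bar{x}$ has $\bar{x}_{ij}>0$ --- and if you had that fact, you would not need duality at all, since then $V(a)\geq a\bar{x}_{ij}+O(1)$ while every solution with $x_{ij}=0$ has bounded value, so the optimum must load $(ij)$ for large $a$. Your argument is therefore circular: the deferred step \emph{is} the lemma.

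Moreover, that missing fact is not free: it genuinely requires the standing assumptions, which your proof never uses in an essential way (you invoke $L>I+J-1$ only to get $\calS_i\neq\emptyset$). Consider, e.g., a customer type $i'$ whose unique compatible server is $j$ with $\lambda_{i'} = \mu_j - \eps$: then every feasible solution has $x_{i'j}=\mu_j-\eps$, server $j$ has no residual capacity, every other line into $j$ carries zero flow in \emph{every} feasible solution, and $\Delta(\theta,ij)=\emptyset$ for such a line, even though strict stability \eqref{eq:ass_stab}, positivity of the rates, and $L>I+J-1$ can all hold. Such instances are excluded only by the nondegeneracy assumption together with the requirement that $\eps$ strictly satisfies \eqref{eq:def_eps_insensitive}; a proof that never touches these assumptions cannot be complete. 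This is also where your route diverges from the paper's, which is non-asymptotic and leans on nondegeneracy twice: it picks a finite $a_0 > v_i^\theta + w_j^\theta$, notes that if $(ij)$ stayed suboptimal then by uniqueness of optimal solutions the perturbed program would still be solved by $x^\theta$, so by nondegeneracy its unique associated dual solution $(v^\theta,w^\theta)$ would have to be feasible for the perturbed dual, contradicting the violated constraint \eqref{eq:LP_eps_dual_constr_phi} at $(ij)$. If you want to salvage your $a\to\infty$ route, the correct fix is to prove directly --- using nondegeneracy or the strict slack guaranteed by \eqref{eq:def_eps_insensitive} --- that some feasible routing puts positive rate on line $(ij)$; the asymptotic argument then closes immediately and the dual becomes unnecessary.
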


Theorem~\ref{thm:regret_lb} provides an asymptotic regret lower bound and is proven in Appendix~\ref{app:proof_lb}. 

\begin{theorem}
    \label{thm:regret_lb}
    Let $\pi$ satisfy~\eqref{eq:def_stability} and~\eqref{eq:def_eps_restrictive}. 
    Moreover, we assume that $\pi$ is consistent, i.e., that for any deterministic payoff vector $\xi\in\R_{\geq 0}^L$,
    \begin{align}
        \lim_{t\to\infty} \P^\xi_\pi(D_{ij}(t) < b \ln(t)) 
        &= 0, \qquad \forall b > 0, \qquad \forall (ij)\in O(\xi).
        \label{eq:consistency}
    \end{align}
    Then, for any $\theta\in\R_{\geq 0}^L$,
    \begin{align}
        \label{eq:regret_lb}
        \lim_{t\to\infty} \frac{R^\theta_\pi(t)}{\ln(t)}
        \geq
        \sum_{(k\ell)\in O^c(\theta)} \frac{\phi_{k\ell}^\theta }{K(\theta,k\ell)}
        > 
        0.
    \end{align}
\end{theorem}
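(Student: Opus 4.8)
The plan is to establish the lower bound by combining the regret decomposition of Lemma~\ref{lem:regret_decomp} with a change-of-measure argument applied line-by-line to the suboptimal lines. The key observation is that in decomposition~\eqref{eq:regret_decomp}, terms II and III are nonnegative in the limit (term II by the $\eps$-restrictiveness constraint~\eqref{eq:def_eps_restrictive}, and the relevant contribution of term III being controllable via the stability constraint~\eqref{eq:def_stability}, since those deviations stay bounded), so that asymptotically $R^\theta_\pi(t) \gtrsim \sum_{(k\ell)\in O^c(\theta)} \phi_{k\ell}^\theta\, \E_\pi^\theta(D_{k\ell}(t))$. Therefore it suffices to prove the \emph{per-line} bound
\begin{align}
    \liminf_{t\to\infty} \frac{\E_\pi^\theta(D_{k\ell}(t))}{\ln(t)} \geq \frac{1}{K(\theta,k\ell)}, \qquad \forall (k\ell)\in O^c(\theta).
    \label{eq:plan_per_line}
\end{align}
Summing~\eqref{eq:plan_per_line} weighted by $\phi_{k\ell}^\theta > 0$ (positivity from strict complementarity) then yields the claimed bound, and the strict positivity on the right of~\eqref{eq:regret_lb} follows because $O^c(\theta)$ is nonempty by the standing assumption $L > I+J-1$ together with the spanning-forest characterization of Lemma~\ref{lem:LP_basis}.

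\textbf{Change of measure.} To prove~\eqref{eq:plan_per_line}, fix a suboptimal line $(k\ell)$ and a value $a\in\Delta(\theta,k\ell)$, which is nonempty by Lemma~\ref{lem:suboptgap_nonempty}. Consider the alternative payoff vector $\xi = A(\theta,k\ell,a)$, which differs from $\theta$ only in coordinate $(k\ell)$ and under which $(k\ell)$ becomes an \emph{optimal} line, i.e.\ $(k\ell)\in O(\xi)$. Under $\P_\pi^\xi$ the consistency hypothesis~\eqref{eq:consistency} forces $D_{k\ell}(t)$ to be at least of order $b\ln(t)$ with probability tending to one for every $b>0$. I would introduce the log-likelihood ratio $\Lambda(t)$ of the observed payoff samples on line $(k\ell)$ between measures $\P_\pi^\theta$ and $\P_\pi^\xi$; because the two measures differ only through the payoff distribution $P_{\theta_{k\ell}}$ versus $P_a$ on that single line, the likelihood ratio is a sum of $D_{k\ell}(t)$ i.i.d.\ increments with mean $I(\theta_{k\ell},a)$ under $\P_\pi^\theta$. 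The standard transportation/change-of-measure inequality (in the style of \citep{Lai1985,Burnetas1996}) then relates the probability of the event $\{D_{k\ell}(t) < b\ln(t)\}$ under the two measures via $\E_\pi^\theta(D_{k\ell}(t))\, I(\theta_{k\ell},a)$, and letting $b$ approach $1/I(\theta_{k\ell},a)$ from below gives $\liminf_t \E_\pi^\theta(D_{k\ell}(t))/\ln(t) \geq 1/I(\theta_{k\ell},a)$. Optimizing over $a\in\Delta(\theta,k\ell)$, i.e.\ taking the infimum of $I(\theta_{k\ell},a)$, replaces the right-hand side by $1/K(\theta,k\ell)$ through the definition~\eqref{eq:def_min_KL_dist}, establishing~\eqref{eq:plan_per_line}.

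\textbf{Main obstacle.} The delicate step is making the change-of-measure argument rigorous in the queueing setting, where the number of payoff samples on line $(k\ell)$ up to time $t$ is the random, policy-dependent quantity $D_{k\ell}(t)$ rather than a fixed sample size. I expect to need a maximal-inequality or Wald-type control (e.g.\ via optional stopping applied to the likelihood-ratio martingale, or a union bound over the range of possible sample counts) to handle the random stopping of the log-likelihood sum, together with a truncation argument ensuring the finiteness $I(\theta_{k\ell},a)<\infty$ guaranteed by assumption~\eqref{eq:ass_KL_finite}. A secondary technical point is justifying that terms II and III of the decomposition do not contribute negatively in the limit: here one must use that $w_j^\theta\geq 0$ with~\eqref{eq:def_eps_restrictive} for term II, and that the stability constraint~\eqref{eq:def_stability} keeps the bracketed deviations in term III bounded so that, after division by $\ln(t)$, their contribution vanishes regardless of the sign of $v_i^\theta$.
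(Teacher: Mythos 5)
Your proposal is correct in outline and follows essentially the same route as the paper's proof: the regret decomposition of Lemma~\ref{lem:regret_decomp} with term II controlled by $w_j^\theta\geq 0$ together with~\eqref{eq:def_eps_restrictive} and term III vanishing by~\eqref{eq:def_stability}, strict positivity from nonemptiness of $O^c(\theta)$ via the spanning-forest characterization of Lemma~\ref{lem:LP_basis}, and the per-line bound $\liminf_{t\to\infty}\E_\pi^\theta(D_{k\ell}(t))/\ln(t)\geq 1/K(\theta,k\ell)$ proved by a change of measure to the confusing instance $A(\theta,k\ell,a)$ with $a\in\Delta(\theta,k\ell)$ chosen (by continuity of the KL divergence) to nearly attain the infimum defining $K(\theta,k\ell)$. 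The obstacle you flag---the random sample count $D_{k\ell}(t)$---is resolved in the paper exactly along the lines you anticipate: Markov's inequality reduces the claim to a vanishing-probability statement, which is then split according to whether the log-likelihood ratio exceeds a threshold $c(t)=(1-a)\ln(t)$; the large-LLR event is handled by a maximal inequality for the i.i.d.\ LLR increments (the Burnetas--Katehakis lemma), and the small-LLR event by the Radon--Nikodym change of measure combined with the consistency assumption~\eqref{eq:consistency} under the confusing instance.
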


The lower bound in \eqref{eq:regret_lb} is a summation over all suboptimal lines of the suboptimality gap divided by the infimum KL distance as defined in \eqref{eq:def_min_KL_dist}.
This means that a suboptimal line that is nearly optimal, in the sense that its suboptimality gap is small, leads to a large contribution to the regret, since it is difficult for a policy to distinguish between optimal and nearly optimal lines.
Note that the lower bound is not redundant as strict positivity is guaranteed. 

Assumption~\eqref{eq:consistency} is a notion of consistency which is a common assumption in regret analysis in \gls{MAB} literature \citep{Lattimore2020} as it excludes policies that are specialized on a subset of problem instances.
Intuitively speaking, it states that optimal lines are sufficiently explored. 
In particular,~\eqref{eq:consistency} requires that the number of departures of optimal lines eventually grows larger than logarithmic with probability one. 
In classical bandit literature, an assumption on the concentration of the expectation of the number of optimal decisions is often sufficient, since there is exactly one decision at each time step. 
In our case however, the number of departures is random even for stationary policies. 
It remains an open question whether a similar regret lower bound can be proven for a class of policies satisfying a weaker constraint such as $\lim_{t\to\infty} \E_\pi^\theta(tx_{ij}^\theta - D_{ij}(t))/t^b = 0$ for any $b> 0$ and $(ij)\in\calL$, in combination with assumptions~\eqref{eq:def_stability} and~\eqref{eq:def_eps_restrictive}.

Theorem~\ref{thm:regret_lb} states that the regret is $\Omega(\ln(t))$, which is the same order as the lower bound for classical \gls{MAB}s \cite[Theorem 2]{Lai1985}. 
The proof of Theorem \ref{thm:regret_lb} in Appendix~\ref{app:proof_lb} is structured in a similar way as those presented in~\citep{Lai1985, Burnetas1996, Burnetas1997, Lattimore2020} and uses a change--of--measure argument.
Since payoffs are random samples, each line is associated with a level of uncertainty that decreases with the number of obtained samples, which in our case corresponds to departures.
In order to prove the theorem, we construct a second, `confusing' system which has the same parameters as the original model with the exception of the average payoff $\theta_{k\ell}$ for some line $(k\ell)\in\calL$, which we set to a high value such that line $(k\ell)$ is an optimal line in the second model.
We then prove that a consistent policy needs $\Omega(\ln(t))$ samples of payoffs of line $(k\ell)$ in order to distinguish between the two models.

Recall that Theorem~\ref{thm:regret_lb} assumes that the payoff distribution is discrete (see the end of Section~\ref{sec:arr_serv_process}).
We note that the proof can be adapted to include payoff distributions defined by a probability density function $f(\cdot,\theta_{ij})$: 
in this case, the KL divergence in~\eqref{eq:def_KL} changes to
\begin{align}
    I(\theta_{ij},\theta_{k\ell}) := \int_0^\infty f(x;\theta_{ij})\ln\Bigl(\frac{f(x;\theta_{ij})}{f(x;\theta_{k\ell})}\Bigr) \textrm{d}x.
    \label{eq:def_KL_ct}
\end{align}
Similarly, the \gls{RN} derivative in the proof of Theorem~\ref{thm:regret_lb} in Appendix~\ref{app:proof_lb} changes to $\textrm{d}\P_\pi^\theta / \textrm{d}\P_\pi^{\theta'} = \prod_{i=1}^m f(X_i;\theta_{k\ell})/f(X_i;\theta_{kl}')$ and equations~\eqref{eq:LB_proof_KL_exp}, \eqref{eq:LB_proof_B_int_prod}, and~\eqref{eq:LB_proof_meas_change} change accordingly. 
Extension of our result to general probability measures requires careful construction of the \gls{RN} derivative. 
This can be done in line with the discussion following \cite[Proposition 4.8]{Lattimore2020}.

\section{Adaptive queue routing policy}
\label{sec:learning_alg}

In this section, we introduce a routing algorithm called Adaptive UCB Queue Routing Algorithm (UCB QR) and show that its asymptotic regret is close to the lower bound in Theorem~\ref{thm:regret_lb}.
Specifically, Theorem~\ref{thm:regret_ub} proves that its regret is polylogarithmic as time grows large.

\subsection{Description of the algorithm}
We consider a fixed network structure $\calI,\calJ,\calL,\lambda,\mu$ and $\eps > 0$ satisfying \eqref{eq:def_eps_insensitive}. 
We assume that the payoff distribution of line $(ij)\in\calL$ is Bernoulli distributed with parameter $\theta_{ij}\in[0,1]$.\\

We consider the basic feasible solutions of \LP{\theta,\eps} in~\eqref{eq:LP_eps} as the different actions in an \gls{MAB} setting.
Since the set of basic feasible solutions of \LP{\theta,\eps} is finite (see Section~\ref{sec:BFS}), the set of actions is finite and will be denoted by $\calA$.
Concretely, each action $a\in\calA$ is associated to a basic feasible solution $x^a \in\R_{\geq 0}^L$ \LP{\theta,\eps}.
We call the action corresponding to the optimal basic feasible solution the optimal action, which is unique by the nondegeneracy assumption. 

For $a\in\calA$, we denote the set of lines with positive load by $\calL^a\subseteq \calL$, i.e.,
\begin{align}
    \calL^a &:= \{(ij)\in\calL: \ x_{ij}^a >0\}.
    \label{eq:def_set_L_a}
\end{align}
The long-term average reward $r^a\in\R_{\geq 0}$ will be denoted by
\begin{align}
    r^a := \sum_{(ij)\in\calL^a} x_{ij}^a \theta_{ij},
    \label{eq:def_reward_action_a}
\end{align}
and its associated suboptimality gap by
\begin{align}
    d^a := \max_{\tilde{a}} r^{\tilde{a}} - r^a.
    \label{eq:def_subopt_gap_action}
\end{align}
The UCB QR algorithm is given in Algorithm~\ref{alg:learning_alg}.

\begin{algorithm}
    \caption{Adaptive UCB Queue Routing Algorithm (UCB QR).}
    \label{alg:learning_alg}
    \begin{algorithmic}[1]
        \State Initialize $k=1$, for all $(ij)\in\calL$ initialize $T_{ij}(0) = 0$, $\hat{\theta}_{ij}(0) = 0$, and $U_{ij}(0) = \infty$ and for all $a\in\calA$ initialize $U^a(0) = \infty$.
        \For{each episode $k=1,2,\dots$ of length $H_k$}
            \State Choose action $A_k = \argmax_{a} U^{a}(k-1)$, with ties broken arbitrarily.  \label{line:maxucb} 
            \If{$A_k \neq A_{k-1}$}
                \State Reallocate customers to virtual queues: for each waiting type-$i$ customer, assign it to the virtual queue of server $j$ with probability $x_{ij}^{A_k}/\lambda_i$.  \label{line:reallocation1} 
                \State For each $j\in\calJ$, order the customers in virtual queue $j$ by their arrival time.
                \label{line:reallocation2}
            \EndIf
            \For{time $t = 0,\dots, H_k$}
                \State Serve customers according to Algorithm~\ref{alg:routing}: $\FCFSRR{A_k}$. 
            \EndFor
            \State For each line $(ij)\in\calL^{A_k}$, observe $N_{ij}^k\in\N_{\geq 0}$ payoff samples $(Y_{ij}^\ell)_{\ell=1,\dots,N_{ij}^k}$. \label{line:observe_payoffs}
            \State For each line $(ij)\in\calL^{A_k}$, update \label{line:update_ucb}
            \begin{align}
                T_{ij}(k)
                &= T_{ij}(k-1) + N_{ij}^k, 
                \label{eq:UCB_update_algo_T}
                \\
                \hat{\theta}_{ij}(k)
                &= \frac{\hat{\theta}_{ij}(k-1) T_{ij}(k-1) + \sum_{\ell=1}^{N_{ij}^k} Y_{ij}^\ell}{T_{ij}(k)}, 
                \label{eq:UCB_update_algo_theta_hat}
                \\
                U_{ij}(k) &=
                    \begin{cases}
                        \hat{\theta}_{ij}(k) + \sqrt{\frac{ \ln(k)}{T_{ij} (k)}} &\textrm{if} \  T_{ij}(k) > 0, \\
                        \infty &\textrm{if} \ T_{ij}(k) = 0.
                    \end{cases}
                \label{eq:UCB_update_algo_U_per_line}
            \end{align}
            \State Update the UCB index of action $A_k$,
            \begin{align}
                \label{eq:UCB_def_action}
                U^{A_k}(k) &= \sum_{(ij) \in \calL^{A_k}} x_{ij}^{A_k} U_{ij}(k).
            \end{align}
        
        \EndFor
    \end{algorithmic}
\end{algorithm}

\begin{algorithm}
    \caption{$\FCFSRR{a}$.}
    \label{alg:routing}
    \begin{algorithmic}[1]
        \For{each arrival of a type-$i$ customer}
            \State Assign customer to the virtual queue of server $j$ with probability $x_{ij}^a/\lambda_i$.
            \State If the server is idle, start service. 
        \EndFor
        \For{each service completion at server $j$}
            \State Obtain a payoff $Y_{ij}\sim\bern{\theta_{ij}}$, where $i$ is the departing customer's type.
            \State If the virtual queue of server $j$ is nonempty, start service of the customer at the head of the queue.
        \EndFor
    \end{algorithmic}
\end{algorithm}

The learning algorithm works as follows.
We maintain UCB indices for the average payoffs of lines~$(ij)\in\calL$ and use these to compute UCB indices for each action~$a\in\calA$.
At the start of each episode, we choose the action~$a\in\calA$ with the highest UCB index. During the episode, we route customers according to the rates~$x^a\in\R_{\geq 0}^L$ using the \gls{FCFSRR} scheme in Algorithm~\ref{alg:routing}.
We use virtual queues to implement the \gls{FCFSRR} scheme. 
For each departure of a type-$i$ customer at server~$j$, we obtain a payoff sample from a Bernoulli distribution with parameter~$\theta_{ij} \in [0,1]$.
At the end of each episode, we use the obtained payoff samples to update the UCB indices. 
Note that customers may be reallocated to a different virtual queue when a change in action occurs at episode transitions. 
Episode $k$ has a predetermined length $H_k\in\R_{> 0}$ where $H_k$ is a fixed constant that may depend on $k$. \\

For each line $(ij)\in\calL$, we keep track of $T_{ij}(k)$, the total number of departures of type-$i$ customers at server $j$ up to episode number~$k\in\N_{\geq 1}$. 
Similarly, we keep track of the empirical mean~$\hat{\theta}_{ij}(k)$ of type-$(ij)$ payoffs and a UCB estimator $U_{ij}(k)$. 
The counters and empirical means are both initialized at zero while the UCB estimator is initialized at infinity as an incentive for the algorithm to obtain at least one sample from each line.
The number of type-$(ij)$  departures in episode $k$, $N_{ij}^k$, in line \ref{line:observe_payoffs} is random due to the queueing dynamics.

\subsection{Regret upper bound}
Theorem~\ref{thm:regret_ub} gives an asymptotic upper bound on the regret as defined in~\eqref{eq:regret} of Algorithm~\ref{alg:learning_alg}.

We denote the minimal positive rate on a line across any action as $\xmin := \min_{a\in\calA} \min_{(ij)\in\calL^a} x_{ij}^a$. 
Moreover, $\tilde{\lambda}_j^a := \sum_{i\in\calC_j} x_{ij}^a$ denotes the total arrival rate of customers at server $j$ under action $a$ and 
\begin{align}
    \rho_j^a := \tilde{\lambda}^a_j / \mu_j \in (0,1)
    \label{eq:def_rho_j}
\end{align} 
denotes the load of server $j$ under action $a$, for each $j\in\calJ$ and $a\in\calA$. 

\begin{theorem}
    \label{thm:regret_ub}
    Let $\beta \in\R_{> 1}$ and let $\alpha \in\R_{\geq 1}$ satisfy 
    \begin{align}
        \label{eq:c0_choice}
        \alpha &\geq \max_{j\in\calJ}\max_{a\in\calA} \Bigl\{\frac{3(\tilde{\lambda}_j^a+\mu_j) \ln(\rho_j^a) - 2\sqrt{(\mu_j-\tilde{\lambda}_j^a)^2 + 9\tilde{\lambda}_j^a\mu_j \ln^2(\rho_j^a)}}{2(\mu_j-\tilde{\lambda}_j^a)^2 \ln(\rho_j^a)}, 1\Bigr\}.
    \end{align}
    Let $H_0\in\R_{\geq 1}$ and the episode lengths $H_k$, $k=1,2,\dots$, satisfy 
    \begin{align}
        \label{eq:ucbqr_episode_length}
        H_k = \tau_k + H_0 \quad \textrm{with} \quad  
        \tau_k = \alpha \ln^\beta(2Jk) \quad
        \textrm{and} \quad 
        H_0 \geq \max\Bigl\{\frac{4}{\xmin},1\Bigr\}.
    \end{align}
    Lastly, let $\theta\in\R_{\geq 0}^L$ be a payoff vector.
    Then, Algorithm~\ref{alg:learning_alg} satisfies~\eqref{eq:consistency}.
    Moreover, the regret $R^\theta(t)$ of Algorithm~\ref{alg:learning_alg} satisfies
    \begin{align}
        \lim_{t\to\infty} \frac{R^\theta(t)}{\alpha \ln^{2\beta}(t)} 
        &\leq \sum_{(k\ell)\in O^c(\theta)}\phi_{k\ell}^\theta (\mu_\ell-\eps)|\calA| + \sum_{j\in\calJ} w_j^\theta(\mu_j-\eps)|\calA|,
        \label{eq:ucbqr_regret_ub}
    \end{align}
    where  $w^\theta\in\R_{\geq 0}^J$ is the optimal dual variable of \dual{\theta,\eps} in~\eqref{eq:LP_eps_dual} and $\phi^\theta \in \R_{\geq 0}^L$ is defined in~\eqref{eq:phi_def}. 
\end{theorem}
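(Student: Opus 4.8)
The plan is to prove the two claims of the theorem separately: first that Algorithm~\ref{alg:learning_alg} is consistent in the sense of~\eqref{eq:consistency}, and then the regret upper bound~\eqref{eq:ucbqr_regret_ub}. The natural strategy is to start from the regret decomposition in Lemma~\ref{lem:regret_decomp}, which splits $R^\theta(t)$ into the suboptimal-line term I, the server-slack term II, and the queue-length term III. Since $\phi_{k\ell}^\theta > 0$ only on suboptimal lines and $w_j^\theta \geq 0$, the target bound in~\eqref{eq:ucbqr_regret_ub} essentially requires controlling $\E(D_{k\ell}(t))$ for suboptimal lines (term I) and $t(\mu_j-\eps) - \sum_{i\in\calC_j}\E(D_{ij}(t))$ (term II), while showing term III is asymptotically negligible after division by $\ln^{2\beta}(t)$. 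The factor $(\mu_\ell - \eps)|\calA|$ and $(\mu_j-\eps)|\calA|$ appearing in the bound strongly suggests the core estimate is: the expected number of episodes in which a suboptimal action $a$ is selected is $\calO(\ln^{2\beta}(\cdot))$, and during each such episode the number of departures on any line is at most of order $(\mu_j-\eps)$ times the episode length.

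First I would reduce everything to bounding the expected number of episodes, up to time $t$, in which a suboptimal action is chosen. Because episode lengths $H_k = \tau_k + H_0$ grow like $\alpha\ln^\beta(2Jk)$, the number of episodes $K(t)$ completed by time $t$ scales like $t/\ln^\beta(t)$, so $\ln(K(t)) \sim \ln(t)$; combined with a per-action episode count of order $\ln^\beta$ this produces the $\ln^{2\beta}(t)$ rate. The key probabilistic step is the standard UCB argument: a suboptimal action $a$ is chosen at episode $k$ only if its UCB index $U^a(k-1)$ exceeds that of the optimal action, which forces at least one contributing line to have an anti-concentrated empirical mean. I would bound the probability of such a deviation using Hoeffding/Chernoff concentration for the Bernoulli payoffs, exactly as in~\citep{Auer2002a}, but with the crucial complication that the number of samples $T_{ij}(k)$ is \emph{random} and driven by the queueing dynamics rather than being deterministically tied to the episode index.

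\emph{The hard part will be} handling this random sample count, and this is where the queueing analysis enters. The concentration bounds only bite once each active line has accumulated enough departures, so I must show that with high probability each episode of length $H_k$ yields roughly $x_{ij}^a H_k$ departures on line $(ij)$. To do this I would split each episode into a transient phase and a stationary phase: after an action switch the virtual queues are reinitialized (lines~\ref{line:reallocation1}--\ref{line:reallocation2}) and the queue-length process is far from its stationary law, so I would invoke convergence-to-stationarity results for the $\FCFSRR{a}$ process (a product of independent stable $M/M/1$-type queues with loads $\rho_j^a < 1$) to show the transient phase has length $\calO(\ln^\beta)$ with overwhelming probability. The condition~\eqref{eq:c0_choice} on $\alpha$ is precisely what guarantees the tail of the hitting/mixing time decays fast enough, via a Chernoff bound on the queue length involving $\ln(\rho_j^a)$; the choice $\tau_k = \alpha\ln^\beta(2Jk)$ with $\beta>1$ then makes the failure probabilities summable in $k$, so the Borel--Cantelli-type contributions to the expected regret are finite and wash out after dividing by $\ln^{2\beta}(t)$. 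The requirement $H_0 \geq 4/\xmin$ ensures that even in the stationary phase the expected number of samples on the least-loaded line exceeds one, closing the gap between "enough samples collected" and "concentration applies."

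Once these ingredients are assembled, I would verify consistency~\eqref{eq:consistency} as a corollary: for an optimal line $(ij)\in O(\theta)$, the optimal action is chosen in all but $\calO(\ln^{2\beta})$ episodes, so $D_{ij}(t)$ grows linearly in $t$ up to lower-order terms, hence $\P^\theta(D_{ij}(t) < b\ln(t)) \to 0$ for every $b>0$. Finally I would collect the episode counts into the decomposition~\eqref{eq:regret_decomp}: term I contributes $\sum_{(k\ell)\in O^c(\theta)}\phi_{k\ell}^\theta(\mu_\ell-\eps)|\calA|$ after bounding departures on suboptimal lines by their workload cap during suboptimal episodes, term II contributes the $\sum_{j\in\calJ} w_j^\theta(\mu_j-\eps)|\calA|$ term analogously from the unused server capacity accrued during suboptimal or transient episodes, and term III (which may even be negative, as noted after Lemma~\ref{lem:regret_decomp}) is shown to be $o(\ln^{2\beta}(t))$ using stability~\eqref{eq:def_stability} and the bounded expected queue length in the stationary phase. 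Dividing by $\alpha\ln^{2\beta}(t)$ and taking $t\to\infty$ then yields~\eqref{eq:ucbqr_regret_ub}.
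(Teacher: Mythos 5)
Your proposal follows essentially the same route as the paper's proof: the regret decomposition of Lemma~\ref{lem:regret_decomp}, reduction to bounding the expected number of suboptimal episodes (order $\ln^\beta$ in the episode index, which multiplied by the episode length $\alpha\ln^\beta$ yields the $\ln^{2\beta}(t)$ rate), a UCB/Hoeffding argument whose random sample counts are controlled by a transient-versus-stationary split with coupling and hitting-time bounds (this is exactly where \eqref{eq:c0_choice} and $\beta>1$ enter), consistency derived from the same episode-counting machinery, and term III dispatched by positive recurrence of the virtual queues. The only slips are cosmetic: your opening sentence misstates the suboptimal-episode count as $\calO(\ln^{2\beta})$ (corrected later to $\ln^\beta$ per action), and $H_0\geq 4/\xmin$ actually serves to make the Poisson-tail failure probability decay like $k^{-2}$ in the sufficient-samples lemma, not to guarantee one expected sample per stationary phase.
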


The regret upper bound in Theorem~\ref{thm:regret_ub} provides a theoretical guarantee on the performance of Algorithm~\ref{alg:learning_alg}. 
It shows that the regret scales with rate at most $\alpha \ln^{2\beta}(t)$ as the time horizon $t$ grows large. 
Note that this matches with the regret lower bound in Theorem~\ref{thm:regret_lb} up to logarithmic terms. 
From the right side of \eqref{eq:ucbqr_regret_ub}, we observe that the upper bound increases with the cardinality of the action set $|\calA|$, this will be discussed in Section~\ref{sec:disc_transfer_learning}. 
Moreover, the regret for each suboptimal action is upper bounded by the instant regret of suboptimal lines in the set $O^c(\theta)$ (recall \eqref{eq:def_suboptimal_lines_theta}) multiplied by the maximal expected departure rate at server $j$, which is $\mu_j-\eps$ by construction. 
For a suboptimal line $(ij)\in O^c(\theta)$, the instant regret is expressed by the suboptimality gap $\phi_{ij} \in \R_{> 0}$  in \eqref{eq:phi_def}.
The last term in~\eqref{eq:ucbqr_regret_ub} is related to the queueing regret: it represents the regret obtained at the start of optimal episodes when the queue length process is not necessarily close to its stationary behavior. 
We outline the proof of Theorem~\ref{thm:regret_ub} in Section~\ref{sec:ucbqr_proof_outline}.
The full proof is given in Appendix~\ref{app:proof_ub}.

\subsubsection{Episode length}
In order to analyze the regret of the algorithm, we provide lower bounds on the number of departures for each line.
The analysis is challenging since the routing scheme changes at the start of each episode, hence the starting state of the queueing system is not necessarily close to its stationary behavior, as illustrated in Figure~\ref{fig:stationary_episodes}.
This motivates the introduction of a \emph{warmup time} $\tau_k$ in the construction of the episode length $H_k$ in \eqref{eq:ucbqr_episode_length}. The warmup time is followed by a period $H_0$ of fixed length.

In the analysis of the algorithm, we provide a lower bound on the probability of convergence of the queue length process to its stationary behavior within the warmup time.
Constraint \eqref{eq:c0_choice} guarantees that this bound is sharp enough. 
The warmup time $\tau_k$ is increasing in $k$, so that this probability converges to 1 at the right speed in terms of the episode number~$k$.

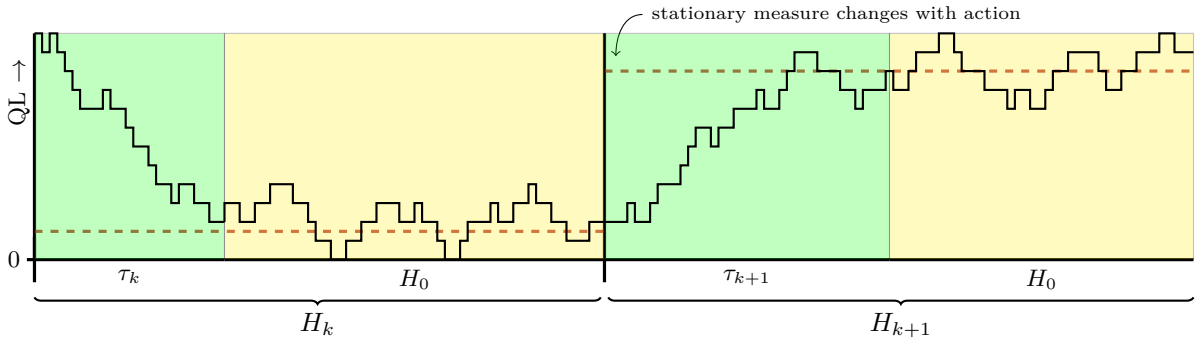
\begin{figure}[H]
    \begin{center}
    \begin{tikzpicture}[scale=.5]
        \draw[very thick, black] (0,0) -- (0,6) node[pos=0.9,left,rotate=90,yshift=.2cm,xshift=.1cm] {\small QL $\rightarrow$};
        \draw[very thick, black] (0,0) -- (5,0) node[pos = 0.5, below] {\small $\tau_k$};
        \draw[very thick, black] (5,0) -- (15,0) node[pos = 0.5, below] {\small $H_0$};
        \draw[very thick, black] (15,0) -- (22.5,0) node[pos = 0.5, below] {\small $\tau_{k+1}$};
        \draw[very thick, black] (22.5,0) -- (30.5,0) node[pos = 0.5, below] {\small $H_0$};
        \draw[very thick] (0,-.6) -- (0,.6);
        \draw[very thick] (-.2,0) -- node[left] {\small 0} (0,0);
        \draw[very thick] (15,-.6) -- (15,6);
        \draw[darkred,opacity=.7,very thick,dashed] (0,.75) --++ (15,0);
        \draw[darkred,opacity=.7,very thick,dashed] (15,5) --++ (15.5,0);
        \node[draw = none,right] at (16,6.5) (labelaction) {\scriptsize stationary measure changes with action};
        \draw[<-,black] (15.3,5.4) to [in=180,out=90] (labelaction);
        \draw[fill=green,opacity=0.25] (0,0) rectangle ++(5,6);
        \draw[fill=yellow,opacity=0.25] (5,0) rectangle ++(10,6);
        \draw[fill=green,opacity=0.25] (15,0) rectangle ++(7.5,6);
        \draw[fill=yellow,opacity=0.25] (22.5,0) rectangle ++(8,6);
        \draw[thick] (0,6) -- (0.2,6) -- (0.2,5.5) -- (0.4,5.5) -- (0.4,6) -- (0.6,6) -- (0.6,5.5) -- (0.8,5.5) -- (0.8,5) -- (1,5) -- (1,4.5) -- (1.2,4.5) -- (1.2,4) -- (1.8,4) -- (1.8,4.5) -- (2,4.5) -- (2,4) -- (2.4,4) -- (2.4,3.5) -- (2.6,3.5) -- (2.6,3) -- (3,3) -- (3,2.5) -- (3.2,2.5) -- (3.2,2) -- (3.6,2) -- (3.6,1.5) -- (3.8,1.5) -- (3.8,2) -- (4.2,2) -- (4.2,1.5) -- (4.6,1.5) -- (4.6,1) --  (5,1);
        \draw[thick] (5,1) --++ (0,.5) --++ (.4,0) --++ (0,-.5) --++ (.4,0) --++ (0,.5) --++ (.4,0) --++ (0,.5) --++ (.6,0) --++ (0,-.5) --++ (.4,0) --++ (0,-.5) --++ (.2,0) --++ (0,-.5) --++ (.4,0) --++ (0,-.5) --++ (.4,0) --++ (0,.5) --++ (.4,0) --++ (0,.5) --++ (.4,0) --++ (0,.5) --++ (.6,0) --++ (0,-.5) --++ (.4,0) --++ (0,.5) --++ (.2,0) --++ (0,-.5) --++ (.4,0) --++ (0,-.5) --++ (.2,0) --++ (0,-.5) --++ (.4,0)--++ (0,.5) --++ (.2,0) --++ (0,.5) --++ (.6,0) --++ (0,.5) --++ (.2,0) --++ (0,-.5) --++ (.4,0) --++ (0,.5) --++ (.4,0) --++ (0,.5) --++ (.2,0) --++ (0,-.5) --++ (.4,0) --++ (0,-.5) --++ (.4,0) --++ (0,-.5) --++ (.6,0) --++ (0,.5) --++ (.4,0);
        \draw[thick] (15,1) --++ (.2,0) --++ (.4,0) --++ (0,.5) --++ (.2,0) --++ (0,-.5) --++ (.4,0) --++ (0,.5) --++ (.2,0) --++ (0,.5) --++ (.6,0) --++ (0,.5) --++ (.2,0) --++ (0,.5) --++ (.2,0) --++ (0,.5) --++ (.4,0) --++ (0,-.5) --++ (.2,0) --++ (0,.5) --++ (.4,0) --++ (0,.5) --++ (.6,0) --++ (0,.5) --++ (.2,0) --++ (0,-.5) --++ (.4,0) --++ (0,.5) --++ (.2,0) --++ (0,.5) --++ (.2,0) --++ (0,.5) --++ (.6,0)--++ (0,-.5) --++ (.6,0) --++ (0,-.5) --++ (.4,0) --++ (0,-.5) --++ (.2,0) --++ (0,.5) --++ (.6,0) --++ (0,.5) --++ (.2,0) --++ (0,-.5) --++ (.4,0) --++ (0,.5) --++ (.2,0) --++ (0,.5) --++ (.6,0) --++ (0,.5) --++ (.4,0) --++ (0,-.5) --++ (.2,0) --++ (0,-.5) --++ (.6,0) --++ (0,-.5) --++ (.6,0) --++ (0,-.5) --++ (.2,0) --++ (0,.5) --++ (.4,0) --++ (0,-.5) --++ (.4,0) --++ (0,.5) --++ (.2,0) --++ (0,.5) --++ (.4,0) --++ (0,.5) --++ (.6,0) --++ (.2,0) --++ (0,-.5) --++ (.2,0) --++ (0,-.5) --++ (.4,0) --++ (0,.5) --++ (.4,0) --++ (0,.5) --++ (.6,0) --++ (0,.5) --++ (.4,0) --++ (0,-.5) --++ (.5,0);
    
        \draw [thick,decoration={brace,mirror,raise=0.5cm},decorate] (0,0) -- (14.9,0) node [midway,anchor=north,yshift=-.57cm] {$H_k$};
        \draw [thick,decoration={brace,mirror,raise=0.5cm},decorate] (15.1,0) -- (30.5,0) node [midway,anchor=north,yshift=-.57cm] {$H_{k+1}$};
    \end{tikzpicture}
    
	\end{center}
    \caption{Possible realization of the queue length process of the virtual queue of server $j\in\calJ$ under Algorithm~\ref{alg:learning_alg} for two episodes. Episode $k$ consists of a warmup time $\tau_k$ followed by a period of fixed length $H_0$.  Every episode, the algorithm can choose a different action with its own stationary measure.}
    \label{fig:stationary_episodes}
\end{figure}

\subsection{Proof outline}
\label{sec:ucbqr_proof_outline}
We sketch the proof of Theorem~\ref{thm:regret_ub}, highlighting the key contributions. 
The proof can be split into four steps. 
We shortly explain these steps, followed by a more in-depth elaboration per step. 
Without loss of generality, we label the first action as the unique optimal action, i.e., $\argmax_{a\in\calA} r^a = 1$. 
\begin{enumerate}
    \item[Step 1.] {\it Analyzing the number of departures in \gls{FCFSRR}.} 
    For the proof of Theorem~\ref{thm:regret_ub}, we need to show that Algorithm~\ref{alg:learning_alg} learns the payoff parameters sufficiently fast. 
    Since payoff samples are collected upon departures, we need to prove a lower bound on the number of departures in the queueing system. 
    We do so via an intermediate step: 
    we consider the scenario where the system is initially empty at the start of an episode, which represents a `worst case' scenario. 
    Next, we show that this worst case queue length process reaches stationarity within warmup time $\tau_m$ with sufficiently high probability (Lemma~\ref{lem:episode_not_mixed}).
    \item[Step 2.] {\it Bounding the probability of choosing a suboptimal action.}
    We note that Algorithm~\ref{alg:learning_alg} chooses a suboptimal action in Line \ref{line:maxucb}  only if the UCB index of the suboptimal action in \eqref{eq:UCB_def_action} is at least as high as the UCB index of the optimal action. 
    Broadly speaking, this can happen if
    (A) the index of the suboptimal action \emph{overestimates} its true mean, or
    (B) the index of the optimal action \emph{underestimates} its true mean.
    Compared to the classical \gls{MAB} analysis in \citep{Lattimore2020}, event (A) is more complicated to analyze in our model, since the number of payoff samples we obtain within one episode is stochastic rather than deterministic as in the classical \gls{MAB} setting. 
    We show that event (A) is unlikely by showing that on the one hand, overestimation based on 'sufficient' samples is unlikely and on the other hand, it is unlikely to obtain 'insufficient' samples. Here, the term 'sufficient' is carefully constructed, as discussed below. 
    Event (B) is unlikely by construction of the UCB estimators.
    We bound the probability of event (A) in Lemma~\ref{lem:prob_insuf_samples} and~\ref{lem:bnd_overestimation_suf_samples} and event (B) in Lemma~\ref{lem:prob_underestimation}.
    \item[Step 3.] {\it Bounding the number of suboptimal episodes.} 
    We use the bounds from Step 2 to bound the number of episodes where Algorithm~\ref{alg:learning_alg} chooses a suboptimal action in Lemma~\ref{lem:n_bad_episodes}.
    \item[Step 4.] {\it Bounding the regret of Algorithm~\ref{alg:learning_alg}.} 
    Finally, we prove Theorem~\ref{thm:regret_ub} in Appendix~\ref{app:proof_ub} using the regret decomposition~\eqref{eq:regret_decomp}. 
    Term I is the main contributing factor since it measures the regret accumulated by using suboptimal lines. We bound this term using Lemma~\ref{lem:n_bad_episodes}.
    Term II is bounded by analyzing the queue length behavior in episodes where the algorithm chooses the optimal action.
    Lastly, we bound term III using the properties of \gls{FCFSRR} and the constraints in~\LP{\theta,\eps}. 
\end{enumerate}

Let us next describe Steps 1-4 in more detail. 

\subsubsection{Step 1. Analyzing the number of departures in FCFS RR}
\label{sec:step1}
We consider the queue length process $Q_j^{mA_m}(t)$, $t\in [0,H_m)$ of the virtual queue of server $j\in\calJ$ during episode $m\in\N_{\geq 1}$, where $A_m$ denotes the action chosen by Algorithm~\ref{alg:learning_alg} in Line~\ref{line:maxucb}.
Within an episode, Algorithm~\ref{alg:learning_alg} routes customers according to the fixed \gls{FCFSRR} policy in Algorithm~\ref{alg:routing}. 
Since arrival processes are independent across customer types, we have by the Poisson split and merge properties \citep{Cinlar1968} that the arrival process of $Q_j^{mA_m}(t)$ is a Poisson process with rate $\sum_{i\in\calI} x_{ij}^{A_m}$.
This implies that $Q_j^{mA_m}(t)$ behaves as an $M/M/1$ queueing system independently from other servers.
Recall that $x^{A_m}$ is a basic feasible solution of $\LP{\theta,\eps}$ in \eqref{eq:LP_eps}, so by constraint \eqref{eq:LP_eps_constr_mu}, we have $\sum_{i\in\calC_j} x_{ij}^{A_m} \leq \mu_j - \eps < \mu_j$.
Therefore, $\rho_j^{A_m} = \sum_{i\in\calI} x_{ij}^{A_m} / \mu_j < 1$ and hence  $Q_j^{mA_m}(t)$ is positive recurrent \citep{Cohen1981}, and its stationary distribution is given by 
\begin{align}
    p_j^{A_m}(n) := \lim_{t\to\infty} \P(Q_j^{mA_m}(t) = n) = (1-\rho_j^{A_m})(\rho_j^{A_m})^n.
    \label{eq:queue_statdistr}
\end{align}

For each server, we associate two new queue length processes $\hat{Q}_j^{mA_m}(t)$ and $\underline{Q}_j^{mA_m}(t)$
where the initial value $\hat{Q}_j^{mA_m}(0)$ is sampled from the stationary measure $p_j^{A_m}$
and $\underline{Q}_j^{mA_m}(0) = 0$.
We couple the processes $Q_j^{mA_m}(t)$, $\hat{Q}_j^{mA_m}(t)$, and $\underline{Q}_j^{mA_m}$ in the sense that the sampled arrival times and service completions (if possible) are the same for all processes, as illustrated in Figure~\ref{fig:coupling}.

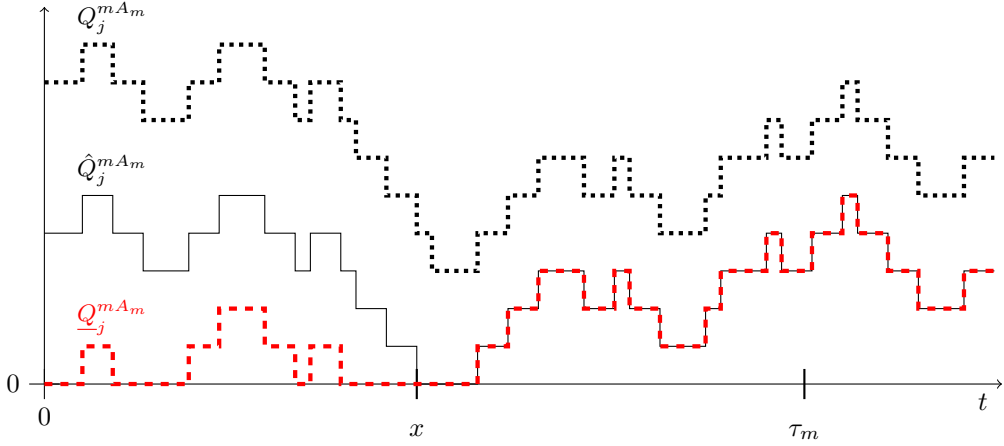
\begin{figure}[H]
    \centering
    \begin{tikzpicture}[scale=1]
        \draw[black, ->] (0,0) -- (0,5);
        \draw[black, ->] (0,0) -- (12.6,0) node[pos = 0.98, below] {$t$};
        \draw (0,-.2) -- node[below,yshift=-.2cm] {0} (0,.2);
        \draw (-.2,0) -- node[left,xshift=-.2cm] {0} (.2,0);
        \draw[thick] (10,-.2)  node[below,yshift=-.2cm]  {$\tau_m$} --++ (0,.4);
        \draw[thick] (4.9,-.2) node[below,yshift=-.2cm]  {$x$}  --++(0,.4);
        \draw[ultra thick,dotted]  (0,4)  --++ (.5,0) --++ (0,.5) --++ (.4,0) node[above] {\small $Q_j^{mA_m}$} --++ (0,-.5) --++ (.4,0) --++(0,-.5) --++ (.6,0) --++ (0,.5) --++ (.4,0) --++ (0,.5) --++ (.6,0) --++ (0,-.5) --++ (.4,0) --++ (0,-.5) --++ (.2,0) --++ (0,.5) --++ (.4,0) --++ (0,-.5) --++ (0.2,0) --++ (0,-.5) --++ (.4,0) --++ (0,-.5) --++ (.4,0) --++ (0,-.5) --++ (0.2,0) --++ (0,-.5) --++ (.6,0) --++ (0,.5) --++ (.4,0) --++ (0,.5) --++ (.4,0) --++ (0,.5) --++ (.6,0) --++ (0,-.5) --++ (.4,0) --++ (0,.5) --++ (.2,0) --++ (0,-.5) --++ (.4,0) --++ (0,-.5) --++ (.2,0)  --++ (.4,0)--++ (0,.5) --++ (.2,0) --++ (0,.5) --++ (.6,0) --++ (0,.5) --++ (.2,0) --++ (0,-.5) --++ (.4,0) --++ (0,.5) --++ (.4,0) --++ (0,.5) --++ (.2,0) --++ (0,-.5) --++ (.4,0) --++ (0,-.5) --++ (.4,0) --++ (0,-.5) --++ (.6,0) --++ (0,.5) --++ (.4,0);
        \draw  (0,2)  --++ (.5,0) --++ (0,.5) --++ (.4,0) node[above] {\small $\hat{Q}_j^{mA_m}$} --++ (0,-.5) --++ (.4,0) --++(0,-.5) --++ (.6,0) --++ (0,.5) --++ (.4,0) --++ (0,.5) --++ (.6,0) --++ (0,-.5) --++ (.4,0) --++ (0,-.5) --++ (.2,0) --++ (0,.5) --++ (.4,0) --++ (0,-.5) --++ (0.2,0) --++ (0,-.5) --++ (.4,0) --++ (0,-.5) --++ (.4,0) --++ (0,-.5) --++ (0.2,0)  --++ (.6,0) --++ (0,.5) --++ (.4,0) --++ (0,.5) --++ (.4,0) --++ (0,.5) --++ (.6,0) --++ (0,-.5) --++ (.4,0) --++ (0,.5) --++ (.2,0) --++ (0,-.5) --++ (.4,0) --++ (0,-.5) --++ (.2,0)  --++ (.4,0)--++ (0,.5) --++ (.2,0) --++ (0,.5) --++ (.6,0) --++ (0,.5) --++ (.2,0) --++ (0,-.5) --++ (.4,0) --++ (0,.5) --++ (.4,0) --++ (0,.5) --++ (.2,0) --++ (0,-.5) --++ (.4,0) --++ (0,-.5) --++ (.4,0) --++ (0,-.5) --++ (.6,0) --++ (0,.5) --++ (.4,0);
        \draw[ultra thick, red, dashed]  (0,0)  --++ (.5,0) --++ (0,.5)   --++ (.4,0) node[above] {\small $\underline{Q}_j^{mA_m}$} --++ (0,-.5) --++ (.4,0) --++ (.6,0) --++ (0,.5) --++ (.4,0) --++ (0,.5) --++ (.6,0) --++ (0,-.5) --++ (.4,0) --++ (0,-.5) --++ (.2,0) --++ (0,.5) --++ (.4,0) --++ (0,-.5) --++ (0.2,0) --++ (1.6,0)--++ (0,.5) --++ (.4,0) --++ (0,.5) --++ (.4,0) --++ (0,.5) --++ (.6,0) --++ (0,-.5) --++ (.4,0) --++ (0,.5) --++ (.2,0) --++ (0,-.5) --++ (.4,0) --++ (0,-.5) --++ (.2,0)  --++ (.4,0)--++ (0,.5) --++ (.2,0) --++ (0,.5) --++ (.6,0) --++ (0,.5) --++ (.2,0) --++ (0,-.5) --++ (.4,0) --++ (0,.5) --++ (.4,0) --++ (0,.5) --++ (.2,0) --++ (0,-.5) --++ (.4,0) --++ (0,-.5) --++ (.4,0) --++ (0,-.5) --++ (.6,0) --++ (0,.5) --++ (.4,0);
    \end{tikzpicture}
    \caption{A possible realization of the queue length processes $Q_j^{mA_m}(t)$, $\hat{Q}_j^{mA_m}(t)$, and $\underline{Q}_j^{mA_m}(t)$. 
    Customer arrivals lead to a unit increase and service completions to a unit decrease (unless the queue is empty).
    At hitting time $x$ the processes $\hat{Q}_j^{mA_m}(t)$ and $\underline{Q}_j^{mA_m}(t)$ collide and evolve identically afterwards. }
    \label{fig:coupling}
\end{figure}

Note that if $\hat{Q}_j^{mA_m}(t)$ and $\underline{Q}_j^{mA_m}(t)$ collide, then the departure process of $\underline{Q}_j^{mA_m}(t)$ is stationary afterwards. 
Lemma~\ref{lem:episode_not_mixed} provides a bound on the hitting time of $\hat{Q}_j^{mA_m}(t)$ and $\underline{Q}_j^{mA_m}(t)$. 
The proof in Appendix~\ref{app:episodes_not_mixed} relies on hitting time analysis of an $M/M/1$ queueing system. 
Lemma~\ref{lem:episode_not_mixed} extends the result of \cite[Proposition 4]{Jia2022}, in the sense that we provide a sharper bound by exploiting the known formula for the moment generating function of hitting times in an $M/M/1$ system. We also provide exact values for the constants in our result, which are not provided in \citep{Jia2022}.

\begin{lemma}
    \label{lem:episode_not_mixed}
    Let  $\beta \in\R_{> 1}$ and let $m\in\N_{\geq 1}$ satisfy $m\geq C_\beta$ with 
    \begin{align}
        C_\beta := \frac{1}{2J} \exp\bigl(\beta^{\frac{1}{\beta-1}}\bigr).
        \label{eq:def_Cbeta}
    \end{align}
    For any $t\geq \tau_m$, we have
    $
        \P(\underline{Q}_j^{mA_m}(t) = \hat{Q}_j^{mA_m}(t), \ \forall j\in\calJ) \geq 1-1/m^\beta.
    $
\end{lemma}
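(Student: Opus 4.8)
The plan is to reduce the coalescence event to a first-passage problem for a single $M/M/1$ queue and then control its tail with a Chernoff bound built from the moment generating function (MGF) of the busy period. First I would exploit the coupling. Since $\underline{Q}_j^{mA_m}(0)=0\le\hat{Q}_j^{mA_m}(0)$ and the coupled processes share the same arrival epochs and potential service completions, the order $\underline{Q}_j^{mA_m}(t)\le\hat{Q}_j^{mA_m}(t)$ is preserved for all $t$: arrivals raise both by one, a service completion lowers both by one when both are positive, and it lowers only $\hat{Q}_j^{mA_m}$ when $\underline{Q}_j^{mA_m}=0$. Hence the gap $\hat{Q}_j^{mA_m}-\underline{Q}_j^{mA_m}$ is nonincreasing and shrinks only while $\underline{Q}_j^{mA_m}=0$, so it reaches $0$ precisely when $\hat{Q}_j^{mA_m}$ first hits $0$. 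Writing $\hat{\tau}_j:=\inf\{t\ge0:\hat{Q}_j^{mA_m}(t)=0\}$, the two chains coalesce at $\hat{\tau}_j$ and stay equal afterwards, so for $t\ge\tau_m$ a union bound gives $\P(\underline{Q}_j^{mA_m}(t)=\hat{Q}_j^{mA_m}(t)\ \forall j)\ge 1-\sum_{j\in\calJ}\P(\hat{\tau}_j>\tau_m)$, and it suffices to prove $\sum_{j\in\calJ}\P(\hat{\tau}_j>\tau_m)\le m^{-\beta}$.

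Next I would compute the MGF of $\hat{\tau}_j$. Recall from Step~1 that $\hat{Q}_j^{mA_m}$ is an $M/M/1$ queue with arrival rate $\tilde{\lambda}_j^{A_m}$, service rate $\mu_j$ and load $\rho_j^{A_m}<1$, started from its geometric stationary law. By spatial homogeneity above state $0$ and the strong Markov property, the first passage from level $n$ to $0$ is a sum of $n$ independent busy periods with common MGF
\[
B_j(s)=\frac{(\tilde{\lambda}_j^{A_m}+\mu_j-s)-\sqrt{(\tilde{\lambda}_j^{A_m}+\mu_j-s)^2-4\tilde{\lambda}_j^{A_m}\mu_j}}{2\tilde{\lambda}_j^{A_m}}.
\]
Averaging $B_j(s)^n$ over the geometric initial level and summing the geometric series yields $\E[e^{s\hat{\tau}_j}]=(1-\rho_j^{A_m})/(1-\rho_j^{A_m}B_j(s))$ whenever $\rho_j^{A_m}B_j(s)<1$. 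I would check that this holds throughout $s\in(0,(\sqrt{\mu_j}-\sqrt{\tilde{\lambda}_j^{A_m}})^2]$, where $\rho_j^{A_m}B_j(s)\le\sqrt{\rho_j^{A_m}}<1$ and the prefactor is at most $1+\sqrt{\rho_j^{A_m}}<2$.

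Then I would apply Markov's inequality, giving $\P(\hat{\tau}_j>\tau_m)\le e^{-s\tau_m}(1-\rho_j^{A_m})/(1-\rho_j^{A_m}B_j(s))$ for admissible $s$, and calibrate the constants. The hypothesis $m\ge C_\beta$ is, via~\eqref{eq:def_Cbeta}, equivalent to $\ln^{\beta-1}(2Jm)\ge\beta$, so $\tau_m=\alpha\ln^\beta(2Jm)\ge\alpha\beta\ln(2Jm)$ and $e^{-s\tau_m}\le(2Jm)^{-s\alpha\beta}$. Choosing for each server the Chernoff parameter $s$ that the lower bound~\eqref{eq:c0_choice} on $\alpha$ is designed to accommodate forces $s\alpha\ge1$ while keeping $\rho_j^{A_m}B_j(s)<1$ and the prefactor below $2$; since $\beta>1$ and $2J\ge2$, this gives $e^{-s\tau_m}\le(2Jm)^{-\beta}\le m^{-\beta}/(2J)$, and multiplying by the prefactor yields $\P(\hat{\tau}_j>\tau_m)\le (Jm^\beta)^{-1}$. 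Summing over the $J$ servers produces $\sum_{j\in\calJ}\P(\hat{\tau}_j>\tau_m)\le m^{-\beta}$, which is the claim.

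I expect the main obstacle to be this last calibration step rather than the (routine) monotone coupling and union bound. One must pick $s$ so that the exponential decay $e^{-s\tau_m}$ beats $1/m^\beta$ while the MGF prefactor $(1-\rho_j^{A_m})/(1-\rho_j^{A_m}B_j(s))$ stays bounded and the admissibility $\rho_j^{A_m}B_j(s)<1$ is preserved. Verifying that the explicit threshold~\eqref{eq:c0_choice}---whose $\ln(\rho_j^{a})$ and discriminant terms encode exactly this trade-off between $s$, the load, and the service rate---indeed guarantees $s\alpha\ge1$ uniformly over all servers and actions is the real computational burden, and is precisely where the sharper form of the bound (improving \cite[Proposition 4]{Jia2022}, with explicit constants) is extracted from the busy-period MGF.
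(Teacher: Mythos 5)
Your proposal is correct, and its central estimate takes a genuinely different route from the paper's. The reduction is shared: under the coupling, the two processes coalesce exactly when $\hat{Q}_j^{mA_m}$ first hits zero, coalescence is absorbing, and a union bound over the $J$ servers reduces the claim to $\P(\hat{\tau}_j>\tau_m)\leq 1/(Jm^{\beta})$ per server (your $\hat{\tau}_j$ is the paper's $R_j^{mA_m}$). From there the paper argues in two stages: it truncates the geometric initial level at a threshold $s^{mA_m}=-\ln^{\beta}(2Jm)/\ln(\rho_j^{A_m})$, bounds the stationary tail beyond that level, and controls the hitting time \emph{conditionally} on the worst admissible starting level via the conditional MGF of Lemma~\ref{lem:q_hitting_time} with an interior-optimized tilt, finally multiplying the two error probabilities. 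You instead fold both sources of randomness into one computation: mixing the busy-period MGF over the geometric stationary law gives the closed form $\E[e^{s\hat{\tau}_j}]=(1-\rho)/(1-\rho B_j(s))$, you evaluate at the boundary tilt $s^*=(\sqrt{\mu_j}-\sqrt{\tilde{\lambda}_j^{A_m}})^2$ where $\rho B_j(s^*)=\sqrt{\rho}$ and the prefactor is $1+\sqrt{\rho}<2$, and a single Markov inequality finishes. This avoids the threshold level, the conditional-probability manipulations, and the monotonicity-in-initial-state argument entirely.

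The one step you defer---that \eqref{eq:c0_choice} forces $\alpha s^*\geq 1$ uniformly over servers and actions---does hold, and by a one-line argument rather than a heavy computation: \eqref{eq:c0_choice} guarantees $f(\alpha)\geq 1$ for $f(\gamma)=-\frac{1}{2}+\gamma\bigl(\tilde{\lambda}_j^a+\mu_j-\sqrt{(\gamma\ln\rho_j^a)^{-2}+4\tilde{\lambda}_j^a\mu_j}\bigr)$ (this is exactly what the paper's proof verifies), and since $\sqrt{(\gamma\ln\rho_j^a)^{-2}+4\tilde{\lambda}_j^a\mu_j}>2\sqrt{\tilde{\lambda}_j^a\mu_j}$, the inequality $f(\alpha)\geq 1$ yields $\alpha(\sqrt{\mu_j}-\sqrt{\tilde{\lambda}_j^a})^2\geq 3/2>1$. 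With that, your calibration $\P(\hat{\tau}_j>\tau_m)\leq 2e^{-s^*\tau_m}\leq 2(2Jm)^{-\beta}\leq 1/(Jm^{\beta})$ is exactly right, using that $m\geq C_\beta$ is equivalent to $\ln^{\beta-1}(2Jm)\geq\beta$ and that $\beta>1$. What each approach buys: the paper's two-stage argument is what motivates the particular constant in \eqref{eq:c0_choice}; your mixture argument is shorter, gives a cleaner constant prefactor, and in fact shows the lemma holds under the milder requirement $\alpha\geq\max_{j,a}(\sqrt{\mu_j}-\sqrt{\tilde{\lambda}_j^a})^{-2}$.
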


\subsubsection{Step 2. Bounding the probability of choosing a suboptimal action}\hspace{2em}
\begin{flushleft}
{\bf Bound on event (A).}
Note that the confidence bound in \eqref{eq:UCB_def_action} decreases with the number of obtained samples (departures).
This means that overestimation of the true mean is likely when the number of samples is small, but becomes less likely as the number of obtained samples increases.
We split the analysis into two parts. 
First, Lemma~\ref{lem:prob_insuf_samples} bounds the probability that Algorithm~\ref{alg:learning_alg} has not obtained sufficient samples after $C_\beta + \ln^\beta(k)$ episodes.
Here, we quantify `sufficient' by $\sigma_{ijk}^a$ in \eqref{eq:def_seq_u_eta_sig_lem} below.\\
\end{flushleft}

To provide the lemma, we introduce some notation. 
Let
\begin{align}
    \overline{\theta}_{ij}[s]&:= \frac 1s \sum_{\ell=1}^s Y_{ij}^\ell
\end{align}
denote the empirical average payoff of type-$(ij)$ departures based on $s$ samples.
Here, the $(Y_{ij}^\ell)_{\ell=1}^s$ denote i.i.d.\ random variables with distribution  $\bern{\theta_{ij}}$.
We let $D_{ij}^{mA_m}$ denote the number of type-$(ij)$ departures within episode $m\in\N_{\geq 1}$ \emph{but after warmup time $\tau_m$}, where $A_m$ denotes the action chosen by Algorithm~\ref{alg:learning_alg} in Line~\ref{line:maxucb}.
Moreover, for $a\in\calA$ let
\begin{align}
    \overline{T}_{ij}^a[s] := \sum_{m=1}^s D_{ij}^{ma}
    \label{eq:def_overline_T}
\end{align}
denote the total number of type-$(ij)$ departures (after warmup periods) after completing $s\in\N_{\geq 1}$ episodes where action $a$ was chosen.
We let $\overline{U}^a[s,m]$ denote the UCB index of action $a\in\calA$ at episode $m\in\N_{\geq 1}$ if action $a$ was chosen in $s\leq m$ episodes, 
\begin{align}\label{eq:Uestimator_episodes}
    \overline{U}^a[s,m] &:= \sum_{(ij)\in\calL^a} x_{ij}^a \Bigl(\overline{\theta}_{ij}\bigl[\overline{T}_{ij}^a[s]\bigr]  + \sqrt{\frac{ \ln(m)}{\overline{T}_{ij}^a[s]}}\Bigr) .
\end{align}
We  define for $k\in\N_{\geq 1}$,  $a\in\calA$, and $(ij)\in\calL^a$ the quantities
\begin{align}
    \label{eq:def_seq_u_eta_sig_lem}
    u_k &= \ln^\beta(k), \qquad
    \eta_k = \ln^{\frac{\beta+1}{2}}(k), \qquad
    \sigma_{ijk}^a = x_{ij}^a H_0 (u_k - 2\eta_k).
\end{align} 
We are interested in the event that the number of type-$(ij)$ departures under action $a$ after $\lceil C_\beta + u_k\rceil$ episodes is at least $\lceil\sigma_{ijk}^a\rceil$, i.e., 
\begin{align}
    \label{eq:def_event_E_lem}
    E_k^a &= \Bigl\{ \overline{T}_{ij}^a\bigl[\lceil C_\beta + u_k\rceil\bigr] \geq  \lceil\sigma_{ijk}^a\rceil, \ \forall (ij)\in\calL^a \Bigr\}.
\end{align}

\begin{lemma}
    \label{lem:prob_insuf_samples}
    Let $\beta\in\R_{> 1}$ and $a\in\calA$. Then $\lim_{k\to\infty} k \P((E_k^a)^c) = 0$.
\end{lemma}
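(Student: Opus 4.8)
The plan is to show that the probability of collecting too few post-warmup payoff samples decays faster than $1/k$. By a union bound over the finitely many lines in $\calL^a$, it suffices to prove $k\,\P\bigl(\overline T_{ij}^a[s] < \lceil\sigma_{ijk}^a\rceil\bigr)\to 0$ for each fixed $(ij)\in\calL^a$, where $s=\lceil C_\beta+u_k\rceil$; throughout I abbreviate $\Lambda:=x_{ij}^a H_0$. First I would reduce the actual departure counts to a stationary benchmark. Using the coupling of Step~1, the number of post-warmup type-$(ij)$ departures in the $m$-th play of $a$ dominates that of the empty-start process $\underline Q_j^{ma}$, and on the coupling event $M_m=\{\underline Q_j^{ma}(t)=\hat Q_j^{ma}(t)\ \forall j,\ t\geq\tau\}$ of Lemma~\ref{lem:episode_not_mixed}, the latter equals the count $\hat D_{ij}^{ma}$ produced by the stationary-start process $\hat Q_j^{ma}$ over the length-$H_0$ window. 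Since $\hat Q_j^{ma}$ starts in stationarity and behaves as an $M/M/1$ queue with type-independent service, Burke's theorem (quasi-reversibility) gives that $\hat D_{ij}^{ma}\sim\mathrm{Poisson}(\Lambda)$, independent across $m$ because distinct episodes are driven by independent arrival and service clocks. Hence $\overline T_{ij}^a[s]\geq\sum_{m=1}^{s}\hat D_{ij}^{ma}\,\mathbf 1[M_m]$.

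Next I would exploit the slack built into $\sigma_{ijk}^a$. The mean of the undisturbed stationary sum is $s\Lambda\geq(C_\beta+u_k)\Lambda$, while the threshold is $\sigma_{ijk}^a=\Lambda(u_k-2\eta_k)$, so the gap is at least $2\Lambda\eta_k$. Splitting it evenly,
\[
\P\Bigl(\sum_{m=1}^s\hat D_{ij}^{ma}\mathbf 1[M_m]<\sigma_{ijk}^a\Bigr)\leq\underbrace{\P\Bigl(\sum_{m=1}^s\hat D_{ij}^{ma}<s\Lambda-\Lambda\eta_k\Bigr)}_{B_1}+\underbrace{\P\Bigl(\sum_{m=1}^s\hat D_{ij}^{ma}\mathbf 1[M_m^c]>\Lambda\eta_k\Bigr)}_{B_2},
\]
so it remains to establish $k\,\P(B_1)\to 0$ and $k\,\P(B_2)\to 0$. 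For $B_1$ I would apply the Poisson lower-tail bound $\P(\mathrm{Poisson}(\nu)\leq\nu-d)\leq\exp(-d^2/(2\nu))$ with $\nu=s\Lambda$ and $d=\Lambda\eta_k$; since $\eta_k^2/u_k=\ln(k)$ by the choice in~\eqref{eq:def_seq_u_eta_sig_lem}, this yields $\P(B_1)\leq k^{-\Lambda/2+o(1)}$. Here the requirement $H_0\geq 4/\xmin$ from~\eqref{eq:ucbqr_episode_length} enters decisively: it forces $\Lambda=x_{ij}^a H_0\geq\xmin\cdot(4/\xmin)=4$, whence $\Lambda/2\geq 2$ and $k\,\P(B_1)\leq k^{-1+o(1)}\to 0$.

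For $B_2$ I would use an exponential Markov inequality: for any fixed $\gamma>0$, independence of episodes gives $\P(B_2)\leq e^{-\gamma\Lambda\eta_k}\prod_{m=1}^s\E\bigl[e^{\gamma\hat D_{ij}^{ma}\mathbf 1[M_m^c]}\bigr]$. The within-episode dependence between $\hat D_{ij}^{ma}$ and the coupling indicator $\mathbf 1[M_m^c]$ I would dissolve with Cauchy--Schwarz, $\E[e^{\gamma\hat D_{ij}^{ma}}\mathbf 1[M_m^c]]\leq\sqrt{\E[e^{2\gamma\hat D_{ij}^{ma}}]}\sqrt{\P(M_m^c)}$, bounding the first factor by the constant Poisson moment generating function and using $\P(M_m^c)\leq m^{-\beta}$ from Lemma~\ref{lem:episode_not_mixed} (the at most $\lceil C_\beta\rceil$ initial episodes, not covered by the lemma, contribute only a constant factor). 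The product is then bounded by $\exp\bigl(c\sum_{m=1}^s m^{-\beta/2}\bigr)$, which grows at most like $\exp(c\,\ln^{\max(0,\beta-\beta^2/2)}(k))$; since $(\beta+1)/2>\max(0,\beta-\beta^2/2)$ for every $\beta>1$, the tilt $e^{-\gamma\Lambda\eta_k}=\exp(-\gamma\Lambda\ln^{(\beta+1)/2}(k))$ dominates and $\P(B_2)$ decays faster than any power of $k$, giving $k\,\P(B_2)\to 0$.

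The main obstacle is precisely the coupling-loss term $B_2$. One cannot simply demand that all $s$ episodes couple, because $\sum_{m\geq 1}\P(M_m^c)\leq\zeta(\beta)$ is finite but does not vanish, so the event that \emph{some} episode fails to mix is not $o(1/k)$. The resolution is the super-logarithmic slack $2\eta_k=2\ln^{(\beta+1)/2}(k)$ deliberately inserted into $\sigma_{ijk}^a$: as $\beta>1$ makes $\eta_k$ grow faster than $\ln k$, the exponential tilt beats any polynomial, whereas the rarity of coupling failures keeps the moment-generating factor subexponential in $\ln k$. Aligning these two rates---a polynomial bound $k^{-\Lambda/2}$ for the Gaussian-type fluctuations in $B_1$ (where $H_0\geq 4/\xmin$ is essential) against a super-polynomial bound for $B_2$---is the crux, and the i.i.d.\ $\mathrm{Poisson}(\Lambda)$ structure obtained from Burke's theorem underpins both estimates.
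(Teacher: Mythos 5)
Your proposal is correct, and it departs from the paper's proof in a genuine way on the harder half of the argument. Both proofs share the same skeleton: dominate the true departure counts by stationary ones via the coupling, invoke Burke's theorem and Poisson splitting to obtain i.i.d.\ $\poi{x_{ij}^a H_0}$ samples across episodes, and spend the slack $2\eta_k$ built into $\sigma_{ijk}^a$ on two failure modes. Your term $B_1$ (fluctuations of the full stationary sum) is handled exactly as in the paper, via the Poisson lower-tail bound of Lemma~\ref{lem:pois_tail} together with $H_0\geq 4/\xmin$, giving a $k^{-2}$-type bound. The divergence is in the treatment of coupling failures: the paper controls the \emph{number} of episodes that fail to mix, bounding $\P\bigl(\sum_m \ind{(\underline{\Omega}_{ij}^{ma})^c} > \eta_k\bigr)$ through the Poisson-binomial tail estimate of Lemma~\ref{lem:binomial_bound}, whose proof leans on the strong Rayleigh/ultra-log-concavity property and mode--mean relations of Poisson binomial distributions; you instead control the \emph{departures lost} in failed episodes, $\sum_m \hat{D}_{ij}^{ma}\ind{M_m^c}$ in your notation, by an exponential Markov inequality with Cauchy--Schwarz, where $\P(M_m^c)\leq m^{-\beta}$ from Lemma~\ref{lem:episode_not_mixed} enters as $m^{-\beta/2}$ and the tilt $\exp(-\gamma x_{ij}^a H_0 \eta_k)$ dominates because $(\beta+1)/2 > \max(0,\beta-\beta^2/2)$ for every $\beta>1$. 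Your route buys a more elementary, self-contained argument that needs none of the Poisson-binomial concentration machinery; the cost is heavier moment bookkeeping, and two small blemishes should be fixed in a write-up: strictly, $\E\bigl[e^{\gamma\hat{D}_{ij}^{ma}\ind{M_m^c}}\bigr] = \E\bigl[e^{\gamma\hat{D}_{ij}^{ma}}\ind{M_m^c}\bigr]+\P(M_m)$, so Cauchy--Schwarz yields the per-episode factor $1+Cm^{-\beta/2}$ rather than $Cm^{-\beta/2}$ (then use $1+x\leq e^x$), and your stated growth rate $\exp\bigl(c\ln^{\max(0,\beta-\beta^2/2)}(k)\bigr)$ misses a $\ln\ln k$ correction exactly at $\beta=2$, where the sum $\sum_{m\leq s} m^{-\beta/2}$ is $\Theta(\ln\ln k)$. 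Both are cosmetic, since every such term is $o(\eta_k)$, and your conclusion $k\,\P((E_k^a)^c)\to 0$ stands.
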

The proof of Lemma~\ref{lem:prob_insuf_samples} in Appendix \ref{app:prob_insuf_samples} relies on Lemma~\ref{lem:episode_not_mixed}.
We use that the number of departures is at least as high as the number of departures in a coupled system that starts from the empty state at the start of the episode (the `worst case' scenario). 
We then apply Lemma~\ref{lem:episode_not_mixed} to show that the worst case process reaches stationarity with high probability. 
Next, we use Poisson merging and splitting properties (see e.g. \citep{Cinlar1968}) and Burke's Theorem~\cite[II.2.4 Theorem 2.1]{Cohen1981} to obtain that the \emph{stationary} departure process of the $\FCFSRR{a}$ policy in Algorithm~\ref{alg:routing} is a Poisson process with rate $x_{ij}^a$. 
Lastly, we use a Poisson tail bound (see Lemma~\ref{lem:pois_tail}) to obtain a lower bound on the number of departures of the stationary process. \\

For the second part of event (A), Lemma~\ref{lem:bnd_overestimation_suf_samples} shows that the probability of overestimating the true reward is small if the number of obtained samples (departures) is sufficiently large.
The proof in Appendix~\ref{app:bnd_overestimation_suf_samples} relies on Hoeffding's inequality \cite[Theorem 2]{Hoeffding1963}.

\begin{lemma}
    \label{lem:bnd_overestimation_suf_samples}
    Let $\beta\in\R_{> 1}$,  $a\in\calA$, and $\xi_k := \ln^{-\frac{\beta}{4}}(k)$, then $
        \lim_{k\to\infty} \sum_{s= \lceil C_\beta + u_k \rceil}^k \P\bigl(\overline{U}^a[s,k] - r^a  \geq d^a -\xi_k, \ E_k^a\bigr) = 0$.
\end{lemma}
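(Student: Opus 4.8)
The goal is to show that once an action $a$ has been selected in at least $s \geq \lceil C_\beta + u_k\rceil$ episodes \emph{and} enough departure samples have been collected (event $E_k^a$), the probability that its UCB index $\overline{U}^a[s,k]$ overshoots its true reward $r^a$ by more than $d^a - \xi_k$ is vanishingly small, even after summing over all $s$ up to $k$. The plan is to bound each summand via a concentration inequality and then argue that the resulting bound, multiplied by the number of terms $k$, still tends to zero. First I would expand $\overline{U}^a[s,k] - r^a$ using the definitions in~\eqref{eq:Uestimator_episodes} and~\eqref{eq:def_reward_action_a}. Writing $r^a = \sum_{(ij)\in\calL^a} x_{ij}^a \theta_{ij}$, the deviation decomposes as
\begin{align}
    \overline{U}^a[s,k] - r^a = \sum_{(ij)\in\calL^a} x_{ij}^a \Bigl( \overline{\theta}_{ij}\bigl[\overline{T}_{ij}^a[s]\bigr] - \theta_{ij} \Bigr) + \sum_{(ij)\in\calL^a} x_{ij}^a \sqrt{\frac{\ln(k)}{\overline{T}_{ij}^a[s]}}. \nonumber
\end{align}
On the event $E_k^a$ we have the lower bound $\overline{T}_{ij}^a[s] \geq \lceil \sigma_{ijk}^a\rceil$ for every line, so the exploration (square-root) term is bounded above by a deterministic quantity of order $\sqrt{\ln(k)/\sigma_{ijk}^a}$. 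Substituting the definition $\sigma_{ijk}^a = x_{ij}^a H_0(u_k - 2\eta_k)$ with $u_k = \ln^\beta(k)$ and $\eta_k = \ln^{(\beta+1)/2}(k)$, one checks that $u_k - 2\eta_k \sim \ln^\beta(k)$ and hence the exploration term is of order $\sqrt{\ln(k)/\ln^\beta(k)} = \ln^{(1-\beta)/2}(k) \to 0$ since $\beta > 1$.

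The key step is then to observe that for the event in the lemma to occur, the \emph{empirical} deviation term must carry essentially all of the gap: since the exploration term shrinks to zero and $d^a > 0$ is fixed (for suboptimal $a$) while $\xi_k = \ln^{-\beta/4}(k) \to 0$, for all large $k$ the event $\{\overline{U}^a[s,k] - r^a \geq d^a - \xi_k\}$ forces
\begin{align}
    \sum_{(ij)\in\calL^a} x_{ij}^a \Bigl( \overline{\theta}_{ij}\bigl[\overline{T}_{ij}^a[s]\bigr] - \theta_{ij} \Bigr) \geq \frac{d^a}{2} \nonumber
\end{align}
(absorbing the vanishing exploration and $\xi_k$ contributions into a constant fraction of $d^a$). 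By a union bound over the finitely many lines in $\calL^a$, at least one line must satisfy $\overline{\theta}_{ij}[\overline{T}_{ij}^a[s]] - \theta_{ij} \geq d^a/(2 x_{ij}^a |\calL^a|) =: \delta_a > 0$. Conditioning on the number of samples $\overline{T}_{ij}^a[s] = n \geq \lceil \sigma_{ijk}^a\rceil$ and applying Hoeffding's inequality \cite[Theorem 2]{Hoeffding1963} to the average of $n$ i.i.d.\ $\bern{\theta_{ij}}$ variables gives $\P(\overline{\theta}_{ij}[n] - \theta_{ij} \geq \delta_a) \leq \exp(-2 n \delta_a^2) \leq \exp(-2 \sigma_{ijk}^a \delta_a^2)$, uniformly in $n \geq \lceil\sigma_{ijk}^a\rceil$. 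Here the dependence of $\overline{\theta}_{ij}[\,\cdot\,]$ on the sample count is on a fixed i.i.d.\ sequence, so the bound holds simultaneously over all admissible sample sizes.

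Putting this together, each summand is bounded by $|\calL^a| \exp(-2 \sigma_{ijk}^a \delta_a^2)$ for large $k$, and since $\sigma_{ijk}^a \geq c\, \ln^\beta(k)$ for some constant $c > 0$, the whole sum over $s \in \{\lceil C_\beta + u_k\rceil, \dots, k\}$ is at most of order $k \cdot \exp(-2 c \delta_a^2 \ln^\beta(k))$. Because $\beta > 1$, we have $\ln^\beta(k) \gg \ln(k)$, so $\exp(-2 c \delta_a^2 \ln^\beta(k))$ decays faster than any polynomial in $k$; in particular $k \cdot \exp(-2 c \delta_a^2 \ln^\beta(k)) \to 0$, which yields the claim. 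The main obstacle I anticipate is handling the empirical mean $\overline{\theta}_{ij}[\overline{T}_{ij}^a[s]]$ carefully: the number of samples $\overline{T}_{ij}^a[s]$ is itself random, so one cannot naively apply Hoeffding at a fixed sample size. The clean way around this — and the reason $E_k^a$ is built into the statement — is that $\overline{\theta}_{ij}[\,\cdot\,]$ is defined as a partial average of a \emph{fixed} underlying i.i.d.\ sequence $(Y_{ij}^\ell)_\ell$, so on $E_k^a$ the random index only ranges over $\{n : n \geq \lceil\sigma_{ijk}^a\rceil\}$, and a uniform (worst-case over this range) application of Hoeffding together with monotonicity of $\exp(-2n\delta_a^2)$ in $n$ suffices without needing a maximal inequality.
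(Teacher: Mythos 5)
Your overall route is viable and genuinely different from the paper's, but the justification of your key probabilistic step is wrong as stated. The problem is your final paragraph: you bound $\P\bigl(\overline{\theta}_{ij}\bigl[\overline{T}_{ij}^a[s]\bigr]-\theta_{ij}\geq\delta_a,\ \overline{T}_{ij}^a[s]\geq\lceil\sigma_{ijk}^a\rceil\bigr)$ by the \emph{worst case} $\sup_{n\geq\lceil\sigma_{ijk}^a\rceil}\P\bigl(\overline{\theta}_{ij}[n]-\theta_{ij}\geq\delta_a\bigr)$, arguing that the fixed underlying i.i.d.\ sequence makes this legitimate. It is not: the event on the left is the union over $n\geq\lceil\sigma_{ijk}^a\rceil$ of $\{\overline{T}_{ij}^a[s]=n\}\cap\{\overline{\theta}_{ij}[n]-\theta_{ij}\geq\delta_a\}$, and the probability of a union is controlled by the \emph{sum}, not the maximum, of the individual probabilities. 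A worst-case bound would require the conditional law of $\overline{\theta}_{ij}[n]$ given $\overline{T}_{ij}^a[s]=n$ to coincide with its unconditional law, i.e.\ independence of the sample count and the payoffs — but $\overline{T}_{ij}^a[s]$ is not independent of the payoff samples, because which episodes count as plays of action $a$ is decided by UCB indices built from those very samples. The repair is standard and cheap: partition on the value $\overline{T}_{ij}^a[s]=n$ and sum the Hoeffding bounds, giving $\sum_{n\geq\lceil\sigma_{ijk}^a\rceil}\exp(-2n\delta_a^2)\leq\exp(-2\sigma_{ijk}^a\delta_a^2)\big/\bigl(1-\exp(-2\delta_a^2)\bigr)$. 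The extra constant factor does not affect your final estimate $k\exp\bigl(-2c\,\delta_a^2\ln^\beta(k)\bigr)\to 0$, so the conclusion survives; this partition-and-sum is exactly the device the paper uses.

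With that step repaired, your argument is correct and in fact more elementary than the paper's. The paper never pulls the exploration bonus out of the probability: it keeps the $m$-dependent deviation $(d^a-\xi_k)/(x_{ij}^a|\calL^a|)-\sqrt{\ln(k)/m}$ inside Hoeffding, bounds the resulting sum over sample counts by an integral, and needs a separate lemma (a substitution $\tau=\gamma\sqrt{m}-\sqrt{\ln(k)}$ plus a complementary-error-function estimate) to show that $k$ times that integral vanishes. Your idea of first bounding the exploration bonus deterministically on $E_k^a$ by $\sqrt{\ln(k)/\sigma_{ijk}^a}=O\bigl(\ln^{(1-\beta)/2}(k)\bigr)\to 0$, so that Hoeffding is applied with a gap bounded below by a constant, replaces the integral computation by a geometric series, which is simpler. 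Two small caveats: absorbing the vanishing terms into $d^a/2$ requires $d^a>0$, so your argument (like the paper's, which implicitly needs $d^a-\xi_k>0$ for large $k$) covers suboptimal actions only — which is all that is used downstream in Lemma~\ref{lem:n_bad_episodes}; and your threshold should be indexed per line, $\delta_{a,ij}:=d^a/(2x_{ij}^a|\calL^a|)$, since it depends on $(ij)$.
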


The value $\sigma_{ijk}^a$ in~\eqref{eq:def_seq_u_eta_sig_lem} is constructed precisely such that Lemma~\ref{lem:prob_insuf_samples} and Lemma~\ref{lem:bnd_overestimation_suf_samples} hold simultaneously. 
The intuition behind this value is as follows: if we have observed $u_k$ episodes with action $a$, 
then for approximately $u_k-\eta_k$ of those episodes, 
the number of departures can be bounded from below by the number of departures of the  stationary process, which is Poisson distributed. 
From these episodes, we lose approximately $\eta_k$ episodes where the Poisson tail bound fails (see Lemma~\ref{lem:pois_tail}).
Hence, we collect payoff samples (departures) at a rate of approximately $u_k-2\eta_k$, as reflected in the definition of $\sigma_{ijk}^a$.  \\

\noindent {\bf Bound on event (B).}
For event (B), recall that the true mean of action $a\in\calA$ is $r^a$ as defined in \eqref{eq:def_reward_action_a}.
By construction, the UCB index in \eqref{eq:UCB_def_action} includes a confidence bound (exploration bonus) which implies that the probability of event (B) is small. 
This is made precise in Lemma~\ref{lem:prob_underestimation}, which is proven in Appendix \ref{app:prob_underestimation}.
We let $\|\cdot\|_2$ denote the $L^2$-norm. 

\begin{lemma}
    \label{lem:prob_underestimation}
    Let $\beta\in\R_{> 1}$, $a\in\calA$, $k\in\N_{\geq 1}$, and $0 <\xi <r^a$.
    Then, 
    $
        \sum_{m=1}^k \P(U^a(m) < r^a - \xi) 
        \leq
        \pi^2 \|\lambda\|_2^2/(12\xi^2).
    $
\end{lemma}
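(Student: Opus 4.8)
The plan is to bound the underestimation of the \emph{action} index $U^a(m)$ in one weighted Hoeffding estimate rather than by a union over individual lines; a per-line split of the slack $\xi$ would unavoidably inflate the constant (the optimal per-line allocation already exceeds $\|\lambda\|_2^2/\xi^2$), so the joint treatment is essential. First I would record that, by the convention $U_{ij}(m)=\infty$ when $T_{ij}(m)=0$ in~\eqref{eq:UCB_update_algo_U_per_line}, the event $\{U^a(m)<r^a-\xi\}$ can occur only when $T_{ij}(m)\ge 1$ for every $(ij)\in\calL^a$. On that event I rewrite, using~\eqref{eq:UCB_def_action} and~\eqref{eq:def_reward_action_a},
\[
r^a-U^a(m)=\sum_{(ij)\in\calL^a}x_{ij}^a\Bigl(\theta_{ij}-\hat\theta_{ij}(m)-\sqrt{\frac{\ln m}{T_{ij}(m)}}\Bigr),
\]
so that $\{U^a(m)<r^a-\xi\}$ is precisely the event that the weighted sum of centred empirical deviations $\sum_{(ij)\in\calL^a}x_{ij}^a(\theta_{ij}-\hat\theta_{ij}(m))$ exceeds $\xi+B_m$, where $B_m:=\sqrt{\ln m}\sum_{(ij)\in\calL^a}x_{ij}^a/\sqrt{T_{ij}(m)}$ is the total exploration bonus.

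Conditioning on the realised counts $(T_{ij}(m))_{(ij)\in\calL^a}$, each $\hat\theta_{ij}(m)$ is an average of $T_{ij}(m)$ i.i.d.\ $\bern{\theta_{ij}}$ samples, independent across lines, so Hoeffding's inequality applied to the $\sum_{(ij)}T_{ij}(m)$ centred summands (each ranging over an interval of length one, with coefficient $x_{ij}^a/T_{ij}(m)$) gives the conditional bound $\exp(-2(\xi+B_m)^2/V_m)$ with variance proxy $V_m:=\sum_{(ij)\in\calL^a}(x_{ij}^a)^2/T_{ij}(m)$. Two elementary inequalities then finish the per-episode estimate. Writing $p_{ij}=x_{ij}^a/\sqrt{T_{ij}(m)}$, nonnegativity gives $B_m^2/V_m=\ln m\,(\sum_{(ij)}p_{ij})^2/\sum_{(ij)}p_{ij}^2\ge\ln m$, which produces the factor $m^{-2}$; and since $T_{ij}(m)\ge1$ we have $V_m\le\|x^a\|_2^2:=\sum_{(ij)\in\calL^a}(x_{ij}^a)^2$, so the slack contributes $\exp(-2\xi^2/V_m)\le\exp(-2\xi^2/\|x^a\|_2^2)\le\|x^a\|_2^2/(2\xi^2)$ via $e^{-u}\le 1/u$. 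As this bound is uniform in the counts, I obtain $\P(U^a(m)<r^a-\xi)\le\|x^a\|_2^2/(2\xi^2 m^2)$.

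Summing over episodes and using $\sum_{m=1}^\infty m^{-2}=\pi^2/6$ yields $\sum_{m=1}^k\P(U^a(m)<r^a-\xi)\le\pi^2\|x^a\|_2^2/(12\xi^2)$, and the claim follows from $\|x^a\|_2^2\le\|\lambda\|_2^2$. For the latter I group the squared rates by customer type and invoke the routing constraint~\eqref{eq:LP_eps_constr_x}: since zero-load lines contribute nothing, $\sum_{j\in\calS_i}(x_{ij}^a)^2\le(\sum_{j\in\calS_i}x_{ij}^a)^2=\lambda_i^2$ for every $i\in\calI$, and summing over $i$ gives $\|x^a\|_2^2\le\sum_{i\in\calI}\lambda_i^2=\|\lambda\|_2^2$.

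The delicate point, which I expect to be the main obstacle, is the conditioning in the second step: the counts $T_{ij}(m)$ are \emph{not} independent of the payoff samples, because the action selected in Line~\ref{line:maxucb} depends on the indices $U_{ij}$ and hence on past payoffs, so a naive conditional Hoeffding bound is not literally valid. The resolution is the standard device of replacing the random index $T_{ij}(m)$ by deterministic sample sizes: the sequences $(Y_{ij}^\ell)_{\ell\ge1}$ are i.i.d.\ and mutually independent across lines irrespective of when departures occur, so one argues along the induced filtration with $T_{ij}(m)$ a stopping index (equivalently, peels over admissible count vectors) and applies the fixed-sample Hoeffding bound at deterministic indices. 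Retaining the clean constant is the crux here; the saving grace is that the variance proxy $V_m$ is bounded by $\|x^a\|_2^2$ \emph{uniformly} over all admissible counts, which is what allows a self-normalised (method-of-mixtures) martingale bound, or a peeling argument absorbed by the $m^{-2}$ decay, to reproduce the heuristic estimate without an extra union-bound factor.
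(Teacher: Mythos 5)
Your high-level route is genuinely different from the paper's, and your bookkeeping of the constant is exactly what would be needed: a single weighted Hoeffding bound with variance proxy $V_m=\sum_{(ij)\in\calL^a}(x_{ij}^a)^2/T_{ij}(m)\le\|x^a\|_2^2\le\|\lambda\|_2^2$, a bonus term producing $m^{-2}$, and $\sum_m m^{-2}=\pi^2/6$. But the step you yourself flag as the crux is a genuine gap, and neither of your proposed repairs closes it. Conditional on the counts $(T_{ij}(m))$, the payoff samples are \emph{not} i.i.d.\ $\bern{\theta_{ij}}$ (the counts are determined by past payoffs through Line~\ref{line:maxucb}), so the conditional Hoeffding bound is not available. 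The standard fix is to union over deterministic sample sizes; for your \emph{joint} bound this means summing over all count vectors $(s_{ij})_{(ij)\in\calL^a}\in\N_{\geq 1}^{|\calL^a|}$, and your per-vector estimate $\|x^a\|_2^2/(2\xi^2m^2)$ is \emph{uniform} in the counts. Summing a uniform bound over infinitely many count vectors diverges: the uniformity you describe as the saving grace is precisely what kills the peeling. (Peeling works only when the per-count bound decays in the count, which is what makes the one-dimensional sum $\sum_{s\geq 1}\exp(-2s(\xi/x_{ij}^a)^2-2\ln m)$ converge in the per-line treatment.) The alternative you mention, a vector-valued self-normalized/method-of-mixtures bound, is a substantially heavier tool, is not worked out in your proposal, and would not obviously preserve the clean constant.

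For comparison, the paper takes the per-line route you dismissed: it splits the underestimation event across lines, then for each line partitions over the deterministic values $s$ of $T_{ij}(m)$, replaces $\hat\theta_{ij}(m)$ by $\overline{\theta}_{ij}[s]$, applies fixed-sample Hoeffding, and sums the geometric series in $s$ to get $(x_{ij}^a)^2/(2\xi^2m^2)$ per line; this renders the random-count issue trivial, which is exactly what your approach lacks. That said, your objection to the per-line route is astute: the splitting inequality the paper invokes, $\P(A+B<z)\le\P(A<z)+\P(B<z)$, is false for $z<0$ (two moderate negative deviations can sum below $-\xi$ with neither below $-\xi$), and a valid split must divide the slack among the $|\calL^a|$ lines, which inflates the constant, e.g.\ to $\pi^2|\calL^a|^2\|\lambda\|_2^2/(12\xi^2)$ for the uniform split $\xi/|\calL^a|$. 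So neither your proposal nor the paper's argument, as written, establishes the exact stated constant; the repaired per-line argument proves the lemma with a larger constant, which is harmless where it is used (in Lemma~\ref{lem:n_bad_episodes} only the order in $k$ matters). If you want to salvage your write-up, the workable path is to prove a per-line, uniform-over-counts deviation bound of the form $\P\bigl(\exists s\ge1:\ \overline{\theta}_{ij}[s]-\theta_{ij}+\sqrt{\ln(m)/s}<-\epsilon_{ij}\bigr)\le m^{-2}/(2\epsilon_{ij}^2)$ and then allocate the slack $\xi=\sum_{(ij)}x_{ij}^a\epsilon_{ij}$ across lines — but that lands you back in the per-line regime and its inflated constant.
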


\subsubsection{Step 3. Bounding the number of suboptimal episodes}
We define $S^a(k)$ as the number of episodes where Algorithm~\ref{alg:learning_alg} chooses action $a\in\calA$ in Line~\ref{line:maxucb} up to and including episode number~$k$, i.e.,
\begin{align}
    S^a(k) &:= \sum_{m=1}^k \ind{A_m = a}.
    \label{eq:Sak_def}
\end{align}
From Lemma~\ref{lem:n_bad_episodes} it follows that the expected number of suboptimal episodes grows asymptotically with rate at most $\ln^\beta(k)$ with the number of episodes $k$. 
The proof of Lemma~\ref{lem:n_bad_episodes} is given in Appendix~\ref{app:n_bad_episodes}, and relies on Lemma~\ref{lem:prob_insuf_samples},~\ref{lem:bnd_overestimation_suf_samples}, and~\ref{lem:prob_underestimation}.

\begin{lemma}
    \label{lem:n_bad_episodes}
    Let $\beta\in\R_{> 1}$. 
    For any suboptimal action $a\in\calA\setminus\{1\}$ we have
    \begin{align}
        \lim_{k \to \infty} \frac{\E(S^a(k))}{\ln^\beta(k)}
        &\leq 
        1. 
    \end{align}
\end{lemma}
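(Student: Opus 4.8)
The plan is to adapt the classical UCB selection-counting argument to our episodic, queue-driven setting by peeling off a leading block of $\kappa_k := \lceil C_\beta + u_k\rceil$ episodes, with $u_k = \ln^\beta(k)$, that will contribute the constant $1$ in the limit, and then showing that every remaining contribution is $o(\ln^\beta(k))$. Since $\kappa_k$ is deterministic once $k$ is fixed, I would first write $S^a(k) \leq \kappa_k + \sum_{m=1}^k \ind{A_m = a,\, S^a(m-1)\geq\kappa_k}$, using that among the episodes in which $a$ is selected, only those beyond the $\kappa_k$-th satisfy $S^a(m-1)\geq\kappa_k$. Because $\kappa_k/\ln^\beta(k)\to 1$, it then suffices to prove that the expectation of the residual sum is $o(\ln^\beta(k))$.

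I would split the residual sum according to the ``sufficient samples'' event $E_k^a$ of~\eqref{eq:def_event_E_lem}. On $(E_k^a)^c$ I bound the sum crudely by $k\,\ind{(E_k^a)^c}$, whose expectation is $k\,\P((E_k^a)^c)\to 0$ by Lemma~\ref{lem:prob_insuf_samples}; this is precisely why that lemma carries the factor $k$. On $E_k^a$ I use that $A_m = a$ forces $U^a(m-1)\geq U^1(m-1)$, action $1$ being the optimal action so that $d^a = r^1 - r^a$. Comparing both indices to the level $r^1 - \xi_k = r^a + d^a - \xi_k$ with $\xi_k = \ln^{-\beta/4}(k)$ yields the standard dichotomy: either the optimal index underestimates, $U^1(m-1) < r^1 - \xi_k$, or the suboptimal index overestimates, $U^a(m-1) - r^a \geq d^a - \xi_k$. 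Hence $\ind{A_m=a}$ is dominated by the sum of the indicators of these two events.

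The underestimation indicators are handled directly by Lemma~\ref{lem:prob_underestimation} with $\xi = \xi_k$: summing over $m$ gives a bound of order $1/\xi_k^2 = \ln^{\beta/2}(k)$ (after discarding the $m=0$ term, where $U^1(0)=\infty$), which is $o(\ln^\beta(k))$ since $\beta/2 < \beta$. For the overestimation indicators, restricted to $E_k^a\cap\{S^a(m-1)\geq\kappa_k\}$, I would reduce the sum over episodes $m$ to a sum over the selection count $s = S^a(m-1)$: for each $s\in\{\kappa_k,\dots,k\}$ there is at most one episode with $A_m=a$ and $S^a(m-1)=s$. On $E_k^a$ the post-warmup counts satisfy $\overline{T}_{ij}^a[s]\geq\lceil\sigma_{ijk}^a\rceil$, and since actual departure counts dominate post-warmup ones while $\ln(m-1)\leq\ln(k)$, the exploration terms of the actual index $U^a(m-1)$ are controlled by those of $\overline{U}^a[s,k]$; this lets me dominate each overestimation indicator by the event appearing in Lemma~\ref{lem:bnd_overestimation_suf_samples}, whose summed probability tends to $0$. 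Collecting the pieces gives $\E(S^a(k)) \leq \kappa_k + o(\ln^\beta(k))$, and dividing by $\ln^\beta(k)$ and letting $k\to\infty$ yields the claim.

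The delicate point I expect to be the main obstacle is this last reduction. The actual index $U^a(m-1)$ is built from \emph{all} samples of each line $(ij)\in\calL^a$---including warmup departures and departures accrued while other actions were active---whereas $\overline{U}^a[s,k]$ counts only post-warmup departures from action-$a$ episodes, so the two empirical means rest on different (random) sample sizes and are not comparable pointwise. The exploration terms compare cleanly via monotonicity, but transferring the empirical-mean part requires a union bound over the random number of samples, which is exactly what the summation over $s$ in Lemma~\ref{lem:bnd_overestimation_suf_samples} encodes. Ensuring that this term, together with the underestimation and insufficient-samples contributions, stays strictly below the $\ln^\beta(k)$ scale---so that the limiting constant is exactly $1$ rather than merely $\calO(1)$---is the crux of the argument.
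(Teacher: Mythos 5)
Your proposal is correct and follows essentially the same route as the paper's proof: the same peeling of $\lceil C_\beta + u_k \rceil$ selections to produce the limiting constant $1$, the same underestimation/overestimation dichotomy at level $r^1 - \xi_k$ with $\xi_k = \ln^{-\beta/4}(k)$, and the same three supporting results, Lemmas~\ref{lem:prob_insuf_samples}, \ref{lem:bnd_overestimation_suf_samples} and \ref{lem:prob_underestimation}. The only substantive difference is how the overestimation term is tied to $\overline{U}^a[s,k]$, and there your version is in fact more careful than the paper's.

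The paper converts the episode sum into a sum over selection counts by asserting that $U^a(m_s) = \overline{U}^a[s,m_s]$ for the $s$-th episode $m_s$ in which $a$ is chosen, and then uses monotonicity of $\overline{U}^a[s,\cdot]$ in its second argument before applying Lemma~\ref{lem:bnd_overestimation_suf_samples} as a black box. As you point out, that identity is not literally true: $T_{ij}(m_s)$ also counts warmup departures and departures collected under other actions $\tilde a$ with $x_{ij}^{\tilde a} > 0$, which is precisely the transfer-learning effect discussed in Section~\ref{sec:disc_transfer_learning}, so the two indices are built from different sample sets. Your repair --- bounding the exploration bonus by monotonicity (on the relevant event $T_{ij}(m-1) \geq \overline{T}_{ij}^a[s] \geq \lceil \sigma_{ijk}^a \rceil$ and $\ln(m-1) \leq \ln(k)$) and handling the empirical mean by a union bound over the random sample count --- is the right one, and it reproduces what the proof of Lemma~\ref{lem:bnd_overestimation_suf_samples} does internally: partition on the value of the count and sum Hoeffding bounds over all counts at least $\lceil \sigma_{ijk}^a \rceil$, as in \eqref{eq:est_sum_partition}, which yields a bound of order $o(1/k)$ uniformly over episodes, hence $o(1)$ after summing over at most $k$ of them. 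One bookkeeping correction: this union bound is the inner summation over $m$ in \eqref{eq:est_sum_partition}, not the outer summation over $s$ in the statement of Lemma~\ref{lem:bnd_overestimation_suf_samples}; the outer sum indexes selection counts, of which you consume one term per residual episode. With that reading your argument closes exactly as the paper's does: $\E(S^a(k)) \leq \lceil C_\beta + u_k \rceil + o(\ln^\beta(k))$, and dividing by $\ln^\beta(k)$ gives the claim.
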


\subsubsection{Step 4. Bounding the regret of Algorithm~\ref{alg:learning_alg}}
We bound the terms in the regret decomposition~\eqref{eq:regret_decomp} individually. 
For term~I, we use that suboptimal lines are only used in suboptimal episodes by construction. 
Hence, for $(k\ell)\in O^c(\theta)$, the expected number of type-$(k\ell)$ departures can be upper bounded by the maximal departure rate of server~$\ell$, which is $\mu_\ell-\eps$ by construction, multiplied by the total number of suboptimal episodes. This, we bound using Lemma~\ref{lem:n_bad_episodes}. 

For term~II, we show that the queue length process in consecutive optimal episodes converges sufficiently fast to its stationary behavior. 
Specifically, we use the bound in Lemma~\ref{lem:episode_not_mixed}: if \mbox{$A_k=A_{k+1}=1$} and the queue length processes reaches stationarity within the warmup period of episode $k$, then the process is stationary at the start of episode $k+1$, since the stationary measure does not change in this case. 

Lastly, for term~III we use the fact that $x^{A_m}$ satisfies~\eqref{eq:LP_eps_constr_mu} to show that the virtual queues of all servers are positive recurrent under the \FCFSRR{A_m} policy. In view of~\eqref{eq:LP_eps_constr_x}, this implies that the queue of any customer type is positive recurrent as well. This means that the expected queue lengths remain finite, and hence the contribution of term~III to the regret vanishes on a logarithmic scale.

\subsection{Discussion}
\subsubsection{Avoiding carryover effect between episodes}
The reallocation in Lines~\ref{line:reallocation1}-\ref{line:reallocation2} avoids a delay in observations when a new episode starts:  if the chosen action in episode $k$ differs from the action that was chosen in the previous episode $k-1$, we reallocate all waiting customers to the virtual queues according to new rates of the action chosen in episode $k$. 
After the new reallocation, we order the customers in each virtual queue in order of their arrival times, so that the service order within the virtual queue is \gls{FCFS}.
This reallocation improves the learning efficiency since there is no `carryover' effect between episodes, unlike other learning algorithms in literature \citep{Fu2022,Jia2022}. 

\subsubsection{Transfer learning} 
\label{sec:disc_transfer_learning}
The rewards of the actions are related by the common unknown payoff parameter $\theta$ via~\eqref{eq:def_reward_action_a}.
Algorithm~\ref{alg:learning_alg} utilizes transfer learning in the sense that knowledge about the payoff parameter is shared between actions:
specifically, the \gls{UCB} index of a reward is computed using the \gls{UCB} indices of the lines.
This effect also becomes evident in the numerical analysis in Section~\ref{sec:numerical_small}.

Our regret bound in Theorem~\ref{alg:learning_alg} is pessimistic: in analyzing the convergence of the \gls{UCB} index $U^a$ in \eqref{eq:UCB_def_action}, we only  account for departures that are obtained in episodes where action $a$ was chosen. 
However, we possibly observe type-$(ij)$ departures as well in episodes where another action $\tilde{a}$ with $x_{ij}^{\tilde{a}} > 0$ was chosen. 
This insight can be used in future work to improve the bound in Lemma~\ref{lem:prob_insuf_samples} and in turn improve the regret bound in Theorem~\ref{thm:regret_ub}. 
In particular, it might be possible to replace the cardinality of the action  $|\calA|$ in~\eqref{eq:ucbqr_regret_ub} by the number of lines $L$, which is typically smaller. 
Such analysis methods are often used in the setting of linear bandits \citep{Lattimore2020}.

\subsubsection{Parameter values}
The required lower bound on $\alpha$ in \eqref{eq:c0_choice} can be  large when the value of the slack parameter $\eps >0$ is small.
As a consequence, the episode length $H_k$ in \eqref{eq:ucbqr_episode_length} increases quickly in the number of episodes $k$. 
Since Algorithm~\ref{alg:learning_alg} only chooses a new action at the start of episodes, this can in turn lead to a slow decrease in the rate of regret accumulation.
\eqref{eq:c0_choice} is constructed in such a way to bound a term in the proof of Lemma~\ref{lem:episode_not_mixed}. However, this bound is quite loose. More careful analysis of the individual terms can potentially give a lower value than \eqref{eq:c0_choice}. 
We show in Section \ref{sec:numerical_big} that Algorithm~\ref{alg:learning_alg} still performs well in finite time, even when $\alpha$ is chosen smaller than the lower bound in \eqref{eq:c0_choice}.

\subsubsection{Number of actions} 
The number of actions grows exponentially with the number of queues $I$, servers $J$, and lines $L$ in the queueing system.
Concretely, the number of bases of \LP{\theta,\eps} in \eqref{eq:LP_eps} is $\binom{L+J}{I+J}$,
hence the number of basic feasible solutions grows exponentially with the system's complexity. 
Reducing the number of actions using dimension reduction techniques is appealing for future research. 
For example, we could apply a column generation technique on the utility function that uses the UCB indices as proxy for the true payoff parameters  \citep{Desrosiers2005}, or use a branch--and--bound technique like Bender's decomposition to split the optimization problem into a master problem and subproblems \citep{Garcia-Munoz2023}.

\section{Numerical results}
\label{sec:simulation}
In this section we analyze the properties of Algorithm~\ref{alg:learning_alg} (UCB QR) numerically.
In Section~\ref{sec:numerical_small}, we consider a small skill-based queueing system.
For this system, we analyze the regret of the UCB QR algorithm and compare its long-term average payoff rate with several benchmark routing policies.  
In Section~\ref{sec:numerical_changing_theta}, we analyze the robustness of the learning policy against changes in the true payoff parameter.
In Section~\ref{sec:numerical_big}, we consider a larger queueing system and compare the performance of UCB QR against benchmark policies in several  scenarios.\\

We consider four different benchmark policies which are described as follows. 
\begin{itemize}
    \item {\bf Oracle:} Almost the same as the UCB QR algorithm, but with Line~\ref{line:maxucb} replaced by $A_k = \argmax_{\tilde{a}} r^{\tilde{a}}$. Note that this policy relies on the true payoff parameter $\theta$. This policy represents the optimal upper bound for the long-term reward rate of any stabilizing policy. 
    \item {\bf FCFS ALIS:}  If there are nonempty compatible queues at a service completion, assign the customer with maximal waiting time. If there are compatible servers idle upon a customer arrival, assign it to the server with maximal idle time.
    This policy is widely used in practice since both customers and servers experience a sense of fairness \citep{Adan2014}. 
    \item {\bf Greedy:} If there are nonempty compatible queues at a service completion of server $j$, assign the first--in--line customer from the queue $\argmax_{i\in\calC_j} \theta_{ij}$. If there are compatible servers idle upon a type-$i$ customer arrival, assign it to the server $\argmax_{j\in\calS_i} \theta_{ij}$. This policy myopically optimizes the instantaneous payoff without considering long-term effects or stability constraints. 
    \item {\bf Random:} If there are nonempty compatible queues at a service completion, choose one of these queues uniformly at random and assign the first--in--line customer. If there are compatible servers idle at a customer arrival, assign it to one of these servers uniformly at random. This policy serves as a most naive baseline by making completely uninformed decisions. 
\end{itemize}

\subsection{Small example}
\label{sec:numerical_small}
We consider the queueing system illustrated on the left in Figure~\ref{fig:qsmall_actions} with $\eps = 0.5$.
It can be verified that~\eqref{eq:def_eps_insensitive} is satisfied. 
\LP{\theta,\eps} in \eqref{eq:LP_eps} has six nondegenerate basic feasible solutions which we label as actions in Figure~\ref{fig:qsmall_actions}. 
The optimality of the actions depends on the payoff vector $\theta = \{\theta_{11},\theta_{12},\theta_{21},\theta_{22}\}$ as illustrated in Figure~\ref{fig:qsmall_opt_sol_LP}.

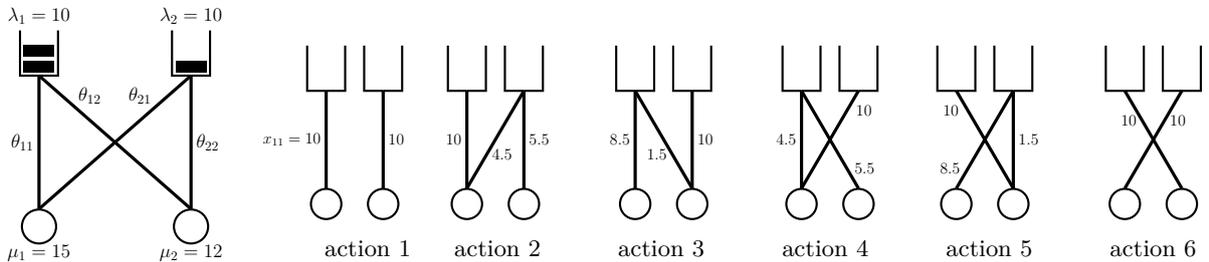
\begin{figure}[h]
    \centering
        \begin{subfigure}{.12\linewidth}
            \centering
            \begin{tikzpicture}[
                server/.style={circle, minimum size = .9cm, thick,draw},
                scale=.5, transform shape
                ]
                \foreach \x/\z in {1/10,2/10}{ 
                    \node[draw = none] at (4*\x,0) (queue\x) {};
                    \node[above of = queue\x, yshift = -.5cm] (labelqueue\x) {};
                    \draw[thick] (queue\x.center) --++(-.5,0) --++(0,1.2); 
                    \draw[thick] (queue\x.center) --++(.5,0) --++(0,1.2); 
                    \node[draw = none, above of = queue\x, yshift = .6cm] {\Large $\lambda_\x=\z$};
                }
                \fill[black] ($(queue1.center) + (-.4,.4)$) rectangle ++(.8,-0.3);
                \fill[black] ($(queue1.center) + (-.4,.8)$) rectangle ++(.8,-0.3);
                \fill[black] ($(queue2.center) + (-.4,.4)$) rectangle ++(.8,-0.3);
    
                \foreach \x/\y in {1/15,2/12}{ 
                    \node[server] at (4*\x,-4) (server\x) {};
                    \node at (server\x) [yshift=-.7cm] {\Large $\mu_\x=\y$};
                }
    
                \draw[very thick, black] (queue1.center) -- (server1.north) node[pos=0.5, left] {\Large $\theta_{11}$};
                \draw[very thick, black] (queue1.center) -- (server2.north) node[pos=0.15, right, xshift = .3cm] {\Large $\theta_{12}$};
    
                \draw[very thick, black] (queue2.center) -- (server1.north)  node[pos=0.15, left, xshift = -.3cm] {\Large $\theta_{21}$};
                \draw[very thick, black] (queue2.center) -- (server2.north) node[pos=0.5, right] {\Large $\theta_{22}$};
            \end{tikzpicture}
        \end{subfigure}
        \hspace{3em}
        \begin{subfigure}{0.12\linewidth}
            \centering
            \begin{tikzpicture}[
                server/.style={circle, minimum size = .8cm, thick,draw},
                scale=.5, transform shape
                ]
                \foreach \x/\y/\z in {1/black/10,2/black/10}{
                    \node[draw = none] at (1.5*\x,0) (queue\x) {};
                    \node[above of = queue\x, yshift = -.5cm] (labelqueue\x) {};
                    \draw[thick] (queue\x.center) --++(-.5,0) --++(0,1.2);
                    \draw[thick] (queue\x.center) --++(.5,0) --++(0,1.2); 
                }
                \foreach \x/\y in {1/15,2/12}{ 
                    \node[server] at (1.5*\x,-3) (server\x) {};
                }
                \draw[very thick, black] (queue1.center) -- (server1.north) node[pos=0.5, left] {\large $x_{11} = 10$};
                \draw[very thick, black] (queue2.center) -- (server2.north) node[pos=0.5, right] {\large 10};
            \end{tikzpicture}
            \caption*{\hfill action 1}
            \label{fig:qsmall_action1}
        \end{subfigure}
        \begin{subfigure}{0.12\linewidth}
            \centering
            \begin{tikzpicture}[
                server/.style={circle, minimum size = .8cm, thick,draw},
                scale=.5, transform shape
                ]
                \foreach \x/\y/\z in {1/black/10,2/black/10}{
                    \node[draw = none] at (1.5*\x,0) (queue\x) {};
                    \node[above of = queue\x, yshift = -.5cm] (labelqueue\x) {};
                    \draw[thick] (queue\x.center) --++(-.5,0) --++(0,1.2); 
                    \draw[thick] (queue\x.center) --++(.5,0) --++(0,1.2); 
                }
                \foreach \x/\y in {1/15,2/12}{ 
                    \node[server] at (1.5*\x,-3) (server\x) {};
                }
                \draw[very thick, black] (queue1.center) -- (server1.north) node[pos=0.5, left] {\large 10};
                \draw[very thick, black] (queue2.center) -- (server1.north) node[pos=0.65, right] {\large 4.5};
                \draw[very thick, black] (queue2.center) -- (server2.north) node[pos=0.5, right] {\large 5.5};
            \end{tikzpicture}
            \caption*{action 2}
            \label{fig:qsmall_action2}
        \end{subfigure}
        \begin{subfigure}{0.12\linewidth}
            \centering
            \begin{tikzpicture}[
                server/.style={circle, minimum size = .8cm, thick,draw},
                scale=.5, transform shape
                ]
                \foreach \x/\y/\z in {1/black/10,2/black/10}{ 
                    \node[draw = none] at (1.5*\x,0) (queue\x) {};
                    \node[above of = queue\x, yshift = -.5cm] (labelqueue\x) {};
                    \draw[thick] (queue\x.center) --++(-.5,0) --++(0,1.2); 
                    \draw[thick] (queue\x.center) --++(.5,0) --++(0,1.2); 
                }
                \foreach \x/\y in {1/15,2/12}{ 
                    \node[server] at (1.5*\x,-3) (server\x) {};
                }
                \draw[very thick, black] (queue1.center) -- (server1.north) node[pos=0.5, left] {\large 8.5};
                \draw[very thick, black] (queue1.center) -- (server2.north) node[pos=0.65, left] {\large 1.5};
                \draw[very thick, black] (queue2.center) -- (server2.north) node[pos=0.5, right] {\large 10};
            \end{tikzpicture}
            \caption*{action 3}
            \label{fig:qsmall_action3}
        \end{subfigure}
        \begin{subfigure}{0.12\linewidth}
            \centering
            \begin{tikzpicture}[
                server/.style={circle, minimum size = .8cm, thick,draw},
                scale=.5, transform shape
                ]
                \foreach \x/\y/\z in {1/black/10,2/black/10}{ 
                    \node[draw = none] at (1.5*\x,0) (queue\x) {};
                    \node[above of = queue\x, yshift = -.5cm] (labelqueue\x) {};
                    \draw[thick] (queue\x.center) --++(-.5,0) --++(0,1.2); 
                    \draw[thick] (queue\x.center) --++(.5,0) --++(0,1.2); 
                }
                \foreach \x/\y in {1/15,2/12}{ 
                    \node[server] at (1.5*\x,-3) (server\x) {};
                }
                \draw[very thick, black] (queue1.center) -- (server1.north) node[pos=0.5, left] {\large 4.5};
                \draw[very thick, black] (queue1.center) -- (server2.north) node[pos=0.8, right, xshift=.06cm] {\large 5.5};
                \draw[very thick, black] (queue2.center) -- (server1.north) node[pos=0.2, right, xshift=.1cm] {\large 10};
            \end{tikzpicture}
            \caption*{action 4}
            \label{fig:qsmall_action4}
        \end{subfigure}
        \begin{subfigure}{0.12\linewidth}
            \centering
            \begin{tikzpicture}[
                server/.style={circle, minimum size = .8cm, thick,draw},
                scale=.5, transform shape
                ]
                \foreach \x/\y/\z in {1/black/10,2/black/10}{
                    \node[draw = none] at (1.5*\x,0) (queue\x) {};
                    \node[above of = queue\x, yshift = -.5cm] (labelqueue\x) {};
                    \draw[thick] (queue\x.center) --++(-.5,0) --++(0,1.2); 
                    \draw[thick] (queue\x.center) --++(.5,0) --++(0,1.2); 
                }
                \foreach \x/\y in {1/15,2/12}{ 
                    \node[server] at (1.5*\x,-3) (server\x) {};
                }
                \draw[very thick, black] (queue1.center) -- (server2.north) node[pos=0.2, left, xshift=-.1cm] {\large 10};
                \draw[very thick, black] (queue2.center) -- (server1.north) node[pos=0.8, left, xshift=-.06cm] {\large 8.5};
                \draw[very thick, black] (queue2.center) -- (server2.north) node[pos=0.5, right] {\large 1.5};
            \end{tikzpicture}
            \caption*{action 5}
            \label{fig:qsmall_action5}
        \end{subfigure}
        \begin{subfigure}{0.12\linewidth}
            \centering
            \begin{tikzpicture}[
                server/.style={circle, minimum size = .8cm, thick,draw},
                scale=.5, transform shape
                ]
                \foreach \x/\y/\z in {1/black/10,2/black/10}{ 
                    \node[draw = none] at (1.5*\x,0) (queue\x) {};
                    \node[above of = queue\x, yshift = -.5cm] (labelqueue\x) {};
                    \draw[thick] (queue\x.center) --++(-.5,0) --++(0,1.2);
                    \draw[thick] (queue\x.center) --++(.5,0) --++(0,1.2);
                }
                \foreach \x/\y in {1/15,2/12}{
                    \node[server] at (1.5*\x,-3) (server\x) {};
                }
                \draw[very thick, black] (queue1.center) -- (server2.north) node[pos=0.3, left] {\large 10};
                \draw[very thick, black] (queue2.center) -- (server1.north) node[pos=0.3, right] {\large 10};
            \end{tikzpicture}
            \caption*{action 6}
            \label{fig:qsmall_action6}
        \end{subfigure}
    \caption{
        A skill-based queueing system with $I=2$ customer classes, $J=2$ servers, and compatibility lines $\calL=\{(11),(12),(21),(22)\}$, along with the six different actions. 
        For each action, the routing rates $\{x_{ij}\}_{(ij)\in\calL}$ are illustrated next to the corresponding lines. }
    \label{fig:qsmall_actions}
\end{figure}

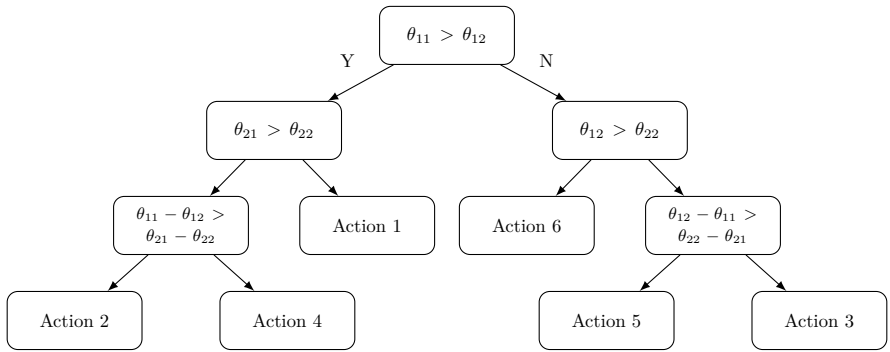
\begin{figure}[H]
    \centering
    \begin{tikzpicture}[
        level distance=18mm, 
        level 1/.style={sibling distance=65mm}, 
        level 2/.style={sibling distance=35mm}, 
        level 3/.style={sibling distance=40mm}, 
        edge from parent/.style={draw,-latex},
        every node/.style={draw, rounded corners, text width=23mm, align=center, minimum height=11mm},
        scale=.7, transform shape
        ]

        \node {$\theta_{11} > \theta_{12}$}
            child { node {$\theta_{21} > \theta_{22}$} 
                child { node {\small $\theta_{11} - \theta_{12} > \theta_{21} - \theta_{22}$}
                      child { node {Action 2}}
                      child { node {Action 4}}
                    }
                child { node {Action 1}} 
                edge from parent node[draw=none, pos=.7, above] {Y}  
                }
            child { node {$\theta_{12} > \theta_{22}$}
                child { node {Action 6}} 
                child { node {\small $\theta_{12} - \theta_{11} > \theta_{22} - \theta_{21}$}
                      child { node {Action 5}}
                      child { node {Action 3}}
                    }
                edge from parent node[draw=none, pos=.7, above] {N} 
            };
    \end{tikzpicture}
    \caption{Optimality conditions for each of the six feasible solutions of \LP{\theta,\eps}. }
    \label{fig:qsmall_opt_sol_LP}
\end{figure}

We set the true payoff vector as $\theta =\{0.4,0.1,0.3,0.01\}$. 
This implies that action 2 is optimal and that the optimal long-term payoff rate is approximately 5.4. 
Moreover, line $(12)$ is suboptimal since it has zero load in the optimal solution.
The suboptimality gap of line $(12)$ (recall \eqref{eq:phi_def}) is $\phi_{12} = 0.01$.
The suboptimality gaps for each action in~\eqref{eq:def_subopt_gap_action} are given by
\begin{align}
    d^1 = 1.305, \ d^2 = 0, \ d^3 = 1.755, \ d^4=0.055, \ d^5=1.84, \ d^6 = 1.405.
\end{align}

Recall that the UCB QR algorithm requires configuration parameters $\alpha,\beta$, and $H_0$ which define the episode length according to~\eqref{eq:ucbqr_episode_length}.
We set $\alpha = 364$, $\beta = 1.01$, and $H_0 = 10$. 
These parameters satisfy  \eqref{eq:c0_choice} and \eqref{eq:ucbqr_episode_length}. 
All of our numerical results are based on 50 independent replications and all plots show 95\% confidence areas, although these intervals are too small to be visible in Figure~\ref{fig:qsmall_running_reward}. \\

We observe in Figure~\ref{fig:qsmall_running_reward} that, as expected, the payoff rate of the UCB QR algorithm converges to the optimal payoff rate of the Oracle policy.
In the initial period up to approximately time 2000,  the UCB QR algorithm has not yet collected sufficient samples to distinguish the actions. Therefore, it chooses actions uniformly at random, which corresponds to the first few episodes in Figure~\ref{fig:qsmall_action_freq}. 
In turn, the regret accumulation is high since actions with large suboptimality gaps are chosen. 
After the initial period, we observe in Figure~\ref{fig:qsmall_action_freq} that the algorithm primarily struggles to differentiate the optimal action 2 and action 4. 
This leads to a sharp decrease in regret accumulation, since action 4 has a small suboptimality gap.

Moreover, we observe in Figure~\ref{fig:qsmall_action_freq} that actions $a\in\{1,3,5,6\}$ are hardly ever chosen by the UCB QR algorithm. 
As opposed to a classical MAB, the rewards of the actions are related by the common unknown payoff parameter $\theta$. 
Therefore, it suffices to only sample a subset of actions, as long as we observe sufficient payoff samples from all lines. 
By only using actions 2 and 4, the algorithm infers sufficient information about all four payoff parameters.
In particular, in accordance with Line~\ref{line:update_ucb} in Algorithm~\ref{alg:learning_alg}, the indices $U_{11}$, $U_{21}$, and $U_{22}$ are updated after episodes with action 2, while $U_{11}$, $U_{12}$, and $U_{21}$ are updated after episodes with action 4. 
We note that our regret analysis in Section~\ref{sec:learning_alg} does not exploit this feature of transfer learning in the construction of the regret upper bound. This remains an interesting direction for future research.

The FCFS ALIS, Greedy and Random policies maintain a consistent gap to the optimal payoff of the Oracle policy. 
The value of this gap depends on the true payoff parameter $\theta$. 
Since the total load in the system is far from critical, the long-term routing rates under the FCFS ALIS policy are close to those of the Random policy. This leads to the minor difference in total payoff rate with respect to the Random policy, as shown in Figure~\ref{fig:qsmall_running_reward}. In particular, the FCFS ALIS policy is indistinguishable from the Random policy in terms of reward. 
The Greedy policy obtains a slightly higher reward making a greedy decision whenever possible.

To summarize, we have shown for a small queueing system that the payoff rate of the UCB QR algorithm converges to the optimal rate and that it chooses the optimal action most of the time. Thereby, it outperforms our benchmark policies that (a) are agnostic to the payoff parameters, or (b) rely on the payoff parameters but make suboptimal routing decisions. 

\begin{figure}[H]
    \centering
    \begin{minipage}[c]{.3\linewidth}
        \centering
        \input{Figs/TikzPlot_Small_RunningAvgTotalReward.tex}
        \caption{Running average payoff rate over time for all policies.}
        \label{fig:qsmall_running_reward}
    \end{minipage}
    \hspace{0.02\linewidth}
    \begin{minipage}[c]{.3\linewidth}
        \centering
        \vspace{.7em}
\begin{tikzpicture}[scale=.65,transform shape]

\definecolor{darkgray176}{RGB}{176,176,176}
\definecolor{steelblue31119180}{RGB}{31,119,180}

\begin{axis}[
tick align=outside,
tick pos=left,
x grid style={darkgray176},
xlabel={Time},
xmajorgrids,
xmin=0, xmax=100000,
xtick style={color=black},
y grid style={darkgray176},
ytick={0,1000,2000},
ymajorgrids,
ymin=0, ymax=2500,
ytick style={color=black},
width=9cm,  
height=8cm,  
]
\path [draw=steelblue31119180, fill=steelblue31119180, opacity=0.2]
(axis cs:1,0.972368674688589)
--(axis cs:1,2.07763132531141)
--(axis cs:1001,1076.02141842198)
--(axis cs:2001,1744.06915140771)
--(axis cs:3001,1751.61094178979)
--(axis cs:4001,1741.67277987965)
--(axis cs:5001,1735.48698538369)
--(axis cs:6001,1765.05724256587)
--(axis cs:7001,1777.05770487664)
--(axis cs:8001,1807.94327598259)
--(axis cs:9001,1813.6970185051)
--(axis cs:10001,1841.12362415137)
--(axis cs:11001,1857.29026474711)
--(axis cs:12001,1862.328879356)
--(axis cs:13001,1870.6964207784)
--(axis cs:14001,1894.09055606857)
--(axis cs:15001,1916.48397230979)
--(axis cs:16001,1929.76047765791)
--(axis cs:17001,1944.77535864576)
--(axis cs:18001,1950.50460856369)
--(axis cs:19001,1945.31292375545)
--(axis cs:20001,1946.23331453617)
--(axis cs:21001,1953.39185727696)
--(axis cs:22001,1946.87158374931)
--(axis cs:23001,1967.33878419447)
--(axis cs:24001,1977.7636513906)
--(axis cs:25001,1993.54260534278)
--(axis cs:26001,2008.49586180094)
--(axis cs:27001,2020.63983649052)
--(axis cs:28001,2041.02845078502)
--(axis cs:29001,2057.25193615802)
--(axis cs:30001,2059.12612325467)
--(axis cs:31001,2068.55574290847)
--(axis cs:32001,2089.1726792955)
--(axis cs:33001,2107.62752195034)
--(axis cs:34001,2114.80581159525)
--(axis cs:35001,2121.76148841118)
--(axis cs:36001,2113.77643364364)
--(axis cs:37001,2108.72835939685)
--(axis cs:38001,2103.25347337437)
--(axis cs:39001,2101.16835282891)
--(axis cs:40001,2122.95304912047)
--(axis cs:41001,2127.06680185433)
--(axis cs:42001,2119.28237791376)
--(axis cs:43001,2119.38228694229)
--(axis cs:44001,2120.07905857172)
--(axis cs:45001,2120.67464676826)
--(axis cs:46001,2130.43248871437)
--(axis cs:47001,2133.88071343313)
--(axis cs:48001,2156.53596686734)
--(axis cs:49001,2137.92801799837)
--(axis cs:50001,2140.27731899875)
--(axis cs:51001,2155.03549791245)
--(axis cs:52001,2162.99755800355)
--(axis cs:53001,2159.5561173011)
--(axis cs:54001,2160.32596987425)
--(axis cs:55001,2157.11444130817)
--(axis cs:56001,2166.99696245996)
--(axis cs:57001,2178.28004909688)
--(axis cs:58001,2176.91196316307)
--(axis cs:59001,2197.70059612306)
--(axis cs:60001,2216.22472583581)
--(axis cs:61001,2218.55291500849)
--(axis cs:62001,2225.04889065211)
--(axis cs:63001,2226.2949958107)
--(axis cs:64001,2236.04569652669)
--(axis cs:65001,2246.12685750989)
--(axis cs:66001,2245.63692525987)
--(axis cs:67001,2252.02677252758)
--(axis cs:68001,2256.51341881016)
--(axis cs:69001,2256.14845453568)
--(axis cs:70001,2264.59764247304)
--(axis cs:71001,2258.4883243952)
--(axis cs:72001,2271.78352109256)
--(axis cs:73001,2273.04323004764)
--(axis cs:74001,2255.51269566498)
--(axis cs:75001,2254.84587413432)
--(axis cs:76001,2274.29757046116)
--(axis cs:77001,2276.49745532618)
--(axis cs:78001,2281.61607155904)
--(axis cs:79001,2288.43054268798)
--(axis cs:80001,2295.02353219216)
--(axis cs:81001,2289.26731294828)
--(axis cs:82001,2290.85139864087)
--(axis cs:83001,2291.11188449542)
--(axis cs:84001,2289.48789115605)
--(axis cs:85001,2286.06189934614)
--(axis cs:86001,2283.74199524056)
--(axis cs:87001,2300.3636038867)
--(axis cs:88001,2303.67581186285)
--(axis cs:89001,2297.93355622657)
--(axis cs:90001,2287.4690065089)
--(axis cs:91001,2289.25635363975)
--(axis cs:92001,2291.57387614602)
--(axis cs:93001,2298.35222737156)
--(axis cs:94001,2301.83655634786)
--(axis cs:95001,2318.46292504349)
--(axis cs:96001,2327.29266412294)
--(axis cs:97001,2330.23056392449)
--(axis cs:98001,2338.37467935009)
--(axis cs:99001,2347.28871478717)
--(axis cs:99001,1833.96128521288)
--(axis cs:99001,1833.96128521288)
--(axis cs:98001,1826.55532064996)
--(axis cs:97001,1816.41943607557)
--(axis cs:96001,1811.47733587712)
--(axis cs:95001,1810.54707495656)
--(axis cs:94001,1803.05344365219)
--(axis cs:93001,1790.41777262849)
--(axis cs:92001,1793.83612385403)
--(axis cs:91001,1793.59364636031)
--(axis cs:90001,1790.18099349116)
--(axis cs:89001,1795.15644377349)
--(axis cs:88001,1793.21418813721)
--(axis cs:87001,1793.44639611335)
--(axis cs:86001,1778.50800475949)
--(axis cs:85001,1788.34810065391)
--(axis cs:84001,1790.882108844)
--(axis cs:83001,1795.65811550464)
--(axis cs:82001,1793.27860135919)
--(axis cs:81001,1801.10268705178)
--(axis cs:80001,1800.1864678079)
--(axis cs:79001,1799.93945731208)
--(axis cs:78001,1804.99392844102)
--(axis cs:77001,1802.67254467388)
--(axis cs:76001,1804.87242953889)
--(axis cs:75001,1781.52412586574)
--(axis cs:74001,1789.89730433507)
--(axis cs:73001,1813.32676995241)
--(axis cs:72001,1813.1064789075)
--(axis cs:71001,1799.72167560486)
--(axis cs:70001,1802.77235752702)
--(axis cs:69001,1797.78154546437)
--(axis cs:68001,1807.25658118989)
--(axis cs:67001,1803.98322747247)
--(axis cs:66001,1802.49307474018)
--(axis cs:65001,1804.36314249017)
--(axis cs:64001,1802.00430347336)
--(axis cs:63001,1789.63500418936)
--(axis cs:62001,1792.68110934795)
--(axis cs:61001,1780.93708499157)
--(axis cs:60001,1777.34527416424)
--(axis cs:59001,1769.22940387699)
--(axis cs:58001,1759.97803683699)
--(axis cs:57001,1761.96995090318)
--(axis cs:56001,1742.9330375401)
--(axis cs:55001,1739.37555869189)
--(axis cs:54001,1749.96403012581)
--(axis cs:53001,1763.73388269896)
--(axis cs:52001,1770.5724419965)
--(axis cs:51001,1769.93450208761)
--(axis cs:50001,1754.3326810013)
--(axis cs:49001,1763.84198200169)
--(axis cs:48001,1778.75403313265)
--(axis cs:47001,1768.96928656687)
--(axis cs:46001,1763.61751128563)
--(axis cs:45001,1767.29535323174)
--(axis cs:44001,1764.29094142827)
--(axis cs:43001,1770.38771305771)
--(axis cs:42001,1771.40762208624)
--(axis cs:41001,1775.30319814567)
--(axis cs:40001,1770.17695087953)
--(axis cs:39001,1751.28164717109)
--(axis cs:38001,1759.75652662562)
--(axis cs:37001,1766.00164060315)
--(axis cs:36001,1780.31356635636)
--(axis cs:35001,1800.88851158882)
--(axis cs:34001,1795.20418840475)
--(axis cs:33001,1793.98247804965)
--(axis cs:32001,1776.7573207045)
--(axis cs:31001,1760.49425709153)
--(axis cs:30001,1753.44387674533)
--(axis cs:29001,1748.55806384198)
--(axis cs:28001,1726.26154921498)
--(axis cs:27001,1700.33016350948)
--(axis cs:26001,1697.31413819905)
--(axis cs:25001,1687.02739465722)
--(axis cs:24001,1675.2463486094)
--(axis cs:23001,1669.11121580552)
--(axis cs:22001,1657.65841625069)
--(axis cs:21001,1668.29814272304)
--(axis cs:20001,1669.97668546383)
--(axis cs:19001,1668.85707624455)
--(axis cs:18001,1675.3853914363)
--(axis cs:17001,1674.75464135424)
--(axis cs:16001,1667.76952234209)
--(axis cs:15001,1661.80602769021)
--(axis cs:14001,1651.27944393142)
--(axis cs:13001,1638.3135792216)
--(axis cs:12001,1623.00112064401)
--(axis cs:11001,1616.1597352529)
--(axis cs:10001,1599.52637584863)
--(axis cs:9001,1572.1529814949)
--(axis cs:8001,1563.50672401741)
--(axis cs:7001,1538.59229512336)
--(axis cs:6001,1527.07275743413)
--(axis cs:5001,1502.60301461631)
--(axis cs:4001,1506.85722012035)
--(axis cs:3001,1505.43905821021)
--(axis cs:2001,1491.42084859229)
--(axis cs:1001,775.708581578018)
--(axis cs:1,0.972368674688589)
--cycle;

\addplot [ultra thick, steelblue31119180]
table {%
1 1.525
1001 925.864999999999
2001 1617.745
3001 1628.525
4001 1624.265
5001 1619.045
6001 1646.065
7001 1657.825
8001 1685.725
9001 1692.925
10001 1720.325
11001 1736.725
12001 1742.665
13001 1754.505
14001 1772.685
15001 1789.145
16001 1798.765
17001 1809.765
18001 1812.945
19001 1807.085
20001 1808.105
21001 1810.845
22001 1802.265
23001 1818.225
24001 1826.505
25001 1840.285
26001 1852.905
27001 1860.485
28001 1883.645
29001 1902.905
30001 1906.285
31001 1914.525
32001 1932.965
33001 1950.805
34001 1955.005
35001 1961.325
36001 1947.045
37001 1937.365
38001 1931.505
39001 1926.225
40001 1946.565
41001 1951.185
42001 1945.345
43001 1944.885
44001 1942.185
45001 1943.985
46001 1947.025
47001 1951.425
48001 1967.645
49001 1950.88500000003
50001 1947.30500000003
51001 1962.48500000003
52001 1966.78500000003
53001 1961.64500000003
54001 1955.14500000003
55001 1948.24500000003
56001 1954.96500000003
57001 1970.12500000003
58001 1968.44500000003
59001 1983.46500000003
60001 1996.78500000003
61001 1999.74500000003
62001 2008.86500000003
63001 2007.96500000003
64001 2019.02500000003
65001 2025.24500000003
66001 2024.06500000003
67001 2028.00500000003
68001 2031.88500000003
69001 2026.96500000003
70001 2033.68500000003
71001 2029.10500000003
72001 2042.44500000003
73001 2043.18500000003
74001 2022.70500000003
75001 2018.18500000003
76001 2039.58500000003
77001 2039.58500000003
78001 2043.30500000003
79001 2044.18500000003
80001 2047.60500000003
81001 2045.18500000003
82001 2042.06500000003
83001 2043.38500000003
84001 2040.18500000003
85001 2037.20500000003
86001 2031.12500000003
87001 2046.90500000003
88001 2048.44500000003
89001 2046.54500000003
90001 2038.82500000003
91001 2041.42500000003
92001 2042.70500000003
93001 2044.38500000003
94001 2052.44500000003
95001 2064.50500000003
96001 2069.38500000003
97001 2073.32500000003
98001 2082.46500000003
99001 2090.62500000003
};
\end{axis}

\end{tikzpicture}
        \caption{Regret accumulation over time of the UCB QR algorithm. }
        \label{fig:qsmall_regret}
    \end{minipage}
    \hspace{0.04\linewidth}
    \begin{minipage}[c]{.3\linewidth}
        \centering
        \vspace{1.9em}
        \input{Figs/TikzPlot_Small_ActionFrequency.tex}
        \caption{Cumulative count of actions over the first 20 episodes of the UCB QR algorithm. }
        \label{fig:qsmall_action_freq}
    \end{minipage}
\end{figure}

\subsection{Changing parameters}
\label{sec:numerical_changing_theta}
Although our theoretical analysis of the UCB QR algorithm in Section~\ref{sec:learning_alg} is for a static environment, we analyze its robustness against changes in the payoff parameter $\theta$. 
In this experiment, we use the same payoff vector as in Section~\ref{sec:numerical_small} initially.
At one-third of the total runtime, we update the payoff parameter of line $(12)$ to $\theta_{12}' =0.5$. 
As a consequence, the optimal action under the new payoff parameter is action 6. 
We set $\alpha = H_0 = 10$, and $\beta = 1.01$. 
Our numerical results are based on 100 independent replications.

Observe in Figure~\ref{fig:qsmall_action_frequency_change} that upon the parameter change (at episode number 60, depicted by a vertical line), the UCB QR algorithm at first prefers action 4.
This makes sense since action 4 is closest to the initially optimal action 2, while it also includes line $(12)$ which is an optimal line under the updated parameter $\theta'$. 
From approximately episode number 100 onwards, the learning algorithm switches more to the optimal action 6. 
Hence, we see that the UCB QR algorithm suffers from a switch-over period, but eventually changes to the optimal action. 
The convergence after the parameter change is more slow than the initial convergence. We see this effect since the empirical estimators take into account the complete history. Therefore,  after the parameter update, learning is hindered by samples obtained before the update. 
We expect that robustness against changing parameter values can be improved by decreasing the episode length, or altering the empirical estimators in~\eqref{eq:UCB_update_algo_theta_hat} by decreasing the weight of observations from the past. This remains open for future work. 

Hence, the numerical results suggest that the UCB QR algorithm can correctly identify changes in the true system parameters.

\begin{figure}[H]
    \centering
    \input{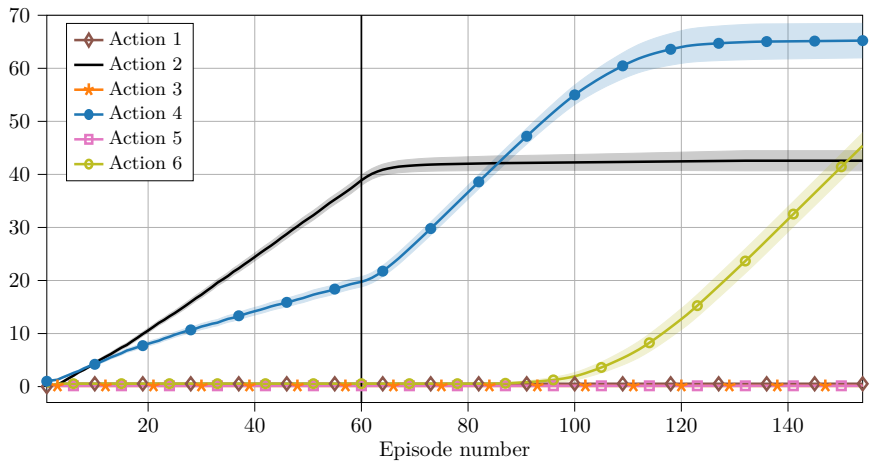}
    \caption{Cumulative count of actions per episode of the UCB QR algorithm when changing the true payoff parameter of line $(21)$ to $\theta_{21}'=0.5$ at one-third of the total runtime. }
    \label{fig:qsmall_action_frequency_change}
\end{figure}

\subsection{Complex queueing system}
\label{sec:numerical_big}
We consider the queueing system illustrated in Figure~\ref{fig:qbig_layout}.
This system is inspired by the operation of the call center of a real-world telecommunications company. 
We let $\eps = 0.05$ such that \eqref{eq:def_eps_insensitive} is satisfied. 
The number of basic feasible solutions of \LP{\theta,\eps} in \eqref{eq:LP_eps}, i.e., the number of actions, is $88$. 
The lower bound in \eqref{eq:c0_choice} is $165088$. 
However, since a large $\alpha$ slows down learning, we set $\alpha=10$.
We set $\beta=1.01$, and $H_0 = 10$. 
Our numerical results are based on 100 independent replications.

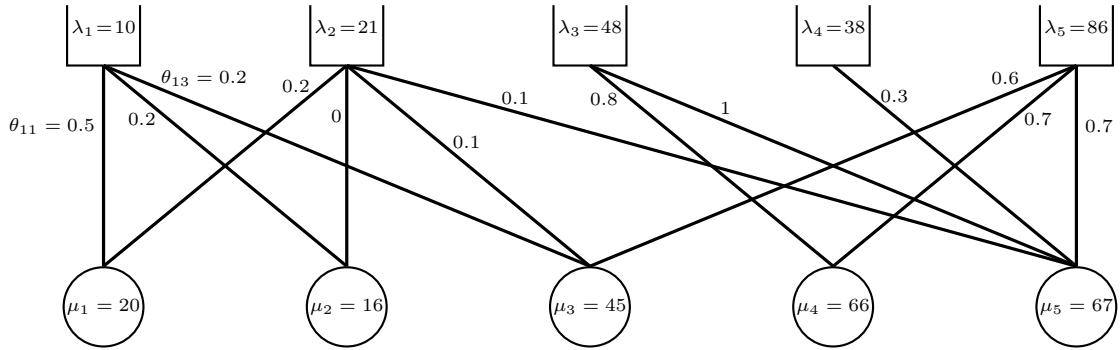
\begin{figure}[H]
    \begin{center}
        \begin{tikzpicture}[
            server/.style={circle, minimum size = 1.05cm, thick,draw},
            scale=.8
            ]
            \scriptsize
            \foreach \x/\y in {1/10,2/21,3/48,4/38,5/86}{ 
                \node[draw = none] at (4*\x,0) (queue\x) {};
                \node[above of = queue\x, yshift = -.5cm] (labelqueue\x) {\scriptsize $\lambda_{\x} \hspace{-.2em} = \hspace{-.2em}\y$} ;
                \draw[thick] (queue\x.center) --++(-.6,0) --++(0,1);
                \draw[thick] (queue\x.center) --++(.6,0) --++(0,1); 
            }

            \foreach \x/\y in {1/20,2/16,3/45,4/66,5/67}{ 
                \node[server] at (4*\x,-4) (server\x) {};
                \node at (server\x)  {\scriptsize $\mu_\x = \y$};
            }

            \draw[very thick, black] (queue1.center) -- (server1.north) node[pos=0.3, left]
            {$\theta_{11} = 0.5$};
            \draw[very thick, black] (queue1.center) -- (server2.north) node[pos=0.25, left,yshift=-.05cm]
            {$0.2$};
            \draw[very thick, black] (queue1.center) -- (server3.north) node[pos=0.1, right, yshift = .1cm]
            {$\theta_{13} = 0.2$};

            \draw[very thick, black] (queue2.center) -- (server1.north)node[pos=0.1, left, xshift=-.05cm]
            {$0.2$};
            \draw[very thick, black] (queue2.center) -- (server2.north)node[pos=0.25, left,xshift=.05cm]
            {$0$};
            \draw[very thick, black] (queue2.center) -- (server3.north)node[pos=0.4, right, yshift = .05cm]
            {$0.1$};
            \draw[very thick, black] (queue2.center) -- (server5.north)node[pos=0.2, right, yshift=.1cm]
            {$0.1$};

            \draw[very thick, black] (queue3.center) -- (server4.north) node[pos=0.15, left, yshift=-.05cm]
            {$0.8$};
            \draw[very thick, black] (queue3.center) -- (server5.north) node[pos=0.25, right,yshift=0.1cm]
            {$1$};

            \draw[very thick, black] (queue4.center) -- (server5.north) node[pos=0.15, right, xshift=.03cm]
            {$0.3$};

            \draw[very thick, black] (queue5.center) -- (server3.north) node[pos=0.1, left,yshift=.1cm]
            {$0.6$};
            \draw[very thick, black] (queue5.center) -- (server4.north) node[pos=0.25, right,yshift=-.05cm]
            {$0.7$};
            \draw[very thick, black] (queue5.center) -- (server5.north) node[pos=0.3, right]
            {$0.7$};
        \end{tikzpicture}
    \end{center}
    \caption{Big queueing system with $I=5$ customer classes and $J=5$ servers. The parameter values are shown in the figure.}
    \label{fig:qbig_layout}
\end{figure}

We consider three different scenarios with different levels of difficulty for adaptive learning:
\begin{itemize}
    \item {\bf Initial parameters:} 
    The first scenario is the system with initial parameters as shown in Figure~\ref{fig:qbig_layout}. 
    \item {\bf Minimal payoff discrepancy:} 
    In this scenario, all--but--one payoff parameters are equal. 
    In particular,  $\theta_{ij} = 0.5$ for all $(ij)\in\calL$ except for $\theta_{55} = 0.6$.
    The learning policy must identify this minimal payoff gap. 
    \item {\bf Balanced arrival rates:} In the third scenario, the arrival rates are equal for all customers, namely $\lambda_i = 42$ for all $i\in\calI$. This value is chosen so that the total load of the system is high, namely approximately 0.98. 
\end{itemize}

We compare the UCB QR algorithm with the other benchmark policies as described above.
The average number of customers in the system is provided in Table~\ref{tab:qbig_QL} and the payoff rate is illustrated in Figure~\ref{fig:qbig_payoff}.

Observe in Table~\ref{tab:qbig_QL} that the UCB QR and Oracle policies have a similar average number of customers in the system in all scenarios. 
The average queue lengths of the FCFS ALIS policy is much smaller than those of Oracle and the UCB QR algorithm. 
This is to be expected, since this policy inherently aims to decrease the queue length.
However, the nonidling nature of this policy leads to a gap in payoff rate with respect to the Oracle policy, as can be seen in Figure~\ref{fig:qbig_payoff}.
For the Greedy and Random policies, the average number of customers in the system is much higher than for the other policies. 
Despite the high average queue length, the average payoff of Greedy and Random in Figure~\ref{fig:qbig_payoff} is very comparable to those of FCFS ALIS. 
This is explained as follows: the policies obtain high reward by routing customer types with high average payoffs, while other customer types with overall lower average payoffs are not served at all.  
We find that for the initial parameters and minimal payoff discrepancy scenarios, especially the queue length of customer type 4 is large for these policies, since this customer type is only compatible with server 5 while the service capacity of server 5 is for a large part used by customer types 3 and 5. For this reason, the Greedy policy obtains an even lower average reward than the Random policy. 

Observe also in Figure~\ref{fig:qbig_payoff} that the convergence of the payoff rate of the UCB QR algorithm to the optimal payoff of the Oracle policy is much faster than in the small queueing system considered in Section~\ref{sec:numerical_small} (see Figure~\ref{fig:qsmall_running_reward}), i.e., the regret is smaller. 
This can be explained by the choice of $\alpha$ (10 vs.\ 364).
A smaller value of $\alpha$ implies that the episodes are shorter. 
Hence, per time interval there are more decision moments where the policy can switch between actions, which speeds up learning.

Comparing between the different scenarios, we find that in the minimal payoff discrepancy scenario, the UCB QR algorithm chooses suboptimal actions more frequently than in the other scenarios. However, since the suboptimality gaps of the suboptimal actions are small by construction, the average reward is still close to the optimal, as shown in Figure~\ref{fig:qbig_payoff_minimalpayoffdiscrepancy}. 
Out of the three scenarios, the convergence is slowest in Figure~\ref{fig:qbig_payoff_balanced}.
Our belief is that in this case, the algorithm needs approximately the same number of suboptimal episodes to learn the payoff parameters as in the initial scenario, while the suboptimality gaps are larger, i.e., the cost of exploration is higher. 

Lastly, we note that Algorithm~\ref{alg:learning_alg}, although it maintains system stability by construction, it is not incentivized to minimize or balance the queue lengths or server loads.
Possible extensions of the algorithm where customer waiting times, server loads or fairness constraints are taken into account are interesting for future research. 
For example, \LP{\theta,\eps} in~\eqref{eq:LP_eps} can be altered to include either waiting time constraints, or a penalty factor proportional to the average queue length in the objective function. 
This would result in a different action space for the algorithm. 

To summarize the findings of this section, we have shown that even in a complex queueing system with 88 different actions, the UCB QR algorithm converges quickly to the oracle reward, while maintaining reasonable queue lengths in different scenarios. 

\begin{table}[H]
    \centering
    \begin{tabular}{c|cc|ccc|cc}
        \diagbox{\small{Policy}}{\small{Scenario}} & \multicolumn{2}{c|}{\small{Initial parameters}} & \multicolumn{3}{c|}{\small{Min. payoff discrepancy}} & \multicolumn{2}{c}{\small{Balanced arrival rates}} \\\hline
         & $\E$ & $\sigma$ & & $\E$ & $\sigma$ & $\E$ & $\sigma$ \\\hline
         Oracle & 329 & 106 & & 269 & 80 & 277 & 90 \\\hline  
         UCB QR & 315 & 101 & & 147 & 31 & 248 & 71 \\\hline 
         FCFS ALIS & 9  & 0.3 & & 12 & 0.8 & 12 & 0.8 \\\hline 
         Greedy & 14069 & 142 & & 17372 & 148 & 14193 & 207 \\\hline 
         Random & 8491 & 138 & & 8483 & 130 & 9447 & 130 \\\hline 
    \end{tabular}
    \caption{Mean ($\E$) and standard deviation ($\sigma$) of the number of customers in the system over the entire simulation time for all policies and scenarios.}
    \label{tab:qbig_QL}
\end{table}

\begin{figure}[H]
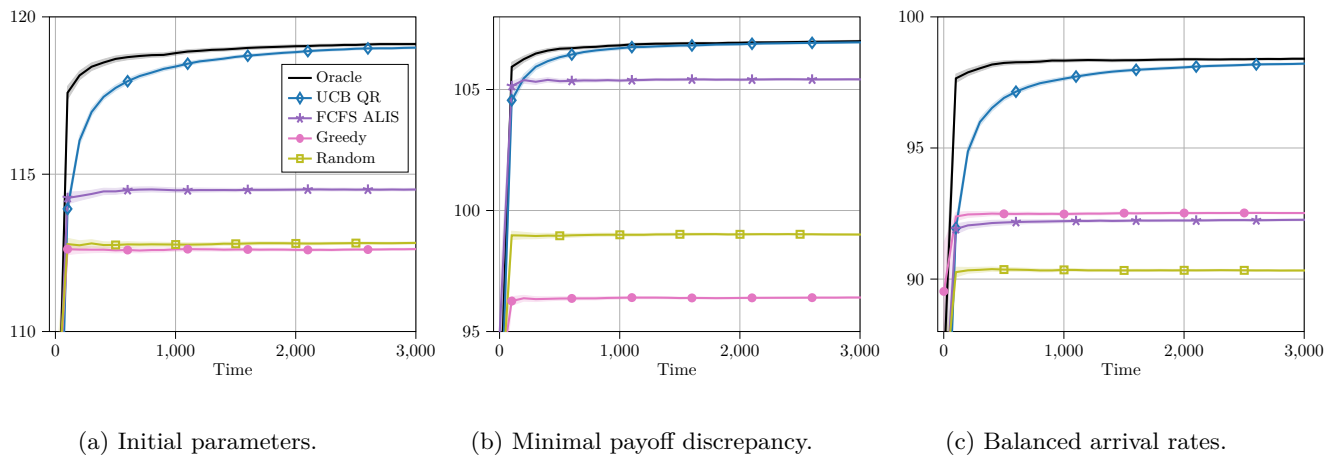

    \centering
    \begin{subfigure}{0.3\linewidth}
        \centering
        \input{Figs/TikzPlot_Big_RunningAvgTotalReward.tex}
        \caption{Initial parameters.}
        \label{fig:qbig_payoff_initial}
    \end{subfigure}
    \hspace{0.03\linewidth}
    \begin{subfigure}{0.3\linewidth}
        \centering
        \input{Figs/TikzPlot_Minimal_RunningAvgTotalReward.tex}
        \caption{Minimal payoff discrepancy.}
        \label{fig:qbig_payoff_minimalpayoffdiscrepancy}
    \end{subfigure}
    \hspace{0.03\linewidth}
    \begin{subfigure}{0.3\linewidth}
        \centering
        \input{Figs/TikzPlot_Balanced_RunningAvgTotalReward.tex}
        \caption{Balanced arrival rates.}
        \label{fig:qbig_payoff_balanced}
    \end{subfigure}
    \caption{Running average payoff rate over time for all policies and scenarios. The legend is the same for all subplots. }
    \label{fig:qbig_payoff}
    \hspace{2em}
\end{figure}

\newpage
\appendix
\section{Proof of Theorem~\ref{thm:regret_lb}}
\label{app:proof_lb}
\begin{proof}{}
    Let $\pi$ satisfy~\eqref{eq:def_stability} and~\eqref{eq:def_eps_restrictive}, and let $\theta\in\R_{\geq 0}^L$. 

    We first prove that the regret lower bound is strictly positive, i.e., the second inequality in \eqref{eq:regret_lb}.
    We have $K(\theta,k\ell) < \infty$ for any $(k\ell)\in O^c(\theta)$ as a consequence of Lemma~\ref{lem:suboptgap_nonempty} and assumption \eqref{eq:ass_KL_finite}. 
    Moreover, $\phi_{k\ell} > 0$ for any $(k\ell)\in O^c(\theta)$ by strict complementarity. 
    It remains to be shown that $O^c(\theta)$ is nonempty. 
    Let $B=\scrL\cup\scrJ$ be the optimal basis of \LP{\theta,\eps}, then $|B|=I+J$ 
    and $\scrL = O(\theta)$ by construction. 
    It follows from Lemma~\ref{lem:LP_basis} that $|\scrJ| \geq 1$ (since the corresponding spanning forest consists of at least one tree), hence $|O(\theta)| = |\scrL| \leq I+J-1$.
    Thus, since $O(\theta)\cup O^c(\theta)$, is a partition of all $L$ lines, 
    \begin{align}
        |O^c(\theta)| = L - |O(\theta)| \geq L - (I+J-1) > 0, 
    \end{align}
    where the last inequality holds by assumption (see Section~\ref{sec:arr_serv_process}). 
    This means that $O^c(\theta)\neq \emptyset$ and thus the regret lower bound is strictly positive. 

    Next, we prove the first inequality in \eqref{eq:regret_lb}.
    Using the regret decomposition in~\eqref{eq:regret_decomp}, it is sufficient to show that simultaneously
    \begin{align}\label{eq:tbp_regret_rho}
        \lim_{t\to\infty} \frac{1}{\ln(t)} \Biggl(
            \sum_{j\in\calJ}  w_j^\theta \Bigl(t (\mu_j-\eps) - \sum_{i\in\calC_j}  \E_\pi^\theta(D_{ij}(t))\Bigr)
            + \sum_{i\in\calI}  v_i^\theta \Bigl(t\lambda_i - \sum_{j\in\calS_i} \E_\pi^\theta(D_{ij}(t))\Bigr)
            \Biggr)
        \geq  0,
    \end{align}
    and
    \begin{align}\label{eq:tbp_regret_E}
        \lim_{t\to\infty}  \frac{\E_\pi^\theta(D_{k\ell}(t))}{\ln(t)} \geq \frac{1}{K(\theta,k\ell)},  \ \ \ \forall (k\ell) \in O^c(\theta).
    \end{align}

    \noindent {\it Proof of \eqref{eq:tbp_regret_rho}.}\\
    Since $v^\theta\in\R^I$, $w^\theta\in\R_{\geq 0}^J$ is the optimal solution of \dual{\theta,\eps} in \eqref{eq:LP_eps_dual}, we have for any $j\in\calJ$ that $w_j^\theta \geq 0$ by \eqref{eq:LP_eps_dual_constr_w}.
    Moreover, $\pi$ satisfies \eqref{eq:def_eps_restrictive} by assumption. 
    Therefore
    \begin{align}\label{eq:sumw}
        \lim_{t\to\infty} \frac{w_j^\theta  \bigl(t (\mu_j-\eps) - \sum_{i\in\calC_j}  \E_\pi^\theta(D_{ij}(t))\bigr)}{\ln(t)}
        \geq 0, \ \ \ \forall j\in\calJ.
    \end{align}
    On the other hand, we have for any $i\in\calI$ that $v_i^\theta \in\R$. Hence by \eqref{eq:def_stability}
    \begin{align}\label{eq:sumv}
        \lim_{t\to\infty} \frac{v_i^\theta  \bigl(t \lambda_i - \sum_{j\in\calS_i}  \E_\pi^\theta(D_{ij}(t))\bigr)}{\ln(t)}
        = 0, \ \ \ \forall i\in\calI.
    \end{align}
    Combining \eqref{eq:sumw} with \eqref{eq:sumv} and summing over all servers and customer types completes the proof of \eqref{eq:tbp_regret_rho}.\\

    \noindent {\it Proof of \eqref{eq:tbp_regret_E}.}\\
    Let $(k\ell) \in O^c(\theta)$. 
    By Markov's inequality, for any $t\geq 0$ and for any $0 < \eta < 1$,
    \begin{align}
        \frac{\E_\pi^\theta(D_{k\ell}(t))}{\ln(t)} \geq
        \frac{1-\eta}{K(\theta,k\ell)}
        \P_\pi^\theta\Bigl(D_{k\ell}(t) \geq \frac{(1-\eta)\ln(t)}{K(\theta,k\ell)}\Bigr).
    \end{align}
    Hence, to prove~\eqref{eq:tbp_regret_E} it suffices to show that for $0 < \eta < 1$
     \begin{align}
        \lim_{t\to\infty} \P_\pi^\theta\Bigl(D_{k\ell}(t) < \frac{(1-\eta)\ln(t)}{K(\theta,k\ell)}\Bigr) &= 0.
    \end{align}

    Let $0 <\eta < 1$, and let $a,\delta\in\R$ be such that $0 < \delta < \eta/(2-\eta)$ and  $0 < a < \delta$.
    We note that $I(\cdot,\cdot)$ is continuous and by \eqref{eq:ass_KL_finite} finite, and that $0 < K(\theta,k\ell) < \infty$ by Lemma~\ref{lem:suboptgap_nonempty}.
    This means that we can find a $\theta_{k\ell}'\in \Delta(\theta,k\ell)$ such that 
    \begin{align}\label{eq:ass_K2}
        0 < K(\theta,k\ell) \leq I(\theta_{k\ell}, \theta_{k\ell}') < (1+\delta)K(\theta,k\ell) < \infty.
    \end{align}
    Note that the choice of $\delta$ implies $1- \eta < (1-\delta)/(1+\delta)$, and therefore,
    \begin{align}
        \P_\pi^\theta\Bigl(D_{k\ell}(t) < \frac{(1-\eta)\ln(t)}{K(\theta,k\ell)}\Bigr)
        &\leq \P_\pi^\theta\Bigl(D_{k\ell}(t) < \frac{1-\delta}{1+\delta} \frac{\ln(t)}{K(\theta,k\ell)}\Bigr) 
        \stackrel{\mathrm{\eqref{eq:ass_K2}}}{\leq} \P_\pi^\theta\Bigl(D_{k\ell}(t) <  \frac{(1-\delta)\ln(t)}{I(\theta_{k\ell}, \theta_{k\ell}')}\Bigr).
    \end{align}
    We define the following functions,
    \begin{align}
        g(t) := \frac{(1-\delta)\ln(t)}{I(\theta_{k\ell}, \theta_{k\ell}')}, \ \ \
        c(t) := (1-a)\ln(t).
        \label{eq:def_g_c_t}
    \end{align}
    We next prove that
    \begin{align}
        \lim_{t\to\infty} \P_\pi^\theta(D_{k\ell}(t) < g(t)) = 0.
    \end{align}

    Consider the log-likelihood ratio of the parameter values $\theta_{k\ell}$ and $\theta_{k\ell}'$ given a sequence $X_1,X_2,\dots,X_n$ of observations from the reward distribution belonging to type-$k$ customers being served by server $\ell$, which is defined by
    \begin{align}\label{eq:def_loglike}
        L_n = \sum_{i=1}^n l_i = \sum_{i=1}^{n} \ln\Bigl(\frac{P_{\theta_{k\ell}}(X_i)}{P_{\theta_{k\ell'}}(X_i)}\Bigr).
    \end{align}

    By the law of total probability, we have
    \begin{align}\label{eq:prob-split}
        \P_\pi^\theta(D_{k\ell}(t) < g(t)) &=
        \P_\pi^\theta(D_{k\ell}(t) < g(t), \ L_{D_{k\ell}(t)} > c(t))
        + \P_\pi^\theta(D_{k\ell}(t) < g(t), \ L_{D_{k\ell}(t)} \leq c(t)).
    \end{align}
    To complete the proof, we will show that both terms of \eqref{eq:prob-split} are $o(1)$ as $t\to\infty$, i.e., we will prove separately that
    \begin{align}\label{eq:tbp_L_geq_c}
        \lim_{t\to\infty} \P_\pi^\theta(D_{k\ell}(t) < g(t), L_{D_{k\ell}(t)} > c(t)) = 0
    \end{align}
    and
    \begin{align}\label{eq:tbp_L_leq_c}
        \lim_{t\to\infty} \P_\pi^\theta(D_{k\ell}(t) < g(t), L_{D_{k\ell}(t)} \leq c(t)) = 0.
    \end{align}

    \noindent {\it Proof of \eqref{eq:tbp_L_geq_c}.}
    In order to prove \eqref{eq:tbp_L_geq_c}, we will use the following result, proven in \citep[Lemma 2(ii)]{Burnetas1997}:
    \emph{
        Let $\{Z_i\}_{i\in\N}$ be an i.i.d.\ sequence of random variables that satisfies the strong law of large numbers, i.e.,
        $
            \P\Bigl( \lim_{n\to\infty}  \sum_{i=1}^n Z_i/n = \mu \Bigr) = 1, 
        $
        for some constant $\mu$.
        Let $h_n$ be an increasing sequence of positive constants such that $h_n \to \infty$ as $n\to\infty$. Then, for any $\zeta > 0$,
        $
            \lim_{n\to\infty} \P\Bigl(\max_{s\leq \lfloor h_n\rfloor } sum_{i=1}^s Z_i/h_n > (1+\zeta) \mu  \Bigr) = 0.
        $
    }

    We leverage the definition of the log-likelihood ratio \eqref{eq:def_loglike} and the properties of the maximum to write
    \begin{align}\label{eq:L_geq_c_ub}
        \P_\pi^\theta(D_{k\ell}(t) < g(t), \ L_{D_{k\ell}(t)} > c(t))
        &\leq \P_\pi^\theta \Bigl(\max_{s\leq \lfloor g(t)\rfloor } L_s > c(t) \Bigr).
    \end{align}
    Next, we divide by $g(t) > 0$:
    \begin{align}
        \P_\pi^\theta \Bigl(\max_{s\leq \lfloor g(t)\rfloor } L_s > c(t) \Bigr)  &= \P_\pi^\theta \Bigl(\max_{s\leq  \lfloor g(t)\rfloor } \frac{L_s}{g(t)} > \frac{c(t)}{g(t)}  \Bigr) \\
        &\stackrel{\mathrm{\eqref{eq:def_g_c_t}}}{=} \P_\pi^\theta \Bigl(\max_{s\leq  \lfloor g(t)\rfloor } \frac{L_s}{g(t)} > \frac{1-a}{1-\delta}I(\theta_{k\ell},\theta_{k\ell}')  \Bigr) \\
        &= \P_\pi^\theta \Bigl(\max_{s\leq  \lfloor g(t)\rfloor } \frac{L_s}{g(t)} > \Bigl(1+ \frac{\delta - a}{1-\delta}\Bigr)I(\theta_{k\ell},\theta_{k\ell}')  \Bigr).
    \end{align}
    We make a few observations.
    Firstly, note that $\ln(t)\to\infty$ and therefore $g(t) \to \infty$ as $t\to\infty$.
    Secondly, $(\delta - a)/(1-\delta) > 0$, since $0<a<\delta$ by definition and $\delta <1$.
    Thirdly, by definition, for every $i\in[n]$
    \begin{align}
        \E_\pi^\theta(l_i)
        &=\E_\pi^\theta\Bigl(\ln\Bigl(\frac{P_{\theta_{k\ell}}(X_i)}{P_{\theta_{k\ell'}}(X_i)}\Bigr)\Bigr)
        = I(\theta_{k\ell},\theta_{k\ell}'),
        \label{eq:LB_proof_KL_exp}
    \end{align}
    hence by the strong law of large numbers,
    \begin{align}
        \P_\pi^\theta\Bigl(\lim_{n\to\infty} \frac{L_n}{n} = I(\theta_{k\ell},\theta_{k\ell}') \Bigr) = 1.
    \end{align}
    Therefore, we can apply \citep[Lemma 2(ii)]{Burnetas1997}  to conclude that
    \begin{align}
        \lim_{t\to\infty} \P_\pi^\theta \Bigl(\max_{s\leq  \lfloor g(t)\rfloor } \frac{L_s}{g(t)} > \Bigl(1+ \frac{\delta - a}{1-\delta}\Bigr)I(\theta_{k\ell},\theta_{k\ell}')  \Bigr)  = 0.
    \end{align}
    Combining with \eqref{eq:L_geq_c_ub}, this completes the proof of \eqref{eq:tbp_L_geq_c}.\\

    \noindent {\it Proof of \eqref{eq:tbp_L_leq_c}.}
    Consider the same queueing network structure but with a slightly different payoff vector $\theta' = A(\theta,k\ell,\theta_{k\ell}')$, which will be our `confusing' system. Let $x'$ be the optimal solution to \LP{\theta',\eps}. 
    In order to prove \eqref{eq:tbp_L_leq_c}, we describe a change--of--measure transformation to relate the measure of an event under probability law $\P_\pi^\theta$ by the measure of that same event under the law $\P_\pi^{\theta'}$. \\

    Denote the event $\{D_{k\ell}(t) = m, \ L_m \leq c(t)\}$ by $B$.
    Given  event $B$ and noting that $\theta$ and $\theta'$ differ only  at index $k\ell$, we find that the \gls{RN} derivative of $\P_\pi^\theta$ with respect to $\P_\pi^{\theta'}$ has the form 
    $\textrm{d}\P_\pi^\theta / \textrm{d}\P_\pi^{\theta'} = \prod_{i=1}^m P_{\theta_{k\ell}}(X_i)/P_{\theta_{k\ell'}}(X_i)$.
    Moreover, the rewards are sampled independently from the rest of the system.
    Hence, we can write
    \begin{align}
        \P_\pi^\theta(B)
        &= \int_B \prod_{i=1}^m \frac{P_{\theta_{k\ell}}(X_i)}{P_{\theta_{k\ell'}}(X_i)} \textrm{d}\P_\pi^{\theta'}.
        \label{eq:LB_proof_B_int_prod}
    \end{align}
    We exponentiate and use the definition of $L_m$ to write
    \begin{align}
        \P_\pi^\theta(B)
        &= \int_B \exp\Bigl(\sum_{i=1}^m \ln\Bigl(\frac{P_{\theta_{k\ell}}(X_i)}{P_{\theta_{k\ell'}}(X_i)}\Bigr)\Bigr) \textrm{d}\P_\pi^{\theta'}
        = \int_B e^{L_m} \textrm{d}\P_\pi^{\theta'}.
        \label{eq:LB_proof_meas_change}
    \end{align}
    Lastly, the definition of $B$ implies that $L_m \leq c(t)$, hence
    \begin{align}
        \P_\pi^\theta( B) &\leq \int_B e^{c(t)} \textrm{d}\P_\pi^{\theta'} = e^{c(t)} \P_\pi^{\theta'}(B).
        \label{eq:change_measure}
    \end{align}

    We next prove \eqref{eq:tbp_L_leq_c}.
    Note that $\{D_{k\ell}(t) < g(t), L_{D_{k\ell}(t)} \leq c(t)\}$ is a disjoint union of events of the form $\{D_{k\ell}(t) = m, L_{D_{k\ell}(t)} \leq c(t)\}$ for $m<g(t)$, so we can apply the change--of--measure transformation \eqref{eq:change_measure} to write
    \begin{align}
        \P_\pi^\theta(D_{k\ell}(t) < g(t), L_{D_{k\ell}(t)} \leq c(t))
        &\leq e^{c(t)} \P_\pi^{\theta'}(D_{k\ell}(t) < g(t), L_{D_{k\ell}(t)} \leq c(t)).
    \end{align}
    Recall \eqref{eq:def_A_vector}, \eqref{eq:def_Delta_set}, and that $\theta_{k\ell}'\in \Delta(\theta,k\ell)$ by assumption, implying $(k\ell)\in O(\theta')$.
    Therefore, by the definition of $g(t)$ in~\eqref{eq:def_g_c_t} and assumption~\eqref{eq:consistency},
    \begin{align}
        \P_\pi^{\theta'}(D_{k\ell}(t) < g(t), L_{D_{k\ell}(t)} \leq c(t))
        \leq \P_\pi^{\theta'}(D_{k\ell}(t) < g(t)) 
        = \P_\pi^{\theta'}\Bigl(D_{k\ell}(t) < \frac{(1-\delta)\ln(t)}{I(\theta_{k\ell}, \theta_{k\ell}')}\Bigr) = o(1) \ \mathrm{as} \ t\to\infty.
    \end{align}
    This completes the proof. 

    \Halmos \\
\end{proof}

\newpage
\section{Proof of Theorem~\ref{thm:regret_ub}}
\label{app:proof_ub}
\label{app:regret_ub}
\begin{proof}{}
    Let
    \begin{align}
        \calT_k := \sum_{m=1}^k H_m
        \label{eq:def_duration_k_episodes}
    \end{align}
    denote the total duration of the first $k\in\N_{\geq 1}$ episodes. 
    Then,~$\calD_{ij}(\calT_k)$ is the number of type-$(ij)$ departures up to and including episode~$k$. 
    Moreover, let~$\calD_{ij}^{m}$ denote the number of type-$(ij)$ departures within episode $m\in\N_{\geq 1}$.
    Note that this quantity is similar to~$D_{ij}^m$ as introduced in Section~\ref{sec:ucbqr_proof_outline}, although~$\calD_{ij}^m$ includes the departures within the warmup time of the episode. Note that $D_{ij}(\calT_k) = \sum_{m=1}^k \calD_{ij}^m$ for any $(ij)\in\calL$ and $k\in\N_{\geq 1}$. 
    
    First, we show that Algorithm~\ref{alg:learning_alg} satisfies the consistency assumption~\eqref{eq:consistency}.
    Let $\theta\in\R_{\geq 0}^L$, $(ij)\in O(\theta)$, $b \in\R_{> 0}$, $k\in\N_{\geq 1}$ and  $z_k := \lceil k/2 \rceil$.
    We use a similar analysis as in the proof of Lemma~\ref{lem:prob_insuf_samples} in Appendix~\ref{app:prob_insuf_samples}.
    In particular, consider the stationary queue length process as introduced in Section~\ref{sec:step1} (dropping the action $a$ from notation)
    and the event $\underline{\Omega}_{ij}^{m-1}$ as introduced in~\eqref{eq:def_omega}. 
    
    Recall also that $A_m$ is the action chosen by Algorithm~\ref{alg:learning_alg} in Line~\ref{line:maxucb} in episode number~$m$. Recall that 1 is the optimal action by assumption.
    Event $\underline{\Omega}_{ij}^{m-1}$ implies that the number of departures in episode $m-1$ after the warmup time is at least as large as the number of departures of the stationary queue length process. 
    Consider the event $A_{m-1}=A_m=1$. In this case, the stationary measure is the same for episodes $m-1$ and $m$, so $\underline{\Omega}_{ij}^{m-1}$ implies that $\calD_{ij}^m \geq \hat{\calD}_{ij}^m$, where $\hat{\calD}_{ij}^m$ is the number of type-$(ij)$ departures of the stationary queue length process during episode $m$. 
    Lastly, from Burke's theorem \mbox{\citep[II.2.4 Theorem 2.1]{Cohen1981}} and the Poisson split and merge properties \citep{Cinlar1968}, it follows that on the event $A_m=1$, we have $\hat{\calD}_{ij}^m \sim \poi{x_{ij}^1 H_m}$.

    By the law of total probability, 
    \begin{align}
        \P^\theta(D_{ij}(\calT_k) < b \ln(\calT_k)) 
        &\leq \P^\theta\Bigl(\sum_{m=2}^k \calD_{ij}^m < b\ln(\calT_k)\Bigr) \\
        &\leq \P^\theta\Bigl(\sum_{m=2}^k \calD_{ij}^m < b \ln(\calT_k) \Bigm| \sum_{m=2}^k \ind{A_m=A_{m-1}=1, \ \underline{\Omega}_{ij}^{m-1}} > z_k\Bigr) \\
        &\qquad + \P^\theta\Bigl(\sum_{m=2}^k \ind{A_m=A_{m-1}=1, \ \underline{\Omega}_{ij}^{m-1}} \leq z_k\Bigr).
        \label{eq:dep_leq_log_split}
    \end{align}
    We analyze these probabilities individually. Let $X_{ij}^m \sim \poi{x_{ij}^1 H_m}$ be independent across $m\in\N_{\geq 1}$.
    We note that $\calD_{ij}^m \geq X_{ij}^m$ whenever $A_m=A_{m-1}=1$ and $\underline{\Omega}_{ij}^{m-1}$, hence
    \begin{align}
        \P^\theta\Bigl(\sum_{m=2}^k \calD_{ij}^m < b\ln(\calT_k) \bigm| \sum_{m=2}^k \ind{A_m=A_{m-1}=1, \ \underline{\Omega}_{ij}^{m-1}} > z_k\Bigr) 
        &\leq \P^\theta\Bigl(\sum_{m=2}^{z_k} X_{ij}^m < b\ln(\calT_k) \Bigr).
        \label{eq:dep_leq_log_good}
    \end{align}
    Note that $\sum_{m=2}^{z_k} X_{ij}^m$ is again Poisson distributed. 
    Moreover,  note that in light of the definition of $\calT_k$ in \eqref{eq:def_duration_k_episodes}, there exists a $k_0\in\N$ such that $\sum_{m=2}^{z_k} x_{ij}^1 H_m \geq b\ln(\calT_k)$ for any $k\geq k_0$. 
    Hence, it follows from Lemma~\ref{lem:pois_tail} in Appendix~\ref{app:prob_insuf_samples} that for $k\geq k_0$, 
    \begin{align}
        \P^\theta\Bigl(\sum_{m=2}^{z_k} X_{ij}^m < b\ln(\calT_k) \Bigr) 
        &=
        \P^\theta\Bigl(\sum_{m=2}^{z_k} X_{ij}^m - \sum_{m=2}^{z_k} x_{ij}^1 H_m \leq  b\ln(\calT_k) - \sum_{m=2}^{z_k} x_{ij}^1 H_m \Bigr) \\
        &\leq 
        \exp\Bigl(\frac{-(b\ln(\calT_k) - \sum_{m=2}^{z_k} x_{ij}^1 H_m)^2}{2 \sum_{m=2}^{z_k} x_{ij}^1 H_m}\Bigr).
    \end{align}
    It is readily verified that
    \begin{align}
        \lim_{k\to\infty} \frac{(b\ln(\calT_k) - \sum_{m=2}^{z_k} x_{ij}^1 H_m)^2}{2 \sum_{m=2}^{z_k} x_{ij}^1 H_m}  = \infty.
    \end{align}
    Hence, \eqref{eq:dep_leq_log_good} is $o(1)$ as $k\to\infty$.

    For the other probability in~\eqref{eq:dep_leq_log_split}, note
    \begin{align}
        \P^\theta\Bigl(\sum_{m=2}^k \ind{A_m=A_{m-1}=1, \ \underline{\Omega}_{ij}^{m-1}} \leq z_k\Bigr) 
        &= 
        \P^\theta\Bigl(k - \sum_{m=2}^k \ind{A_m=A_{m-1}=1, \ \underline{\Omega}_{ij}^{m-1}} \geq k - z_k\Bigr) \\
        &\leq 
        \P^\theta\Bigl(\sum_{m=2}^k \Bigl(\ind{A_m\neq 1} + \ind{A_{m-1}\neq 1} + \ind{(\underline{\Omega}_{ij}^{m-1})^c} \Bigr) \geq k - z_k\Bigr).
    \end{align} 
    From Boole's inequality, we obtain
    \begin{align}
        &\P^\theta\Bigl(\sum_{m=2}^k \Bigl(\ind{A_m\neq 1} + \ind{A_{m-1}\neq 1} + \ind{(\underline{\Omega}_{ij}^{m-1})^c} \Bigr) \geq k - z_k\Bigr) \\
        &\qquad \leq 2 \P^\theta\Bigl(\sum_{m=1}^k \ind{A_m\neq 1} \geq \frac{k - z_k}{3}\Bigr) 
        + 
        \P^\theta\Bigl(\sum_{m=1}^k \ind{(\underline{\Omega}_{ij}^{m-1})^c} \geq \frac{k - z_k}{3}\Bigr).
    \end{align}
    In view of the definition of $S^a(k)$ in~\eqref{eq:Sak_def}, $k-z_k = \Theta(k)$, Lemma~\ref{lem:n_bad_episodes}, and Markov's inequality, we have
    \begin{align}
        \P^\theta\Bigl(\sum_{m=1}^k \ind{A_m\neq 1} \geq \frac{k - z_k}{3}\Bigr) 
        &= \P^\theta\Bigl(\sum_{a\neq 1} S^a(k) \geq \frac{k - z_k}{3}\Bigr) 
        \leq \frac{3 \sum_{a\neq 1} \E(S^a(k))}{k-z_k} = o(1) \ \mathrm{as} \ k\to\infty.
    \end{align}
    Lastly, note that $\P^\theta((\underline{\Omega}_{ij}^m)^c) \leq 1/m^\beta$ (see~\eqref{eq:P_omega_bnd}) and that the $\underline{\Omega}_{ij}^m$ are mutually independent over all episodes. 
    Since $k-z_k = \Theta(k)$ is increasing in $k$ and since $\beta > 1$, we conclude that
    \begin{align}
        \P^\theta\Bigl(\sum_{m=1}^k \ind{(\underline{\Omega}_{ij}^{m-1})^c} \geq \frac{k - z_k}{3}\Bigr) = o(1) \ \mathrm{as} \ k\to\infty.
    \end{align}
    Since $\theta\in\R_{\geq 0}^L$ was chosen arbitrarily,  Algorithm~\ref{alg:learning_alg} satisfies~\eqref{eq:consistency} for any $\xi\in\R_{\geq 0}^L$.

    Next, we prove the regret upper bound by analyzing the terms in the regret decomposition~\eqref{eq:regret_decomp} individually. \\

    \noindent {\it Analysis of term~\textrm{I} in~\eqref{eq:regret_decomp}.}\\
    Let $(ij)\in O^c(\theta)$ be a suboptimal line. 
    Since $(ij)\in O^c(\theta)$, there are no departures of type~$(ij)$ in episodes where Algorithm~\ref{alg:learning_alg} chooses the optimal action~1. 
    Therefore, using \eqref{eq:def_duration_k_episodes} and the law of total probability,
    \begin{align}
        \E^\theta(D_{ij}(\calT_k)) 
        &= \sum_{m=1}^k \E^\theta(\calD_{ij}^{m}) 
        = \sum_{m=1}^k \sum_{a\in\calA\setminus\{1\}}  \E^\theta\bigl(\calD_{ij}^m  \bigm| A_m = a \bigr) \P(A_m=a).
        \label{eq:ucbqr_proof_bnd_total_dep}
    \end{align}
    Note that the expected workload for server $j$ under the $\FCFSRR{a}$ routing algorithm in Algorithm \ref{alg:routing} does not exceed  capacity $\mu_j -\eps$ , since the routing rates~$x_{k\ell}^a$ for all lines~$(k\ell)\in\calL$ satisfy constraint~\eqref{eq:LP_eps_constr_mu}. 
    Hence, for any $a\in\calA$ 
    \begin{align}
        \E^\theta\bigl(\calD_{ij}^{m} \bigm| A_m = a \bigr) \leq  H_m(\mu_j-\eps).
        \label{eq:ucbqr_proof_bnd_per_episode}
    \end{align}
    Applying~\eqref{eq:ucbqr_proof_bnd_per_episode} in~\eqref{eq:ucbqr_proof_bnd_total_dep} gives
    \begin{align}
        \E^\theta(D_{ij}(\calT_k)) 
        &\leq \sum_{m=1}^k \sum_{a\in\calA\setminus\{1\}}  H_m (\mu_j-\eps) \E^\theta(\ind{A_m=a})
        \leq H_k (\mu_j-\eps) \sum_{a\in\calA\setminus\{1\}}  \E^\theta(S^a(k)).
    \end{align}
    where we used in last inequality that the episode length $H_m$ is nondecreasing in $m$ by~\eqref{eq:ucbqr_episode_length}, and definition~\eqref{eq:Sak_def}.
    We use $\calT_k\geq k$ and Lemma~\ref{lem:n_bad_episodes} to obtain that for any suboptimal action $a\in\calA\setminus\{1\}$, 
    \begin{align}
        \lim_{k\to\infty} \frac{\E^\theta(S^a(k))}{\ln^{\beta}(\calT_k)} 
        &\leq \lim_{k\to\infty} \frac{\E^\theta(S^a(k))}{\ln^{\beta}(k)} 
        \leq 1.
        \label{eq:ucbqr_proof_bnd_bad_episode}
    \end{align}
    Moreover, $\lim_{k\to\infty} \ln(2Jk)/\ln(k) = 1$ and $\alpha\geq 1$ (recall~\eqref{eq:c0_choice}), so $\lim_{k\to\infty} H_k / (\alpha\ln^\beta(k)) \leq 1$. Hence, 
    \begin{align}
        \lim_{k\to\infty} \frac{\E^\theta(D_{ij}(\calT_k))}{\alpha\ln^{2\beta}(\calT_k)} 
        \leq \lim_{k\to\infty} \frac{H_k (\mu_j-\eps) \sum_{a\in\calA\setminus\{1\}}  \E^\theta(S^a(k))}{\alpha\ln^{2\beta}(k)}
        \leq (\mu_j-\eps)|\calA|.
        \label{eq:ucbqr_proof_bnd_part_I}
    \end{align}

    \noindent {\it Analysis of term~\textrm{II} in~\eqref{eq:regret_decomp}.}\\

    Recall that $(v_i^\theta)_{i\in\calI}$ and $(w_j^\theta)_{j\in\calJ}$ in \eqref{eq:regret_decomp} are the optimal dual variables, i.e., the minimizers of \eqref{eq:LP_eps_dual}.
    By \eqref{eq:LP_eps_dual_constr_w}, we have $w_j^\theta\geq 0$ for all $j\in\calJ$.
    The contribution of term~\textrm{II} in~\eqref{eq:regret_decomp} is clearly zero for $j\in\calJ$ with $w_j^\theta = 0$. 

    Let $j\in\calJ$ be such that $w_j>0$. 
    Recall that $x^1$ and $v^\theta,w^\theta$ satisfy the complementary slackness conditions \eqref{eq:slack_cond_w_eps}, thus $w_j^\theta > 0$ implies $\mu_j - \eps = \sum_{i\in\calC_j} x_{ij}^1$.
    Hence, 
    \begin{align}
        \lim_{k\to\infty} \frac{\calT_k (\mu_j-\eps) - \sum_{i\in\calC_j}  \E^\theta(D_{ij}(\calT_k))}{\alpha\ln^{2\beta}(\calT_k)}
        &= \lim_{k\to\infty} \sum_{i\in\calC_j} \frac{ \calT_k x_{ij}^1 -  \E^\theta(D_{ij}(\calT_k))}{\alpha\ln^{2\beta}(\calT_k)}.
        \label{eq:ucbqr_proof_bnd_tbp_II}
    \end{align}
    Let $i\in\calC_j$. 
    By~\eqref{eq:def_duration_k_episodes} and the law of total probability, 
    \begin{align}
        \calT_k x_{ij}^1 -  \E^\theta(D_{ij}(\calT_k))  
        &= \sum_{m=1}^k \sum_{a\in\calA} \E^\theta\bigl((H_m x_{ij}^1 - \calD_{ij}^m)\ind{A_m=a}\bigr).
        \label{eq:ucbqr_proof_II_dep_actions}
    \end{align}
    We consider the optimal action $a=1$ and all suboptimal actions $a\neq 1$ separately.

    For suboptimal actions, note that $\calD_{ij}^m$ is nonnegative, the episode length $H_m$ is nondecreasing in $m$, and recall \eqref{eq:Sak_def}, hence
    \begin{align}
        \sum_{m=1}^k \sum_{a\in\calA\setminus\{1\}} \E^\theta\bigl((H_m x_{ij}^1 - \calD_{ij}^m)\ind{A_m=a}\bigr) 
        &\leq \sum_{m=1}^k \sum_{a\in\calA\setminus\{1\}} H_m x_{ij}^1 \E^\theta(\ind{A_m=a}) \\
        &\leq H_k x_{ij}^1 \sum_{a\in\calA\setminus\{1\}}  \E^\theta(S^a(k)).
        \label{eq:ucbqr_proof_II_subopti}
    \end{align}
    
    On the other hand, for the optimal action, we consider for each episode $m$ also the preceding episode~$m-1$. 
    We let~$A_0 := 2$. 
    Then
    \begin{align}
        \sum_{m=1}^k \E^\theta\bigl((H_m x_{ij}^1 - \calD_{ij}^m) \ind{A_m=1}\bigr) 
        & = \sum_{m=1}^k \E^\theta\bigl((H_m x_{ij}^1 - \calD_{ij}^m) (\ind{A_m=1, A_{m-1} \neq 1}+\ind{A_m=A_{m-1}=1})\bigr).
        \label{eq:ucbqr_proof_II_opt_split}
    \end{align}
    Since $\calD_{ij}^m$ is nonnegative we have, similar to~\eqref{eq:ucbqr_proof_II_subopti},
    \begin{align}
        \sum_{m=1}^k \E^\theta\bigl((H_m x_{ij}^1 - \calD_{ij}^m) \ind{A_m=1,A_{m-1}\neq 1}\bigr)
        \leq H_k x_{ij}^1 \sum_{m=1}^k \E^\theta(\ind{A_{m-1}\neq 1}) 
        = H_k x_{ij}^1 \sum_{a\in\calA\setminus\{1\}} \E^\theta(S^a(k)).
        \label{eq:ucbqr_proof_II_opt_split_dep_geq}
    \end{align}
    
    For the other term in~\eqref{eq:ucbqr_proof_II_opt_split},  consider the event $\underline{\Omega}_{ij}^{m-1}$ as introduced in~\eqref{eq:def_omega}. 
    By the law of total probability, 
    \begin{align}
        &\E^\theta\bigl((H_m x_{ij}^1 - \calD_{ij}^m) \ind{A_m=1=A_{m-1}=1}\bigr)
        =\E^\theta\bigl((H_m x_{ij}^1 - \calD_{ij}^m) \ind{A_m=1=A_{m-1}=1} \bigm| \underline{\Omega}_{ij}^{m-1} \bigr)\P^\theta( \underline{\Omega}_{ij}^{m-1} ) \\
        &\qquad + \E^\theta\bigl((H_m x_{ij}^1 - \calD_{ij}^m) \ind{A_m=1=A_{m-1}=1} \bigm| (\underline{\Omega}_{ij}^{m-1})^c \bigr)\P^\theta( (\underline{\Omega}_{ij}^{m-1})^c ).
        \label{eq:ucbqr_proof_II_cond_dep_split}
    \end{align}
    Since $\underline{\Omega}_{ij}^{m-1}$ implies that $\calD_{ij}^m \geq \hat{\calD}_{ij}^m$, and $\hat{D}_{ij}^m\ind{A_m=1} \sim \poi{x_{ij}^1 H_m}$, we have
    \begin{align}
        \E^\theta\bigl((H_m x_{ij}^1 - \calD_{ij}^m) \ind{A_m=1=A_{m-1}=1} \bigm| \underline{\Omega}_{ij}^{m-1} \bigr)
        &\leq \E^\theta\bigl((H_m x_{ij}^1 - \hat{\calD}_{ij}^m) \ind{A_m=1=A_{m-1}=1} \bigm| \underline{\Omega}_{ij}^{m-1} \bigr) = 0.
        \label{eq:ucbqr_proof_II_cond_dep_1}
    \end{align}
    On the other hand, by~\eqref{eq:P_omega_bnd}, $\P^\theta((\underline{\Omega}_{ij}^{m-1})^c) \leq 1/m^\beta$.
    This, together with $\calD_{ij}^m \geq 0$ gives
    \begin{align}
        \E^\theta\bigl((H_m x_{ij}^1 - \calD_{ij}^m) \ind{A_m=1=A_{m-1}=1} \bigm| (\underline{\Omega}_{ij}^{m-1})^c \bigr)\P^\theta( (\underline{\Omega}_{ij}^{m-1})^c ) 
        \leq \frac{H_m x_{ij}^1}{m^\beta}.
        \label{eq:ucbqr_proof_II_cond_dep_2}
    \end{align}
    Hence, by~\eqref{eq:ucbqr_proof_II_cond_dep_split},~\eqref{eq:ucbqr_proof_II_cond_dep_1}, and~\eqref{eq:ucbqr_proof_II_cond_dep_2} we have
    \begin{align}
        \sum_{m=2}^k \E^\theta\bigl((H_m x_{ij}^1 - \calD_{ij}^m) \ind{A_m=A_{m-1}=1} \bigr) 
        &\leq \sum_{m=2}^k \frac{H_m x_{ij}^1}{m^\beta}.
        \label{eq:ucbqr_proof_II_opt_split_dep2}
    \end{align}

    Substitution of~\eqref{eq:ucbqr_proof_II_subopti},~\eqref{eq:ucbqr_proof_II_opt_split_dep_geq}, and~\eqref{eq:ucbqr_proof_II_opt_split_dep2} into~\eqref{eq:ucbqr_proof_II_dep_actions} yields
    \begin{align}
        \calT_k x_{ij}^1 - \E^\theta(D_{ij}(\calT_k))  
        &\leq 2 H_k x_{ij}^1 \sum_{a\in\calA\setminus\{1\}} \E^\theta(S^a(k)) + \sum_{m=2}^k \frac{H_mx_{ij}^1}{m^\beta}.
    \end{align}

    For the first term, we invoke Lemma~\ref{lem:n_bad_episodes} similarly as in~\eqref{eq:ucbqr_proof_bnd_part_I}. 
    For the other term, we note that $m^{\beta-1}\geq H_m$ for $m\geq 1$ large enough since $\beta > 1$, and $\sum_{m=1}^k 1/m = \calO(\ln(k))$. 
    Hence, 
    \begin{align}
        \lim_{k\to\infty} \frac{\calT_k x_{ij}^1 -  \E^\theta(D_{ij}(\calT_k))}{\alpha \ln^{2\beta}(k)} 
        &\leq x_{ij}^1 |\calA|.
        \label{eq:ubqr_II_lim}
    \end{align}

    Concluding from~\eqref{eq:ucbqr_proof_bnd_tbp_II} and \eqref{eq:ubqr_II_lim} and using $\sum_{i\in\calJ} x_{ij}^1 \leq \mu_j-\eps$, we have for any $j\in\calJ$ with $w_j^\theta > 0$ that
    \begin{align}
        \lim_{k\to\infty} \frac{\calT_k (\mu_j-\eps) - \sum_{i\in\calC_j}  \E^\theta(D_{ij}(\calT_k))}{\alpha\ln^{2\beta}(\calT_k)}
        &\leq  (\mu_j-\eps) |\calA|.
        \label{eq:ucbqr_proof_part_II}
    \end{align}

    \noindent {\it Analysis of term~\textrm{III} in~\eqref{eq:regret_decomp}.}\\
    Consider an episode $m\in\N_{\geq 1}$ and a customer type $i\in\calI$. 
    Algorithm~\ref{alg:learning_alg} routes type-$i$ customers to servers according to the routing rates $x_{ij}^{A_m}$ using the \FCFSRR{A_m} policy as described in~Algorithm~\ref{alg:routing}.
    Since $x^{A_m}$ is a basic feasible solution of \LP{\theta,\eps} in \eqref{eq:LP_eps}, it satisfies~\eqref{eq:LP_eps_constr_x}. 
    This means that any type-$i$ customer is placed in a virtual queue of a compatible server $j\in\calS_i$.
    We next show that all virtual queues are positive recurrent. 

    Let $j\in\calJ$. 
    Recall that for any $a\in\calA$, $x^a$ satisfies~\eqref{eq:LP_eps_constr_mu}.
    This implies that the queue length process $Q_j^m(t)$, $t\in [0,H_m)$ of the virtual queue of server $j$ during episode $m$ (as introduced in Section~\ref{sec:step1}) has negative drift:
    \begin{align}
        \lim_{h\to 0}\frac{\E^\theta\bigl(Q_j^m(t+h)\bigr) - \E^\theta\bigl(Q_j^m(t)\bigr)}{h} &= \sum_{(ij)\in\calL} x_{ij}^{A_m} - \mu_j \leq \mu_j-\eps - \mu_j < 0.
    \end{align}

    From the Foster-Lyapunov criterion \citep[Proposition 4.5]{Bramson2006} follows that the virtual queue length process is positive recurrent.
    Since all type-$i$ customers are placed in a virtual queue, and all virtual queues are positive recurrent,  \eqref{eq:def_stability} is satisfied. 

    From \eqref{eq:def_stability} it follows that
    \begin{align}
        \lim_{k\to\infty} \frac{\calT_k\lambda_i - \sum_{j\in\calS_i} \E^\theta(D_{ij}(\calT_k))}{\alpha\ln^{2\beta}(\calT_k)} &= 0.
        \label{eq:ucbqr_proof_part_III}
    \end{align}

    To conclude the proof, we recall that the dual variables $(v_i^\theta)_{i\in\calI}$ and $(w_j^\theta)_{j\in\calJ}$ are finite by strong duality \citep[Theorem 4.4]{Bertsimas1997}. 
    Hence, applying~\eqref{eq:ucbqr_proof_bnd_part_I},  \eqref{eq:ucbqr_proof_part_II}, and \eqref{eq:ucbqr_proof_part_III} to~\eqref{eq:regret_decomp} gives 
    \begin{align}
        \lim_{t\to\infty} \frac{R^\theta(t)}{\alpha\ln^{2\beta}(t)} 
        &= \lim_{k\to\infty} \frac{R^\theta(\calT_k)}{\alpha\ln^{2\beta}(\calT_k)} 
        \leq \sum_{(k\ell)\in O^c(\theta)}\phi_{k\ell}^\theta (\mu_\ell-\eps)|\calA| + \sum_{j\in\calJ} w_j^\theta(\mu_j-\eps)|\calA|. 
    \end{align}
    \Halmos
\end{proof}

\newpage
\section{Remaining proofs}
\renewcommand{\thelemma}{C.\arabic{lemma}} 
\setcounter{lemma}{0} 

\subsection{Proof of Lemma~\ref{lem:LP_basis}}
\label{app:lem_LP_basis}
\begin{proof}{}
    Observe that \LP{\theta,\eps} in \eqref{eq:LP_eps} is of the form
    \begin{subequations}
        \label{eq:LP2}
        \begin{align}
            \LP{\theta,\eps}: \ \ \max_{x,\sigma} \ \ &
            \begin{bmatrix} \theta \\ 0 \end{bmatrix}^\top
            \begin{bmatrix} x \\ \sigma \end{bmatrix}, \\
            \textrm{s.t.} \ \
            &\begin{bmatrix} A' & 0 \\ A'' & I \end{bmatrix}
            \begin{bmatrix} x \\ \sigma \end{bmatrix} = \begin{bmatrix}  \lambda \\ \mu - \eps \end{bmatrix}, \
            x \in \R_{\geq 0}^L, \ \sigma \in \R_{\geq 0}^J,
        \end{align}
    \end{subequations}
    where 
    \begin{align}
        A' &\in \R^{I\times L}, \ A'_{i,k\ell} = \begin{cases} 1 & \textrm{if} \ i = k \ \textrm{and} \ \ell\in\calS_i, \\ 0 &\textrm{otherwise}, \end{cases} \\
        A'' &\in \R^{J\times L}, \ A'_{j,k\ell} = \begin{cases} 1 & \textrm{if} \ j = \ell\ \textrm{and} \ k\in\calC_j, \\ 0 &\textrm{otherwise}. \end{cases}
    \end{align}

    ``$\Longrightarrow$"
    Let $B=\scrL\cup\scrJ$ be a basis of \LP{\theta,\eps}.
    Since $B$ is a basis of \LP{\theta,\eps}, $|B|=I+J$ and the matrix
    \begin{align}
        \mathbf{B}
        &= \begin{bmatrix} \mathbf{C}_\scrL & \mathbf{D}_\scrJ \end{bmatrix} 
        = \begin{bmatrix} A_\scrL' & 0_\scrJ \\ A_\scrL'' & I_\scrJ \end{bmatrix} \in \R^{(I+J)\times (I+J)} 
        \label{eq:Bmat}
    \end{align}
    formed by the columns of $\scrL\cup\scrJ$ has full rank. 
    Here,  $\mathbf{C}_\scrL$ is the node edge incidence matrix of $\calG(\scrL,\scrJ)$. 
    We will prove that 
    (i)  $\calG(\scrL,\scrJ)$ contains no cycles, 
    (ii) each $v\in\calI\cup\calJ$ is contained in $\calG(\scrL,\scrJ)$,
    and (iii) each tree of  $\calG(\scrL,\scrJ)$ contains a unique node $j\in\scrJ$. 

    For (i), note that the sum of the entries of a row of the incidence matrix is equal to the degree of the corresponding node in the graph. 
    Suppose there is a cycle $\ell_1,\dots,\ell_n\in\scrL$.
    This implies that each row of the matrix $\mathbf{E} = [\mathbf{C}_{\ell_1},\dots,\mathbf{C}_{\ell_n}]$ has sum 2, so that $\mathbf{E} v^T = 0$ for any vector $v\in\R^n$ satisfying $\sum_i v_i = 0$. 
    Then, $\mathbf{B}$ has linearly dependent columns, contradicting the assumption that $\mathbf{B}$ has full rank.

    For (ii), let $i_0\in\calI$ and suppose $i_0\notin\calG(\scrL,\scrJ)$, then row $i_0$ of $A_\scrL'$ contains only zeros. Therefore, row $i_0$ row of $\mathbf{B}$ contains only zeros, which contradicts $\mathrm{rank}(\mathbf{B}) = I+J$.
    Similarly, let  $j_0\in\calJ\setminus\calG(\scrL,\scrJ)$, then row $j_0$ of $A_\scrL''$ and row $j_0$ of $I_\scrJ$ contain only zeros. Therefore, row $j_0$ of $\mathbf{B}$ contains only zeros, which contradicts $\mathrm{rank}(\mathbf{B}) = I+J$.

    Lastly, for (iii), note that by (i) we have that $\calG(\scrL,\scrJ)$ is a union of $K\geq 1$ trees $\calT_1,\dots,\calT_K$ and by (ii), $\calG(\scrL,\scrJ)$ has $I+J$ nodes. 
    By \cite[Theorem 6]{Harary1967}, the rank of the incidence matrix of a graph with $n$ nodes that consists of $K$ connected components is $n-K$, hence $\mathrm{rank}(\mathbf{C}_\scrL) = I+J-K$. Since $\mathbf{B}$ has full rank by assumption, this implies $\mathrm{rank}(\mathbf{D}_\scrJ)=K$. On the other hand, $\mathrm{rank}(\mathbf{C}_\calJ)=|\calJ|$ by construction, thus $|\scrJ| = K$, i.e., the number of servers $\scrJ$ equals the number of trees. It remains to be shown that each tree contains exactly one server $j\in\scrJ$. Suppose there is a tree that contains two servers $j_1,j_2\in\scrJ$, then there is a path $\ell_1,\dots,\ell_n\in\scrL$ between $j_1$ and $j_2$. It can be verified that the row sum of the matrix $\mathbf{E} = [\mathbf{C}_{\ell_1},\dots,\mathbf{C}_{\ell_n},\mathbf{D}_{j_1},\mathbf{D}_{j_2}]$ equals 2 for each row, hence $\mathbf{E} v' = 0$ for any vector $v\in\R^{n+2}$ satisfying $\sum_i v_i = 0$. 
    Then, $\mathbf{B}$ has linearly dependent columns, contradicting the assumption that $\mathbf{B}$ has full rank.

    ``$\Longleftarrow$"
    Let $\scrL\subseteq\calL$ and $\scrJ\subseteq\calJ$ be such that $\calG(\scrL,\scrJ)$ is a spanning forest of $(\calI\cup\calJ,\calL)$.
    Suppose $\calG(\scrL,\scrJ)$ is the union of trees $\calT_1,\dots,\calT_K$.
    Let $B=\scrL\cup\scrJ$ and $\mathbf{B}$ as in~\eqref{eq:Bmat}.
    We will prove that $\mathbf{B}$ has full rank.     
    Since $\calG(\scrL,\scrJ)$ is a graph with $K$ connected components and $I+J$ nodes, we have by \cite[Theorem 6]{Harary1967} that $\mathrm{rank}(\mathbf{C}_\scrL) = I+J-K$. 
    Moreover, $\mathrm{rank}(\mathbf{D}_\scrJ) = |\scrJ|$ by construction.
    Since $\calG(\scrL,\scrJ)$ is a spanning forest, each tree contains exactly one server $j\in\scrJ$, hence $\mathrm{rank}(\mathbf{D}_\scrJ) = |\calJ| = K$. 

    Next, we show that $\mathbf{C}_\scrL$ and $\mathbf{D}_\scrJ$ are linearly independent. 
    Consider a linear combination  $\sum_{\ell\in\scrL'} \alpha_\ell \mathbf{C}_\ell$ of the columns of $\mathbf{C}_\scrL$ with $\alpha_\ell \neq 0$ and $\scrL' \subseteq\scrL$. 
    Since $\calG(\scrL,\scrJ)$ is a forest, the subgraph induced by $\scrL'$ contains no cycles and therefore must contains at least two nodes with degree 1. This means that the vector $\sum_{\ell\in\scrL'} \alpha_\ell \mathbf{C}_\ell$ must have at least two nonzero entries corresponding to the nodes with degree 1. However, any column $b\in\mathbf{D}_\scrJ$ has only one nonzero entry and therefore $b\notin\mathrm{span}(\mathbf{C}_\scrL)$. Since any column of $\mathbf{D}_\scrJ$ is linearly independent of $\mathbf{C}_\scrL$ and the columns of $\mathbf{D}_\scrJ$ are linearly independent by construction, we have that $\mathbf{C}_\scrL$ and $\mathbf{D}_\scrJ$ are linearly independent.

    We have shown that $\mathbf{C}_\scrL$ and $\mathbf{D}_\scrJ$ are linearly independent and that $\mathrm{rank}(\mathbf{C}_\scrL) + \mathrm{rank}(\mathbf{D}_\scrJ) = I+J$.
    Together, this implies $\mathrm{rank}(\mathbf{B}) = I+J$ and completes the proof. 
    \Halmos
\end{proof}

\newpage
\subsection{Proof of Lemma~\ref{lem:basic_solutions}}
\label{app:lem_basic_solutions}

\begin{proof}{}
    Let $B=\scrL\cup\scrJ$ be a basis of \LP{\theta,\eps}.
    From Lemma~\ref{lem:LP_basis}, we obtain that $\calG(\scrL,\scrJ)$ is a spanning forest of $(\calI\cup\calJ,\calL)$.
    Let $x\in\R^L$ be defined as in~\eqref{eq:basic_sol} and define $\sigma\in\R^J$ by $\sigma_j:=\mu_j-\eps-\sum_{i\in\calC_j} x_{ij}$. 
    Recall that \LP{\theta,\eps} is of the form~\eqref{eq:LP2}.
    Hence, we must show that $[x,\sigma]^T\in\R^{L+J}$ satisfies 
    \begin{align}
        \mathbf{B}  \begin{bmatrix} x \\ \sigma \end{bmatrix} &= \begin{bmatrix}  \lambda \\ \mu - \eps \end{bmatrix},
        \label{eq:basic_sol_structure}
    \end{align} 
    where $\mathbf{B}$ is defined as in~\eqref{eq:Bmat}. 

    Let $i\in\calI$. 
    Recall that $i$ is contained in the spanning forest and that $i$ cannot be the root of a tree since $\scrJ\subseteq\calJ$ are the root nodes. 
    \begin{itemize}
        \item If $i$ is a leaf node, then $i$ has a unique parent node $j_0\in\calJ$ and 
        so $\mathbf{B}_{ij_0} =1$ and $\mathbf{B}_{ij}=0$ for all $j\neq j_0$. 
        Hence, $(\mathbf{B} [x, \sigma])_i = x_{ij_0}$. 
        In this case, $\subtree(i) = \{i\}$, so we obtain from \eqref{eq:basic_sol} that $x_{ij_0} = \lambda_i$.
        \item If $i$ is not a leaf node, then $i$ has a unique parent node $j_0\in\calJ$ and child nodes $j_1,\dots,j_m\in\calJ$ for some $m\geq 1$.
        Hence, $(\mathbf{B}  [x, \sigma])_i = x_{ij_0} + \sum_{k=1}^m x_{ij_k}$. 
        In this case, $\sfC(\subtree(i)) = \{i\} \cup \bigcup_{k=1}^m \sfC(\subtree(j_k))$ and $\sfS(\subtree(i)) = \bigcup_{k=1}^m \sfS(\subtree(j_k))$.
        Therefore by \eqref{eq:basic_sol},
        \begin{align}
            (\mathbf{B}  [x, \sigma])_i &= x_{ij_0} + \sum_{k=1}^m x_{ij_k}
            = \sum\limits_{k\in\sfC(\subtree(i))} \lambda_k - \sum\limits_{\ell\in\sfS(\subtree(i))} (\mu_\ell-\eps)
            + \sum_{k=1}^m \Bigl(\sum\limits_{\ell\in\sfS(\subtree(j_k))} (\mu_\ell-\eps) - \sum\limits_{k\in\sfC(\subtree(j_k))} \lambda_k\Bigr)
            = \lambda_i.
        \end{align}
    \end{itemize}
    Let now $j\in\calJ$.
    \begin{itemize}
        \item If $j$ is a root node, then $j$ has no parent node and possibly child nodes $i_1,\dots,i_m\in\calC$ for some $m\geq 1$. 
        Since $x_{\ell j} = 0$ for all $(\ell j)\notin\scrL$, we have
        $(\mathbf{B}[x,\sigma])_j = \sum_{k=1}^m x_{i_kj} + \sigma_j = \sum_{i\in\calC_j} x_{ij} + \sigma_j = \mu_j - \eps$. 
        \item If $j$ is a leaf node, then $j$ has a unique parent node $i_0\in\calC$ and so
        $(\mathbf{B} [x,\sigma])_j = x_{i_0 j} + \sigma_j$ by definition of $\sigma_j$. 
        In this case, $\subtree(j) = \{j\}$, so we obtain from \eqref{eq:basic_sol} that $x_{i_0j} = \mu_j - \eps$.
        Since $x_{\ell j} = 0$ for all $(\ell j)\notin\scrL$, it follows that $\sigma_j = \mu_j - \eps - x_{i_0j} = 0$. 
        Hence $(\mathbf{B} [x,\sigma])_j = \mu_j - \eps$.
        \item If $j$ is not a leaf or a root node, then $j$ has a unique parent node $i_0\in\calC$ and child nodes $i_1,\dots,i_m\in\calC$ for some $m\geq 1$
        and so $(\mathbf{B} [x,\sigma])_j = x_{i_0 j} + \sum_{k=1}^m x_{i_kj} + \sigma_j$.
        Since $x_{\ell j} = 0$ for all $(\ell j)\notin\scrL$, it follows that $\sigma_j = \mu_j - \eps - x_{i_0j} -\sum_{k=1}^m x_{i_km} = 0$. 
        In this case, $\sfS(\subtree(j)) = \{j\} \cup \bigcup_{k=1}^m \sfS(\subtree(i_k))$ and $\sfC(\subtree(j)) = \bigcup_{k=1}^m \sfC(\subtree(i_k))$.
        Therefore by \eqref{eq:basic_sol},
        \begin{align}
            (\mathbf{B} [x,\sigma])_j 
            &= x_{i_0j} + \sum_{k=1}^m x_{i_kj}  \\
            &= \sum\limits_{\ell\in\sfS(\subtree(j))} (\mu_\ell-\eps) - \sum\limits_{k\in\sfC(\subtree(j))} \lambda_k
            + \sum_{k=1}^m \Bigl(\sum\limits_{k\in\sfC(\subtree(i_k))} \lambda_k - \sum\limits_{\ell\in\sfS(\subtree(i_k))} (\mu_\ell-\eps)\Bigr) \\
            &= \mu_j - \eps.
        \end{align}
    \end{itemize}
    Hence \eqref{eq:basic_sol_structure} is satisfied and the proof is concluded.
    \Halmos
\end{proof}

\newpage
\subsection{Proof of Lemma~\ref{lem:insensitive}}
\label{app:proof_lem_insensitive}
Let $B=\scrL\cup\scrJ$ be a nondegenerate feasible basis of \LP{\theta,0} and let $\eps \geq 0$ satisfy~\eqref{eq:def_eps_insensitive}.
From Lemma~\ref{lem:basic_solutions}, we obtain that $[x^0,\sigma^0]^T\in\R^{L+J}$ defined by 
\begin{align}
    x_{ij}^0 =
    \begin{cases}
        \sum\limits_{k\in\sfC(\subtree(i))} \lambda_k - \sum\limits_{\ell\in\sfS(\subtree(i))} \mu_\ell, 
        &\textrm{if} \ i \ \textrm{is a child of} \ j \ \textrm{in} \ \calG(\scrL,\scrJ),
        \\
        \sum\limits_{\ell\in\sfS(\subtree(j))} \mu_\ell - \sum\limits_{k\in\sfC(\subtree(j))} \lambda_k, 
        &\textrm{if} \ i \ \textrm{is the parent of} \ j \ \textrm{in} \ \calG(\scrL,\scrJ), 
        \\
        0 
        &\textrm{if} \ (ij)\in\calL\setminus \scrL,
    \end{cases} 
    \qquad
    \sigma_j^0 = \mu_j - \sum_{i\in\calC_j} x_{ij}^0,
    \label{eq:basic_sol_0}
\end{align}
is the corresponding basic solution of \LP{\theta,0}. 
Similarly, we obtain that $[x^\eps,\sigma^\eps]^T\in\R^{L+J}$ defined by 
\begin{align}
    x_{ij}^\eps =
    \begin{cases}
        \sum\limits_{k\in\sfC(\subtree(i))} \lambda_k - \sum\limits_{\ell\in\sfS(\subtree(i))} (\mu_\ell-\eps), 
        &\textrm{if} \ i \ \textrm{is a child of} \ j \ \textrm{in} \ \calG(\scrL,\scrJ),
        \\
        \sum\limits_{\ell\in\sfS(\subtree(j))} (\mu_\ell-\eps) - \sum\limits_{k\in\sfC(\subtree(j))} \lambda_k, 
        &\textrm{if} \ i \ \textrm{is the parent of} \ j \ \textrm{in} \ \calG(\scrL,\scrJ), 
        \\
        0 
        &\textrm{if} \ (ij)\in\calL\setminus \scrL,
    \end{cases} 
    \qquad
    \sigma_j^\eps = \mu_j - \sum_{i\in\calC_j} x_{ij}^\eps,
    \label{eq:basic_sol_eps}
\end{align}
is the corresponding basic solution of \LP{\theta,\eps}.
It remains to be shown that $[x^\eps,\sigma^\eps]^T$ is nondegenerate and feasible for \LP{\theta,\eps}.
To this end, we show that (a) $x_{ij}^\eps > 0$ for $(ij)\in\scrL$ and $x_{ij}^\eps = 0$ otherwise, and (b) $\sigma_j^\eps > 0$ for $j\in\scrJ$ and $\sigma_j^\eps = 0$ otherwise.

{\it Proof of (a).}
Let $(ij)\in\scrL$.
If $i$ is a child of $j$ in $\calG(\scrL,\scrJ)$, then by \eqref{eq:basic_sol_eps},
\begin{align}
    x_{ij}^\eps
    &= \sum\limits_{k\in\sfC(\subtree(i))} \lambda_k - \sum\limits_{\ell\in\sfS(\subtree(i))} (\mu_\ell-\eps)
    \stackrel{\eps > 0}{>} \sum\limits_{k\in\sfC(\subtree(i))} \lambda_k - \sum\limits_{\ell\in\sfS(\subtree(i))} \mu_\ell
    \stackrel{\textrm{\eqref{eq:basic_sol_0}}}{=} x_{ij}^0
    > 0,
\end{align}
where the last step follows since $x^0$ is a nondegenerate basic feasible solution of \LP{\theta,0}.
On the other hand, if $i$ is the parent of $j$ in  $\calG(\scrL,\scrJ)$, then by \eqref{eq:basic_sol_eps},
\begin{align}
    x_{ij}^\eps
    &= \sum\limits_{\ell\in\sfS(\subtree(j))} (\mu_\ell-\eps) - \sum\limits_{k\in\sfC(\subtree(j))} \lambda_k
    = \sum\limits_{\ell\in\sfS(\subtree(j))} \mu_\ell - \sum\limits_{k\in\sfC(\subtree(j))} \lambda_k - |\sfS(\subtree(j))| \eps
    \stackrel{\textrm{\eqref{eq:def_eps_insensitive}}}{>} 0.
\end{align}
Lastly, let $(ij)\in\calL\setminus\scrL$, then $x_{ij}^\eps = 0$ follows directly from~\eqref{eq:basic_sol_eps}. 
Hence, (a) is satisfied. 

{\it Proof of (b).}
Let $j\in\scrJ$. By construction, $j$ is the root of a tree in the spanning forest $\calG(\scrL,\scrJ)$ so by \eqref{eq:basic_sol_eps},
\begin{align}
    \sum_{i\in\calC_j} x_{ij}^\eps
    &= \sum_{i: (ij)\in\scrL} \Bigl(\sum\limits_{k\in\sfC(\subtree(i))} \lambda_k - \sum\limits_{\ell\in\sfS(\subtree(i))} (\mu_\ell-\eps) \Bigr) \\
    &= (|\sfS(\subtree(j))|-1)\eps + \sum_{i: (ij)\in\scrL} \Bigl(\sum\limits_{k\in\sfC(\subtree(i))} \lambda_k - \sum\limits_{\ell\in\sfS(\subtree(i))} \mu_\ell \Bigr).
    \label{eq:ins_proof_sum_x_eps}
\end{align}
By \eqref{eq:def_eps_insensitive},
\begin{align}
    |\sfS(\subtree(j))|\eps
    &< \sum_{\ell\in\sfS(\subtree(j))} \mu_\ell  - \sum_{k\in\sfS(\subtree(j))} \lambda_k
    = \mu_j + \sum_{i: (ij)\in\scrL} \Bigl(\sum_{\ell\in\sfS(\subtree(i))} \mu_\ell  - \sum_{k\in\sfS(\subtree(i))} \lambda_k\Bigr).
    \label{eq:ins_proof_Seps}
\end{align}
Here, we used that $j$ is a root and therefore $\sfS(\subtree(j))$ is the union of $\{j\}$ and $\bigcup_{i:(ij)\in\scrL} \sfS(\subtree(i))$.
Bounding \eqref{eq:ins_proof_sum_x_eps} using \eqref{eq:ins_proof_Seps} gives
\begin{align}
    \sigma_j^\eps = \mu_j - \eps - \sum_{i\in\calC_j} x_{ij}^\eps > 0.
\end{align}
Lastly, let $j\in\calJ\setminus\scrJ$, then $j$ is not a root node in a tree in $\calG(\scrL,\scrJ)$. 
It can be verified that $\sigma_j^\eps = 0$ using a similar approach as in the proof of Lemma~\ref{lem:basic_solutions} (see the last two steps in Appendix~\ref{app:lem_basic_solutions}).
Hence, (b) is satisfied and the proof is concluded. 

\newpage
\subsection{Proof of Lemma~\ref{lem:oracle_reward}}\label{app:lem_oracle_reward}
Since $\pi$ satisfies~\eqref{eq:def_stability},
\begin{align}\label{eq:oracle_reward_proof_lam}
    \lim_{t\to\infty} \sum_{j\in\calS_i}  \frac{\E_\pi^\theta(D_{ij}(t))}{t} &= \lambda_i, \ \ \ \forall i\in\calI.
\end{align}
Moreover, by~\eqref{eq:def_eps_restrictive},
\begin{align}
    \label{eq:oracle_reward_proof_mu}
    \lim_{t\to\infty} \sum_{i\in\calC_j} \frac{ \E_\pi^\theta(D_{ij}(t))}{t} &\leq \mu_j - \eps, \ \ \ \forall j\in\calJ.
\end{align}
Now, since the summation over all lines $(ij)\in\calL$ in \eqref{eq:regret} is finite, we can interchange the summation and limit to obtain
\begin{align}
    \lim_{t\to\infty} \sum_{(ij)\in\calL}  \frac{\E_\pi^\theta(D_{ij}(t))}{t} \theta_{ij}
    =
    \sum_{(ij)\in\calL} \lim_{t\to\infty} \frac{\E_\pi^\theta(D_{ij}(t))}{t} \theta_{ij}.
\end{align}
In light of \eqref{eq:oracle_reward_proof_lam} and \eqref{eq:oracle_reward_proof_mu}, the set of values $\{\lim_{t\to\infty}\E_\pi^\theta(D_{ij}(t))/t\}_{(ij)\in\calL}$ satisfies the constraints of \LP{\theta,\eps} in \eqref{eq:LP_eps}.
Since $x^\theta$ is the optimal solution of \LP{\theta,\eps}, we have
\begin{align}
    \sum_{(ij)\in\calL} \lim_{t\to\infty} \frac{\E_\pi^\theta(D_{ij}(t))}{t} \theta_{ij} \leq \sum_{(ij)\in\calL} x_{ij} \theta_{ij}.
\end{align}
The result follows from \eqref{eq:regret},
\begin{align}
    \lim_{t\to\infty} \frac{R_\pi^\theta(t)}{t}
    &= \lim_{t\to\infty} \sum_{(ij)\in\calL} x_{ij} \theta_{ij} - \frac{\E_\pi^\theta(D_{ij}(t))}{t} \geq 0.
\end{align}

\newpage
\subsection{Proof of Lemma~\ref{lem:regret_decomp}}
\label{app:proof_regret_decomposition}

\begin{proof}{}
    Let $\pi$ satisfy~\eqref{eq:def_stability} and~\eqref{eq:def_eps_restrictive}. 
    By~\eqref{eq:regret} and~\eqref{eq:phi_def},
    \begin{align}
        R^\theta_\pi(t)
        &= \sum_{(ij)\in\calL} (v_i^\theta + w_j^\theta - \phi_{ij}^\theta) (t x_{ij}^\theta - \E_\pi^\theta(D_{ij}(t))).
    \end{align}
    By \eqref{eq:slack_cond_phi_eps}, we have $\phi_{ij}^\theta  x_{ij}^\theta = 0$ for all $(ij)\in\calL$.
    Moreover, $\phi_{ij}^\theta = 0$ for $(ij)\in O(\theta)$ and $O(\theta)\cup O^c(\theta)$ is a partition of $\calL$.
    Therefore,
    \begin{align}
        R^\theta_\pi(t)
        &= \sum_{(ij)\in\calL}  (v_i^\theta + w_j^\theta) t x_{ij}^\theta
        - \sum_{(k\ell)\in\calL}  (v_k^\theta + w_\ell^\theta) \E_\pi^\theta(D_{k\ell}(t))
        + \sum_{(mn)\in O^c(\theta)}  \phi_{mn}^\theta \E_\pi^\theta(D_{mn}(t)).
    \end{align}
    Reordering terms gives
    \begin{align}\label{eq:regret_decomp_intermediate}
        R^\theta_\pi(t) &=
        \sum_{(ij)\in\calL}  w_j^\theta (tx_{ij}^\theta - \E_\pi^\theta(D_{ij}(t)))
        + \sum_{(k\ell)\in\calL}  v_k^\theta (tx_{k\ell}^\theta - \E_\pi^\theta(D_{k\ell}(t)))
        + \sum_{(mn)\in O^c(\theta)}  \phi_{mn}^\theta \E_\pi^\theta(D_{mn}(t)).
    \end{align}

    As an intermediate step, note that by definition of the sets $\calS_i$ in~\eqref{eq:def_set_servers_customer_i}  and $\calC_j$ in~\eqref{eq:def_set_customers_server_j}, the following identities hold for any vector $(\gamma_{ij})_{(ij)\in\calL}$,
    \begin{align}\label{eq:summing_identities}
        \sum_{(ij)\in\calL} \gamma_{ij} &= \sum_{i\in\calI} \sum_{j\in\calS_i} \gamma_{ij} = \sum_{j\in\calJ}\sum_{i\in\calC_j} \gamma_{ij}.
    \end{align}
    Applying \eqref{eq:summing_identities} to \eqref{eq:regret_decomp_intermediate} gives
    \begin{align}
        R^\theta_\pi(t) &=
        \sum_{j\in\calJ}  w_j^\theta \Bigl(\sum_{i\in\calC_j} (tx_{ij}^\theta - \E_\pi^\theta(D_{ij}(t))) \Bigr)
        + \sum_{k\in\calI}  v_k^\theta \Bigl(\sum_{\ell\in\calS_k} (tx_{k\ell}^\theta - \E_\pi^\theta(D_{k\ell}(t))) \Bigr)
        + \sum_{(mn)\in O^c(\theta)}  \phi_{mn}^\theta \E_\pi^\theta(D_{mn}(t)).
    \end{align}
    Since $x^\theta$ is the optimal solution of \LP{\theta,\eps}, it satisfies  \eqref{eq:LP_eps_constr_x}, i.e., for all $k\in\calI$, $\sum_{\ell\in\calS_k} x_{k\ell}^\theta = \lambda_k$.
    Moreover, since the complementary slackness condition \eqref{eq:slack_cond_w_eps} are strict, we have for any $j\in\calJ$ that either $w_j^\theta = 0$, or  $\sum_{i\in\calC_j} x_{ij}^\theta = \mu_j -\eps$.
    Hence
    \begin{align}
        R^\theta_\pi(t) &=
        \sum_{j\in\calJ}  w_j^\theta \Bigl(t (\mu_j-\eps) - \sum_{i\in\calC_j}  \E_\pi^\theta(D_{ij}(t))\Bigr)
        + \sum_{k\in\calI}  v_k^\theta \Bigl(t\lambda_k - \sum_{\ell\in\calS_k} \E_\pi^\theta(D_{k\ell}(t))\Bigr)
        + \sum_{(mn)\in O^c(\theta)}  \phi_{mn}^\theta \E_\pi^\theta(D_{mn}(t)).
    \end{align}
    This concludes the proof of \eqref{eq:regret_decomp}.
    \Halmos
\end{proof}

\newpage
\subsection{Proof of Lemma~\ref{lem:suboptgap_nonempty}}
\label{app:suboptgap_nonempty}

\begin{proof}{}
    Let $x^\theta\in\R_{\geq 0}^L$ be the optimal solution of \LP{\theta,\eps} in \eqref{eq:LP_eps}, $v^\theta\in\R^I,w^\theta\in\R_{\geq 0}^J$ be the optimal solution of \dual{\theta,\eps} in \eqref{eq:LP_eps_dual}, and $(ij)\in O^c(\theta)$.
    Let $a_0\in\R_{\geq 0}$ be such that $a_0 > v_i^\theta + w_j^\theta \geq \theta_{ij}$.
    We consider a new program \LP{A(\theta,ij,a_0),\eps} with $A(\theta,ij,a_0)$ as in~\eqref{eq:def_A_vector} and denote its optimal solution by $x^A\in\R_{\geq 0}^L$. 
    Suppose that $(ij)\in O^c(A(\theta,ij,a_0))$, then $x_{ij}^A = x_{ij}^\theta=0$. 
    In this case, we must have $x^A = x^\theta$, since the programs only differ in the reward value for line $(ij)$. 
    By the nondegeneracy assumption, a primal solution implies a unique dual solution, hence $v^\theta,w^\theta$ must be optimal for \dual{A(\theta,ij,a_0)}.
    However, $v^\theta,w^\theta$ violates the feasibility constraint \eqref{eq:LP_eps_dual_constr_phi} since $a_0 > v_i^\theta + w_j^\theta$, leading to a contradiction. 
    This implies $(ij)\in O(A(\theta,ij,a_0))$ and concludes the proof that $\Delta(\theta,ij)$ is nonempty.
    \Halmos
\end{proof}

\newpage
\subsection{Proof of Lemma~\ref{lem:episode_not_mixed}}
\label{app:episodes_not_mixed}
In order to prove Lemma~\ref{lem:episode_not_mixed}, we present an intermediate result in Lemma~\ref{lem:q_hitting_time}.
Let
\begin{align}
    R_j^{mA_m} := \min\{t\geq 0: \ \hat{Q}_j^{mA_m}(t) = 0\}
    \label{eq:hittingtime_def}
\end{align}
denote the hitting time of  state $0$ by the process $\hat{Q}_j^{mA_m}(t)$, which is well-defined by positive recurrence of the process $\hat{Q}_j^a(t)$.
Lemma~\ref{lem:q_hitting_time} provides a tail bound on $R_j^{mA_m}$.

\begin{lemma}
    \label{lem:q_hitting_time}
    For any $n\in\N$ and $b \in\R_{>0}$,
    \begin{align}
        \P( R_j^{mA_m} > n b \bigm| \hat{Q}_j^{mA_m}(0) = n )
        &\leq (\rho_j^{A_m})^{-n/2}\exp\bigl(-nb(\tilde{\lambda}_j^{A_m}+\mu_j-\sqrt{b^{-2}+4\tilde{\lambda}_j^{A_m}\mu_j})\bigr).
     \end{align}
\end{lemma}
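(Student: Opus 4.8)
The plan is to recognize $\hat{Q}_j^{mA_m}$ as an $M/M/1$ queue with arrival rate $\tilde{\lambda}_j^{A_m}=\sum_{i\in\calC_j} x_{ij}^{A_m}$ and service rate $\mu_j$ (this is exactly the characterization obtained via Poisson merging in Section~\ref{sec:step1}, and the chain is positive recurrent since $\rho_j^{A_m}<1$), and then to analyze $R_j^{mA_m}$ as a first-passage time. Writing $\lambda=\tilde{\lambda}_j^{A_m}$, $\mu=\mu_j$, $\rho=\lambda/\mu$, I would first observe that conditional on $\hat{Q}_j^{mA_m}(0)=n$, the hitting time $R_j^{mA_m}$ is the first-passage time of the birth--death chain from level $n$ down to level $0$. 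Since the transition rates are spatially homogeneous on all states $\geq 1$ (up-rate $\lambda$, down-rate $\mu$) and the chain is skip-free to the left, the strong Markov property lets me decompose this first-passage time as a sum $R=\sum_{k=1}^n T_k$ of $n$ independent copies $T_k$ of the first-passage time $T$ from level $1$ to level $0$.

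The second step is to compute the moment generating function $\phi(s)=\E(e^{sT})$ by a first-step argument. Conditioning on the first transition out of state $1$ (an exponential holding time of rate $\lambda+\mu$, followed by a down-step w.p.\ $\mu/(\lambda+\mu)$ that ends the excursion, or an up-step w.p.\ $\lambda/(\lambda+\mu)$ after which two independent copies of $T$ must elapse) yields the self-consistency relation $\lambda\phi(s)^2-(\lambda+\mu-s)\phi(s)+\mu=0$. Solving the quadratic and selecting the branch that satisfies $\phi(0)=1$ gives
\begin{align}
    \phi(s)=\frac{(\lambda+\mu-s)-\sqrt{(\lambda+\mu-s)^2-4\lambda\mu}}{2\lambda},
\end{align}
valid for $s\leq \lambda+\mu-2\sqrt{\lambda\mu}$, and by independence $\E(e^{sR})=\phi(s)^n$.

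The third step is a Chernoff bound. For $s>0$ one has $\P(R>nb)\leq e^{-snb}\phi(s)^n=(e^{-sb}\phi(s))^n$. The key idea is the explicit choice $s^\star=\lambda+\mu-\sqrt{b^{-2}+4\lambda\mu}$, which makes $(\lambda+\mu-s^\star)^2-4\lambda\mu=b^{-2}$, so that $\phi(s^\star)=(\sqrt{b^{-2}+4\lambda\mu}-b^{-1})/(2\lambda)$ and $e^{-s^\star b}=e^{-b(\lambda+\mu)}e^{b\sqrt{b^{-2}+4\lambda\mu}}$; this already reproduces the exponential factor in the claimed bound. What remains is the elementary inequality $(\sqrt{b^{-2}+4\lambda\mu}-b^{-1})/(2\lambda)\leq\sqrt{\mu/\lambda}=\rho^{-1/2}$, which after clearing denominators and squaring reduces to $0\leq 4b^{-1}\sqrt{\lambda\mu}$. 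Combining these gives exactly the stated bound in the regime $s^\star>0$. Finally, when $s^\star\leq 0$ I would note that $\lambda+\mu-\sqrt{b^{-2}+4\lambda\mu}\leq 0$ forces the exponential factor $\exp(-nb(\lambda+\mu-\sqrt{b^{-2}+4\lambda\mu}))\geq 1$, and since $\rho<1$ we have $\rho^{-n/2}\geq 1$, so the right-hand side is at least $1$ and the bound holds trivially; this disposes of small $b$.

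The main obstacle I anticipate is the first-passage MGF step: justifying the i.i.d.\ decomposition rigorously via the strong Markov property and homogeneity, and correctly selecting the admissible root of the quadratic together with tracking the validity range $s^\star\leq\lambda+\mu-2\sqrt{\lambda\mu}$ (which holds automatically since $\sqrt{b^{-2}+4\lambda\mu}\geq 2\sqrt{\lambda\mu}$). The remaining algebra, once $s^\star$ is chosen, is routine.
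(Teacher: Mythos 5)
Your proposal is correct and takes essentially the same route as the paper's proof: recognize $\hat{Q}_j^{mA_m}$ as an $M/M/1$ queue, use the moment generating function of the hitting time of state $0$ (which the paper simply cites from Cohen's book rather than deriving via your i.i.d.\ excursion decomposition and first-step analysis), and apply a Chernoff bound at the same point $s^\star=\tilde{\lambda}_j^{A_m}+\mu_j-\sqrt{b^{-2}+4\tilde{\lambda}_j^{A_m}\mu_j}$, finally bounding the resulting base by $(\rho_j^{A_m})^{-1/2}$. The only deviations are minor: you prove the bound on the base by clearing denominators and squaring, where the paper argues it is increasing in $b$ with limit $(\rho_j^{A_m})^{-1/2}$ as $b\to\infty$, and you explicitly dispose of the regime $s^\star\leq 0$ (where the claimed right-hand side exceeds $1$ and the bound is trivial), a small-$b$ case the paper's Markov-inequality step passes over silently.
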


\begin{proof}{Proof of Lemma~\ref{lem:q_hitting_time}.}
    As discussed in Section~\ref{sec:step1}, $\hat{Q}_j^{mA_m}(t)$ behaves as an $M/M/1$ queueing system with arrival rate $\tilde{\lambda}_j^{A_m}$ independently from other servers.
    Therefore, from \cite[Section II.2.2]{Cohen1981} we obtain for the moment generating function of $R_j^{mA_m}$, 
    \begin{align}
        \E\bigl(\exp(sR_j^{mA_m}) \bigm| \hat{Q}_j^{mA_m}(0) = n\bigr) &= \Bigl(\frac{\tilde{\lambda}_j^{A_m}+\mu_j-s - \sqrt{(\tilde{\lambda}_j^{A_m}+\mu_j-s)^2 - 4\tilde{\lambda}_j^{A_m}\mu_j}}{2\tilde{\lambda}_j^{A_m}}\Bigr)^n.
    \end{align}
    The radius of convergence is determined by 
    \begin{align}
        (\tilde{\lambda}_j^{A_m}+\mu_j-s)^2 - 4\tilde{\lambda}_j^{A_m}\mu_j &>0
        \ \iff \
        s < \tilde{\lambda}_j^{A_m}+\mu_j-\sqrt{4\tilde{\lambda}_j^{A_m}\mu_j} \ =: \ \delta.
    \end{align}

    Let $b \in\R_{>0}$. By Markov's inequality,  we have
    \begin{align}\label{eq:app_tail_markov}
        \P( R_j^{mA_m} > n b \bigm| \hat{Q}_j^{mA_m}(0) = n ) &\leq \inf_{0 < s < \delta}\E(\exp(sR_j^a) \bigm| \hat{Q}_j^{mA_m}(0) = n)\exp(-nbs).
    \end{align}
    It can be verified that the infimum within the radius of convergence is attained at 
    \begin{align}
        s^* &= \tilde{\lambda}_j^{A_m} + \mu_j - \sqrt{b^{-2}+4\tilde{\lambda}_j^{A_m}\mu_j} \ < \ \delta.
    \end{align}
    Substitution of $s^*$ into \eqref{eq:app_tail_markov} gives
    \begin{align}\label{eq:app_tail_markov2}
        \P( R_j^{mA_m} > n b \bigm| \hat{Q}_j^{mA_m}(0) = n) &\leq \exp\bigl(-nb(\tilde{\lambda}_j^{A_m}+\mu_j-\sqrt{b^{-2}+4\tilde{\lambda}_j^{A_m}\mu_j})\bigr) \Bigl(\frac{-b^{-1} + \sqrt{b^{-2} + 4\tilde{\lambda}_j^{A_m}\mu_j}}{2\tilde{\lambda}_j^{A_m}}\Bigr)^n.
     \end{align}
     We note that the term
     \begin{align}
        \frac{-b^{-1} + \sqrt{b^{-2} + 4\tilde{\lambda}_j^{A_m}\mu_j}}{2\tilde{\lambda}_j^{A_m}}
     \end{align}
     is increasing in $b$, and that
     \begin{align}
        \lim_{b\to\infty} \frac{-b^{-1} + \sqrt{b^{-2} + 4\tilde{\lambda}_j^{A_m}\mu_j}}{2\tilde{\lambda}_j^{A_m}} &= \frac{\sqrt{4\tilde{\lambda}_j^{A_m}\mu_j}}{2\tilde{\lambda}_j^{A_m}} = (\rho_j^{A_m})^{-1/2}.
     \end{align}
     Therefore, we can bound \eqref{eq:app_tail_markov2} further and complete the proof, i.e., 
     \begin{align}
        \P( R_j^{mA_m} > n b \bigm| \hat{Q}_j^{mA_m}(0) = n)
        &\leq (\rho_j^{A_m})^{-n/2}\exp\bigl(-nb(\tilde{\lambda}_j^{A_m}+\mu_j-\sqrt{b^{-2}+4\tilde{\lambda}_j^{A_m}\mu_j})\bigr).
     \end{align}
     \Halmos
\end{proof}

\noindent We next present the proof of Lemma~\ref{lem:episode_not_mixed}.
\begin{proof}{Proof of Lemma~\ref{lem:episode_not_mixed}.}
    We define
    \begin{align}
        s^{mA_m} &:= \frac{-\ln^\beta(2Jm)}{\ln(\rho_j^{A_m})} > 0.
        \label{eq:sm_def}
    \end{align}
    Let $t \geq \tau_m$ and $j\in\calJ$. Note that by construction of the coupling
    \begin{align}
        \P( \underline{Q}_j^{mA_m}(t) = \hat{Q}_j^{mA_m}(t))
        &\geq
        \P( \underline{Q}_j^{mA_m}(\tau_m) = \hat{Q}_j^{mA_m}(\tau_m)).
        \label{eq:q_coupling_bound_tau}
    \end{align}
    By the law of total probability, we have
    \begin{align}
        \P( \underline{Q}_j^{mA_m}(\tau_m) = \hat{Q}_j^{mA_m}(\tau_m))
        &\geq
        \P(\underline{Q}_j^{mA_m}(\tau_m) = \hat{Q}_j^{mA_m}(\tau_m) \bigm| \hat{Q}_j^{mA_m}(0) \leq \lfloor s^{mA_m} \rfloor) \P(\hat{Q}_j^{mA_m}(0) \leq \lfloor s^{mA_m} \rfloor).
        \label{eq:q_coupling_bound_proof}
    \end{align}
    Since the initial queue length $\hat{Q}_j^{mA_m}(0)$ is sampled from the stationary measure  $p_j^{A_m}$ in \eqref{eq:queue_statdistr}, we have
    \begin{align}
        \P(\hat{Q}_j^{mA_m}(0) \leq \lfloor s^{mA_m} \rfloor) &= \sum_{i=0}^{\lfloor s^{mA_m} \rfloor} (1-\rho_j^{A_m})(\rho_j^{A_m})^i
        = 1-(\rho_j^{A_m})^{1+\lfloor s^{mA_m} \rfloor}.
    \end{align}
    We use that $b^{\ln(a)/\ln(b)} =a$ for any $a,b >0$ and~\eqref{eq:sm_def} to obtain
    \begin{align}
        (\rho_j^{A_m})^{\lfloor s^{mA_m} \rfloor}
        &\geq 
        (\rho_j^{A_m})^{s^{mA_m}}
        =
        (\rho_j^{A_m})^{-\ln^\beta(2Jm)/\ln(\rho_j^{A_m})}
        =
        (2Jm)^{-\ln^{\beta-1}(2Jm)},
    \end{align}
    and so
    \begin{align}
        \P(\hat{Q}_j^{mA_m}(0) \leq \lfloor s^{mA_m} \rfloor)
        &\leq 1 - \rho_j^{A_m} (2Jm)^{-\ln^{\beta-1}(2Jm)}
        \stackrel{\textrm{(i)}}{\geq} 1 - (2Jm)^{-\ln^{\beta-1}(2Jm)}
        \stackrel{\textrm{(ii)}}{\geq} 1 - (2Jm)^{-\beta}
        \stackrel{\textrm{(iii)}}{\geq} 1 - \frac{1}{2Jm^\beta},
        \label{eq:hittingtime_ql_bound}
    \end{align}
    where (i) follows from $\rho_j^{A_m} < 1 $ and $2Jm > 1$, (ii) by the assumption $m\geq C_\beta$ and $2Jm\geq 1$, and (iii) by $J\geq 1$, $m \geq 1$, and $\beta > 1$.

    We continue to bound the conditional probability in \eqref{eq:q_coupling_bound_proof}.
    Observe that if the process $\hat{Q}_j^{mA_m}(t)$ hits state 0 at time $x$, then  $\hat{Q}_j^{mA_m}(t)=\underline{Q}_j^{mA_m}(t)$ for all $t\geq x$, as illustrated in Figure~\ref{fig:coupling}.
    Therefore, 
    \begin{align}
        \label{eq:hittingtime_dep_ql}
        \P\bigl(\underline{Q}_j^{mA_m}(\tau_m) = \hat{Q}_j^{mA_m}(\tau_m) \bigm| \hat{Q}_j^{mA_m}(0) \leq \lfloor s^{mA_m} \rfloor \bigr)
        = \P\bigl( R_j^{mA_m} \leq \tau_m \bigm| \hat{Q}_j^{mA_m}(0) \leq \lfloor s^{mA_m} \rfloor\bigr).
    \end{align}
    As an intermediate step, we note that for any $A$ and any pairwise disjoint $B_1,\dots,B_n$,
    \begin{align}
        \P\Bigl(A \biggm| \dot{\bigcup}_{k=1}^n B_k\Bigr) &= \frac{\sum_{k=1}^n \P(A\cap B_k)}{\sum_{j=1}^n \P(B_j)} 
        = \sum_{k=1}^n \frac{\P(B_k)}{\sum_{j=1}^n \P(B_j)} \P(A\mid B_k) 
        \geq \min_{k\in\{1,\dots,n\}} \P(A\mid B_k),
    \end{align}
    where the inequality follows since the weighted average of a set of nonnegative numbers is at least as large as the minimum of that set.
    Therefore, since $\{\hat{Q}_j^{mA_m}(0) \leq \lfloor s^{mA_m} \rfloor\} = \dot{\bigcup}_{k=0}^{\lfloor s^{mA_m}\rfloor}\{\hat{Q}_j^{mA_m}(0) = k\}$, we have
    \begin{align}\label{eq:hittingtime_minimum}
        \P\bigl( R_j^{mA_m}\leq \tau_m \bigm| \hat{Q}_j^{mA_m}(0) \leq \lfloor s^{mA_m} \rfloor\bigr)
        &\geq \min_{k\in\{0,\dots,\lfloor s^{mA_m} \rfloor\}} \P\bigl(R_j^{mA_m}\leq \tau_m \bigm| \hat{Q}_j^{mA_m}(0) = k\bigr).
    \end{align}
    Moreover, since $\hat{Q}_j^{mA_m}(t)$ is a birth--death process, we have
    \begin{align}
        \P\bigl( R_j^{mA_m}\leq \tau_m \bigm| \hat{Q}_j^{mA_m}(0) = k_1 \bigr)
        &\geq \P\bigl( R_j^{mA_m}\leq \tau_m \bigm| \hat{Q}_j^{mA_m}(0) = k_2 \bigr), \ \forall k_1 \leq k_2 \in \N,
    \end{align}
    which implies that the minimum in \eqref{eq:hittingtime_minimum} is attained at $\lfloor s^{mA_m} \rfloor$, therefore
    \begin{align}
        \P\bigl( R_j^{mA_m}\leq \tau_m \bigm| \hat{Q}_j^{mA_m}(0) \leq \lfloor s^{mA_m} \rfloor\bigr)
        &\geq \P\bigl(R_j^{mA_m}\leq \tau_m \mid \hat{Q}_j^{mA_m}(0) = \lfloor s^{mA_m} \rfloor\bigr).
    \end{align}
    We obtain from Lemma~\ref{lem:q_hitting_time} that
    \begin{align}
        \P\bigl(R_j^{mA_m} > \tau_m \bigm| \hat{Q}_j^{mA_m}(0) = \lfloor s^{mA_m} \rfloor \bigr)
        &\leq  (\rho_j^{A_m})^{-\lfloor s^{mA_m} \rfloor /2} \exp\Bigl(-\tau_m\Bigl(\tilde{\lambda}_j^{A_m}+\mu_j-\sqrt{\Bigl(\frac{\tau_m}{\lfloor s^{mA_m} \rfloor}\Bigr)^{-2} + 4\tilde{\lambda}_j^{A_m}\mu_j}\Bigr)\Bigr)\\
        &\leq  (\rho_j^{A_m})^{-s^{mA_m}/2} \exp\Bigl(-\tau_m\Bigl(\tilde{\lambda}_j^{A_m}+\mu_j-\sqrt{\Bigl(\frac{\tau_m}{s^{mA_m}}\Bigr)^{-2} + 4\tilde{\lambda}_j^{A_m}\mu_j}\Bigr)\Bigr).
    \end{align}
    Now, by substitution of $\tau_m$ and $s^{mA_m}$ as defined in~\eqref{eq:ucbqr_episode_length} and~\eqref{eq:sm_def}, respectively, and noting that $\tau_m/s^{mA_m} = -\alpha\ln(\rho_j^{A_m})$, we have
    \begin{align}
        \P\bigl(R_j^{mA_m} > \tau_m \bigm| \hat{Q}_j^{mA_m}(0) = \lfloor s^{mA_m} \rfloor\bigr) &\leq
        (\rho_j^{A_m})^{\ln^\beta(2Jm)/(2\ln(\rho_j^{A_m}))}  \nonumber \\
        & \qquad \cdot \exp\Bigl(-\alpha \ln^\beta(2Jm)\Bigl(\tilde{\lambda}_j^{A_m}+\mu_j-\sqrt{\Bigl(\alpha\ln(\rho_j^{A_m})\Bigr)^{-2} + 4\tilde{\lambda}_j^{A_m}\mu_j}\Bigr)\Bigr) \\
        &=  (2Jm)^{-\ln^{\beta-1}(2Jm)\bigl(-1/2+\alpha (\tilde{\lambda}_j^{A_m}+\mu_j-\sqrt{(\alpha\ln(\rho_j^{A_m}))^{-2} + 4\tilde{\lambda}_j^{A_m}\mu_j})\bigr)}.
    \end{align}
    It can be verified that $f(\gamma):=-\frac 12 +\gamma \bigl(\tilde{\lambda}_j^{A_m}+\mu_j-\sqrt{(\gamma\ln(\rho_j^{A_m}))^{-2} + 4\tilde{\lambda}_j^{A_m}\mu_j}\bigr)$ is strictly increasing on $[0,\infty)$ and that
    \begin{align}
        \gamma^* = \frac{3(\tilde{\lambda}_j^{A_m}+\mu_j) \ln(\rho_j^{A_m}) - 2\sqrt{(\mu_j-\tilde{\lambda}_j^{A_m})^2 + 9\tilde{\lambda}_j^{A_m}\mu_j \ln^2(\rho_j^{A_m})}}{2(\mu_j-\tilde{\lambda}_j^{A_m})^2 \ln(\rho_j^{A_m})} > 0
    \end{align} 
    satisfies $f(\gamma^*) =1$. 
    By construction of $\alpha$ in~\eqref{eq:c0_choice}, we have $\alpha\geq\gamma^*$, and hence 
    \begin{align}
        \P\bigl(R_j^{mA_m} > \tau_m \bigm| \hat{Q}_j^{mA_m}(0) = \lfloor s^{mA_m} \rfloor\bigr) &\leq (2Jm)^{-\ln^{\beta-1}(2Jm)}.
    \end{align}
    Since $m\geq C_\beta$ and hence $\log^{\beta -1}(2Jm) \geq \beta$, we obtain
    \begin{align}
        \label{eq:q_coupling_subs}
        \P\bigl(R_j^{mA_m} > \tau_m \bigm| \hat{Q}_j^{mA_m}(0) = \lfloor s^{mA_m} \rfloor\bigr) &\leq (2Jm)^{-\beta}.
    \end{align}
    Substitution of \eqref{eq:hittingtime_ql_bound} and \eqref{eq:hittingtime_dep_ql} into \eqref{eq:q_coupling_bound_proof} and subsequently using \eqref{eq:q_coupling_bound_tau} and \eqref{eq:q_coupling_subs} and noting that $\beta >1$ gives
    \begin{align}
        \P(\underline{Q}_j^{mA_m}(t) = \hat{Q}_j^{mA_m}(t))
        &\geq\bigl(1-\frac{1}{2Jm^\beta}\bigr)^2
        \geq 1-\frac{1}{Jm^\beta}.
        \label{eq:dep_bnd_single_server}
    \end{align}
    To conclude the proof, we note that by independence of the customer arrivals,
    \begin{align}
        1 - \P(\underline{Q}_j^{mA_m}(t) = \hat{Q}_j^{mA_m}(t), \ \forall j\in\calJ)
        &= \P(\exists j\in\calJ: \ \underline{Q}_j^{mA_m}(t) \neq \hat{Q}_j^{mA_m}(t))
        \stackrel{\textrm{\eqref{eq:dep_bnd_single_server}}}{\leq} \sum_{j=1}^J \frac{1}{Jm^\beta}
        = \frac{1}{m^\beta}.
    \end{align}
    \Halmos
\end{proof}

\newpage
\subsection{Proof of Lemma~\ref{lem:prob_insuf_samples}}
\label{app:prob_insuf_samples}
In order to prove Lemma~\ref{lem:prob_insuf_samples}, we provide two intermediate results.

\begin{lemma}\label{lem:binomial_bound}
    Let $\gamma \in\R_{\geq 1}$, $u\in\N_{\geq 1}$, and $X_1,\dots,X_u$ be independent random variables with $X_m \sim\bern{1-1/m^\gamma}$ for all $m=1,\dots,u$. Then for any $\eta\in\R_{\geq 1}$ such that $\lceil \sum_{m=1}^u 1/m^\gamma\rceil \leq \eta \leq u$, we have
    \begin{align}
        \P\Bigl( \sum_{m=1}^u X_m < u - \eta \Bigr)
        &\leq \frac{u^{1+\eta}}{\eta^{(\gamma+1)\eta}} \exp((\gamma+1)\eta).
        \label{eq:binomial_bound_ineq}
    \end{align}
\end{lemma}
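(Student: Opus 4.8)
The plan is to pass to the complementary ``failure'' variables, control the upper tail of their sum by a union bound over subsets, and then convert the resulting integer-indexed estimate into the stated bound in the continuous parameter $\eta$. First I would set $Y_m := 1 - X_m$, so that the $Y_m \sim \bern{1/m^\gamma}$ are independent and the event $\{\sum_{m=1}^u X_m < u - \eta\}$ is exactly $\{\sum_{m=1}^u Y_m > \eta\}$. Since $\sum_{m=1}^u Y_m$ is integer valued, this event coincides with $\{\sum_{m=1}^u Y_m \geq \ell\}$ for $\ell := \lfloor \eta\rfloor + 1$; note that $\eta < \ell \leq \eta+1$ and, because $\eta \geq 1$, that $\ell \geq 2$.

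Next I would apply a union bound. The event that at least $\ell$ of the $Y_m$ equal $1$ is the union, over subsets $S \subseteq \{1,\dots,u\}$ with $|S| = \ell$, of the events $\{Y_m = 1 \text{ for all } m \in S\}$, and by independence each such event has probability $\prod_{m\in S} m^{-\gamma}$. Since $\prod_{m\in S}m^{-\gamma}$ is maximized over $\ell$-element sets by $S = \{1,\dots,\ell\}$, giving $(\ell!)^{-\gamma}$, and there are $\binom{u}{\ell}$ subsets, I obtain $\P(\sum X_m < u-\eta) \leq \binom{u}{\ell}(\ell!)^{-\gamma}$. Using $\binom{u}{\ell} \leq u^\ell/\ell!$ then yields the clean estimate
\begin{align}
    \P\Bigl(\sum_{m=1}^u X_m < u - \eta\Bigr) \leq \frac{u^\ell}{(\ell!)^{\gamma+1}},
\end{align}
where the case $\ell > u$ is trivial since then the left side is $0$.

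The final step converts $u^\ell/(\ell!)^{\gamma+1}$, which is indexed by the integer $\ell$, into the stated bound in the real parameter $\eta$. The naive route of bounding $u^\ell$ and the Stirling factors separately costs a spurious factor $e^{\gamma+1}$, so instead I would keep the ratio intact. Since $\ell - 1 - \eta \leq 0$ and $u \geq 1$, I have $u^\ell \leq u^{1+\eta}$, so it suffices to show $(\ell!)^{\gamma+1} \geq (\eta/e)^{(\gamma+1)\eta}$, i.e.\ $\ell! \geq (\eta/e)^\eta$. This follows from $\ell! \geq (\ell/e)^\ell$ together with the fact that $x \mapsto (x/e)^x = \exp(x\ln x - x)$ is increasing on $[1,\infty)$ (its logarithmic derivative is $\ln x \geq 0$) and $\ell > \eta \geq 1$, whence $(\ell/e)^\ell \geq (\eta/e)^\eta$. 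Chaining these gives precisely $\frac{u^{1+\eta}}{\eta^{(\gamma+1)\eta}}\exp((\gamma+1)\eta)$.

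I expect this last conversion to be the main obstacle: the claim is phrased for a continuous $\eta$ while the natural combinatorial estimate lives at the integer $\ell = \lfloor\eta\rfloor+1$, and the rounding slack must be routed entirely through $u^\ell \leq u^{1+\eta}$ (using $u \geq 1$) and the monotonicity of $(x/e)^x$, rather than through the exponential factors, in order to land exactly on the stated constant. I would also note that the hypothesis $\lceil \sum_{m=1}^u m^{-\gamma}\rceil \leq \eta$ is not actually needed for the inequality itself—the union bound is valid for any threshold—but it guarantees that $\eta$ lies above the mean, which is what makes the estimate a genuine large-deviation tail bound in the intended application.
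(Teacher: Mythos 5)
Your proof is correct, but it takes a genuinely different route from the paper's. The paper bounds the upper tail of $\sum_m Y_m$ by exploiting structure of the Poisson binomial distribution: it invokes the strong Rayleigh property to get ultra-log-concavity and hence unimodality of the probability mass function, and Darroch's theorem to locate the mode in $\{\lfloor\mu\rfloor,\lceil\mu\rceil\}$, which is precisely where the hypothesis $\lceil\sum_m 1/m^\gamma\rceil\leq\eta$ enters; this lets it bound the tail by $u$ times the single point mass at $\eta$, which it then estimates by the same subset-counting and Stirling steps you use. You instead bypass all of that machinery with a one-shot union bound over the $\binom{u}{\ell}$ subsets of indices that could fail, at the integer threshold $\ell=\lfloor\eta\rfloor+1$, and then convert back to the real parameter $\eta$ via $u^\ell\leq u^{1+\eta}$ and the monotonicity of $x\mapsto(x/e)^x$ on $[1,\infty)$. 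Your version buys three things: it is elementary and self-contained (no strong Rayleigh, no mode theorem); it is rigorous about the fact that $\eta$ is real-valued, whereas the paper informally writes $\binom{u}{\eta}$, $\eta!$, and $\P(\sum_m Y_m=\eta)$ for non-integer $\eta$; and, as you correctly observe, it shows the hypothesis $\lceil\sum_m 1/m^\gamma\rceil\leq\eta$ is not needed for the inequality itself — in the paper's argument that hypothesis is genuinely load-bearing, since the unimodality step requires $\eta$ to sit at or beyond the mode. What the paper's approach buys in exchange is a conceptually sharper reduction (tail $\leq u\times$ point mass near the mode), though this sharpness is not exploited in the final bound.
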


\begin{proof}{Proof of Lemma~\ref{lem:binomial_bound}.}
    Let $Y_m := 1-X_m$ for $m=1,\dots,u$, $\mu := \sum_{m=1}^u 1/m^\gamma$, and $\mu \leq \eta \leq u$.
    Then, $Y_m \sim \bern{1/m^\gamma}$ and
    \begin{align}
        \P\Bigl( \sum_{m=1}^u X_m < u - \eta \Bigr) &= \P\Bigl( \sum_{m=1}^u Y_m > \eta \Bigr).
        \label{eq:binomial_X_Y}
    \end{align}
    We note that $\sum_{m=1}^u Y_m$ follows a Poisson binomial distribution.
    Let $a_k := \P( \sum_{m=1}^u Y_m = k )$ and $m^* := \arg \max_{k \in [u]} a_k$ refer to the probability mass function and mode of the Poisson binomial distribution, respectively.
    It is known that the Poisson binomial distribution is strong Rayleigh \cite[Corollary~4.2]{Tang2023}.
    As a consequence, $\{ a_k \}_{k=1}^u$ is ultra-logconcave, and this in turn implies that the sequence is unimodal (see the discussion following \cite[Corollary~4.2]{Tang2023})
    This proves that if $m^* \leq k \leq l$, then $a_l \leq a_k$.
    What remains is to relate the mode $m^*$ to the mean $\mu$.
    By \cite[Theorem~4]{Darroch1964}, we have $m^* \in \{\lfloor \mu \rfloor, \lceil \mu \rceil \}$.
    Consequently, if $\lceil \mu \rceil \leq k \leq l$, then $a_l \leq a_k$. 

    Therefore, since $\lceil\mu\rceil \leq \eta$, we obtain 
    \begin{align}
        \P\Bigl( \sum_{m=1}^u Y_m > \eta \Bigr) &\leq \sum_{k=\eta}^u \P\Bigl( \sum_{m=1}^u Y_m = k \Bigr)
        \leq  u  \P\Bigl( \sum_{m=1}^u Y_m = \eta \Bigr).
    \end{align}
    Since the success probability of $Y_m$ is decreasing in $m$, the most likely way to get $\sum_{m=1}^u Y_m = \eta $ is for the first $\eta$ indices to be 1.
    Moreover, the number of combinations of cardinality $\eta$ from the set $\{1,\dots,u\}$ is $\binom{u}{\eta}$, and therefore 
    \begin{align}
        \P\Bigl( \sum_{m=1}^u Y_m > \eta \Bigr) &\leq u \binom{u}{\eta} \prod_{m=1}^\eta \P(Y_m = 1)\\
        &=   u \binom{u}{\eta} \prod_{m=1}^{\eta} \frac{1}{m^{\gamma}} \\
        &=   u \binom{u}{\eta} \frac{1}{(\eta!)^{\gamma}}.
    \end{align}
    By Stirling's formula \citep{Robbins1955}, 
    \begin{align}
        \eta! &\geq \sqrt{2\pi \eta} \ \eta^\eta\exp(-\eta)\geq \eta^\eta\exp(-\eta).
    \end{align}
    Moreover, since $\exp(\eta) \geq \eta^\eta/\eta!$, we have 
    \begin{align}
        \binom{u}{\eta} &\leq \frac{u^\eta}{\eta!} \leq \Bigl(\frac{u}{\eta}\Bigr)^{\eta} \exp(\eta).
    \end{align}
    Hence
    \begin{align}
        \P\Bigl( \sum_{m=1}^u Y_m > \eta \Bigr)
        &\leq u \Bigl(\frac{u}{\eta}\Bigr)^{\eta} \exp(\eta) \eta^{-\gamma\eta} \exp(\gamma\eta)
        = \frac{u^{1+\eta}}{\eta^{(\gamma+1)\eta}}  \exp((\gamma+1)\eta).
        \label{eq:binomial_lem_conclusion}
    \end{align}
    The result follows by substituting~\eqref{eq:binomial_lem_conclusion} into~\eqref{eq:binomial_X_Y}.
    \Halmos
\end{proof}

\begin{lemma}
    \label{lem:pois_tail}
    Let $\gamma > 0$ and $X\sim\poi{\gamma}$. For any $b\in (0,\gamma)$, 
    $
        \P(X - \gamma \leq - b) \leq \exp\Bigl(- \frac{b^2}{2\gamma}\Bigr).
    $
\end{lemma}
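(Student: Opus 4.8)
The statement is the standard lower-tail bound for a Poisson random variable, so my plan is to prove it by the exponential Chernoff method. First I would fix $X \sim \poi{\gamma}$ and $b \in (0,\gamma)$, and for an arbitrary parameter $s > 0$ write
\begin{align}
    \P(X - \gamma \leq -b) = \P(-sX \geq -s(\gamma - b)) \leq e^{s(\gamma-b)} \E(e^{-sX}),
\end{align}
where the inequality is Markov's inequality applied to the nonnegative random variable $e^{-sX}$. The point of introducing $-sX$ rather than $sX$ is that we are bounding a lower tail, so we exponentiate with a negative coefficient.

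Next I would substitute the moment generating function of the Poisson distribution, $\E(e^{-sX}) = \exp(\gamma(e^{-s}-1))$, to obtain the bound $\exp(\gamma(e^{-s}-1) + s(\gamma-b))$ valid for every $s > 0$. The plan is then to optimize over $s$, or more simply to plug in a convenient near-optimal value. The exact minimizer is $s = -\ln(1 - b/\gamma) > 0$, which would give the sharp bound $\exp(-\gamma\, \kappa(b/\gamma))$ with $\kappa(x) = (1-x)\ln(1-x) + x$; however, to land precisely on the clean quadratic bound $\exp(-b^2/(2\gamma))$ in the statement, I would instead use the elementary inequality $e^{-s} - 1 \leq -s + s^2/2$ valid for $s > 0$. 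This turns the exponent into $\gamma(-s + s^2/2) + s(\gamma - b) = \gamma s^2/2 - bs$, a quadratic in $s$ whose minimum over $s$ is attained at $s^* = b/\gamma$ with value $-b^2/(2\gamma)$.

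Finally I would record that $s^* = b/\gamma > 0$ (so it lies in the admissible range $s>0$) and that the inequality $e^{-s}-1 \le -s + s^2/2$ holds for all $s \ge 0$ by Taylor's theorem with the alternating-series structure of $e^{-s}$, and then substitute $s = b/\gamma$ to conclude
\begin{align}
    \P(X - \gamma \leq -b) \leq \exp\Bigl(\frac{\gamma}{2}\frac{b^2}{\gamma^2} - b\frac{b}{\gamma}\Bigr) = \exp\Bigl(-\frac{b^2}{2\gamma}\Bigr).
\end{align}
I expect no serious obstacle here: the only point requiring a little care is justifying the quadratic upper bound $e^{-s}-1 \le -s + s^2/2$ (rather than the usual two-sided $|e^{-s}-1-(-s)| \le s^2/2$), and checking that using this relaxation instead of the exact minimizer still produces exactly the advertised constant $1/2$ in the exponent. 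The hypothesis $b < \gamma$ is used only to keep $b$ in a range where the bound is meaningful (and ensures the exact minimizer of the unrelaxed problem would also be positive); the relaxed argument above in fact works for all $b > 0$, which is consistent with the claimed inequality.
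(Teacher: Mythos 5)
Your proof is correct, and while it shares the Chernoff-bound skeleton with the paper's proof, the finishing step is genuinely different. The paper applies Markov's inequality to $e^{tX}$ with $t<0$, computes the \emph{exact} minimizer $t^* = \ln((\gamma-b)/\gamma)$, obtains the sharp bound $\exp\bigl(-b-(\gamma-b)\ln((\gamma-b)/\gamma)\bigr)$, and then must still verify that this is at most $\exp(-b^2/(2\gamma))$; it does so by rewriting the exponent as $-\tfrac{b^2}{2\gamma}h(b/\gamma)$ with $h(x) = \tfrac 2x + \tfrac 2x(\tfrac 1x - 1)\ln(1-x)$ and asserting $h(x)\geq 1$ on $(0,1)$ via a Taylor expansion (a nontrivial inequality the paper leaves to the reader). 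You swap the order of relaxation and optimization: you first bound the exponent $\gamma(e^{-s}-1)+s(\gamma-b)$ by the quadratic $\gamma s^2/2 - bs$ using $e^{-s}-1\leq -s+s^2/2$, and then minimize the quadratic exactly at $s^*=b/\gamma$, landing directly on $-b^2/(2\gamma)$. This buys a cleaner argument: the only analytic fact needed is the elementary truncation inequality, and at the one point where you use it, $s^*=b/\gamma\in(0,1)$ (by the hypothesis $b<\gamma$), your alternating-series justification is airtight; for completeness one can note the inequality holds for all $s\geq 0$ since $f(s)=1-s+s^2/2-e^{-s}$ satisfies $f(0)=f'(0)=0$ and $f''(s)=1-e^{-s}\geq 0$. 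What the paper's route buys in exchange is that it passes through the sharper Bennett/KL-type bound before relaxing, which is stronger at the intermediate stage, whereas your bound is only the final quadratic one; for the purposes of this lemma the two are equivalent, and your observation that the relaxed argument needs no restriction $b<\gamma$ is also accurate.
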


\begin{proof}{Proof of Lemma~\ref{lem:pois_tail}.}
    Let $\gamma>0$ and $0 < b < \gamma$.
    We have for any $t < 0$,
    \begin{align}
        \P(X \leq \gamma - b) &= \P\bigl(\ex^{t X} \geq \ex^{t(\gamma-b)}\bigr).
    \end{align}
    By applying Markov's inequality, we obtain
    \begin{align}
        \P(X \leq \gamma - b)
        &\leq
        \inf_{t<0} \E\bigl(\ex^{tX} \bigr) \ex^{-t(\gamma - b)}.
    \end{align}
    Since the moment generating function of $X$ is $\E(\ex^{tX}) = \ex^{\gamma \ex^t - 1}$ for $t\in\R$, we have
    \begin{align}
        \P(X \leq \gamma - b ) &\leq
        \inf_{t <0} \ex^{\gamma(\ex^t-1)} \ex^{-t(\gamma - b)}.
    \end{align}
    The infimum is attained for $t = \ln((\gamma-b)/\gamma) < 0$, hence we find after substituting that
    \begin{align}
        \P(X \leq \gamma - b) &\leq \exp\Bigl(- b - (\gamma - b)\ln\Bigl(\frac{\gamma-b}{\gamma}\Bigr)\Bigr) \\
        &= \exp\Bigl(-\frac{b^2}{2\gamma}\Bigl(\frac{2\gamma}{b} + \Bigl(\frac{2\gamma(\gamma-b)}{b^2}\Bigr)\ln\Bigl(\frac{\gamma - b}{\gamma}\Bigr)\Bigr)\Bigr).
    \end{align}
    Let $x = b/\gamma$. From the assumptions, it follows that $0 < x < 1$. 
    Now, it can be verified from a Taylor expansion that
    \begin{align}
        h(x) = \frac{2}{x} + \frac 2x \Bigl(\frac 1x -1\Bigr) \ln(1-x)
    \end{align}
    satisfies $h(x) \geq 1$ for any $0 < x < 1$. Therefore
    $
        \P(X \leq \gamma - b)
        \leq \exp\Bigl(-\frac{b^2}{2\gamma}\Bigr).
    $
    \Halmos
\end{proof}

\noindent We next present the proof of Lemma~\ref{lem:prob_insuf_samples}.
\begin{proof}{Proof of Lemma~\ref{lem:prob_insuf_samples}.}
    For $(ij)\in\calL^a$ and $m\in\N$,
    consider the processes $\hat{Q}_j^{ma}$ and $\underline{Q}_j^{ma}$ as defined in Section~\ref{sec:step1}.
    Let $\hat{D}_{ij}^{ma}$ denote the number of type-$(ij)$ departures within episode $m$ \emph{but after warm up time $\tau_m$} of the process $\hat{Q}_j^{ma}$. 
    We introduce the events 
    \begin{align}
        \Omega_{ij}^{ma} := \{D_{ij}^{ma} \geq \hat{D}_{ij}^{ma}\},
        \quad
        \underline{\Omega}_{ij}^{ma} := \{\hat{Q}_{ij}^{mA_m}(\tau_m) = \underline{Q}_{ij}^{mA_m}(\tau_m)\}.
        \label{eq:def_omega}
    \end{align}
    We note that $\underline{\Omega}_{ij}^{ma}$ implies $\Omega_{ij}^{ma}$, hence
    \begin{align}
        \P(\underline{\Omega}_{ij}^{ma})\leq \P(\Omega_{ij}^{ma}),   
        \label{eq:omega_underline_lb}
    \end{align}
    and so for any $k\geq 1$, 
    \begin{align}
        \P\Bigl(\sum_{m=1}^k \ind{\underline{\Omega}_{ij}^{ma}} \leq \sum_{m=1}^k \ind{\Omega_{ij}^{ma}}\Bigr) = 1.
        \label{eq:bnd_omega_underline}
    \end{align}

    Suppose $k > \lceil C_\beta\rceil$.
    It follows from \eqref{eq:def_overline_T} and \eqref{eq:def_event_E_lem} that
    \begin{align}
        E_k^a &= \Bigl\{\sum_{m=1}^{\lceil C_\beta + u_k\rceil} D_{ij}^{ma} \geq \lceil \sigma_{ijk}^a\rceil, \ \forall (ij)\in\calL^a \Bigr\}.
        \label{eq:event_Eka_departures}
    \end{align}
    By the law of total probability,
    \begin{align}
        \P((E_k^a)^c)
        &\leq
        \P\Bigl( \sum_{m=1}^{\lceil C_\beta + u_k\rceil} \ind{\Omega_{ij}^{ma}} < \lceil u_k-\eta_k\rceil \Bigr)
        +
        \P\Bigl((E_k^a)^c, \ \sum_{m=1}^{\lceil C_\beta + u_k\rceil} \ind{\Omega_{ij}^{ma}} \geq \lceil u_k-\eta_k\rceil\Bigr).
        \label{eq:prob_Eija_split}
    \end{align}
    We bound these terms individually.

    It follows from \eqref{eq:bnd_omega_underline} that
    \begin{align}
        \P\Bigl( \sum_{m=1}^{\lceil C_\beta + u_k\rceil} \ind{\Omega_{ij}^{ma}} < \lceil u_k-\eta_k\rceil \Bigr)
        &= \P\Bigl( \sum_{m=1}^{\lceil C_\beta + u_k\rceil} \ind{\Omega_{ij}^{ma}} < \lceil u_k-\eta_k\rceil, \ \sum_{m=1}^k \ind{\underline{\Omega}_{ij}^{ma}} \leq \sum_{m=1}^k \ind{\Omega_{ij}^{ma}}\Bigr)\\
        &\leq
        \P\Bigl( \sum_{m=1}^{\lceil C_\beta + u_k\rceil} \ind{\underline{\Omega}_{ij}^{ma}} < \lceil u_k-\eta_k\rceil \Bigr).
    \end{align}
    We observe that event $\underline{Q}_j^{ma}(\tau_m) = \hat{Q}_j^{ma}(\tau_m)$ implies that there exists an $0\leq s\leq \tau_m$ such that $\hat{Q}_j^{ma}(s) = \underline{Q}_j^{ma}(s) = 0$.
    In this case, we have $\hat{Q}_{ij}^{ma}(s) \leq \hat{Q}_j^{ma}(s)=0 = \underline{Q}_{ij}^{ma}(s)$ by construction, and so $\ind{\underline{\Omega}_{ij}^m} = 1$. 
    Thus, by invoking Lemma~\ref{lem:episode_not_mixed} we find that
    \begin{align}
        \P(\underline{\Omega}_{ij}^{ma})
        &\geq
        \P(\underline{\Omega}_{j}^{ma})
        \geq 
        1 - \frac{1}{m^\beta}.
        \label{eq:P_omega_bnd}
    \end{align}
    Moreover, the events $\underline{\Omega}_{ij}^{ma}$ are mutually independent over all episodes, since the initial values $\hat{Q}_j^{ma}(0)$ are sampled from $p_j^a$ in~\eqref{eq:queue_statdistr} and $\underline{Q}_j^{ma}(0) = 0$.
    Lastly, it can be verified that $\eta_k \geq \lceil\sum_{m=1}^{\lceil u_k \rceil } 1/m^\beta\rceil$ for $k$ sufficiently large. 
    Hence, it follows from \eqref{eq:P_omega_bnd} and Lemma~\ref{lem:binomial_bound} that for $k$ sufficiently large,
    \begin{align}
        \P\Bigl( \sum_{m=1}^{\lceil C_\beta + u_k\rceil} \ind{\underline{\Omega}_{ij}^{ma}} < \lceil u_k-\eta_k\rceil \Bigr)
        &\leq
        \P\Bigl( \sum_{m=\lceil C_\beta\rceil}^{\lceil C_\beta + u_k\rceil} \ind{\underline{\Omega}_{ij}^{ma}} < \lceil u_k-\eta_k\rceil \Bigr)
        \leq
        \frac{u_k^{1+\eta_k}}{\eta_k^{(\beta+1)\eta_k}} \exp((\beta+1)\eta_k).
        \label{eq:n_episodes_notmixed}
    \end{align}
    By using $a=\exp(\ln(a))$, we have
    \begin{align}
        u_k^{1+\eta_k} &=\exp\bigl((1+\eta_k)\ln(u_k)\bigr) \stackrel{\textrm{\eqref{eq:def_seq_u_eta_sig_lem}}}{=} \exp\bigl( (1+\eta_k)\beta\ln\ln(k)\bigr),
        \\
        \eta_k^{(\beta+1)\eta_k} &=\exp\bigl((\beta+1)\eta_k\ln(\eta_k)\bigr) \stackrel{\textrm{\eqref{eq:def_seq_u_eta_sig_lem}}}{=}  \exp\Bigl( \frac{(\beta+1)^2}{2}\eta_k\ln\ln(k)\Bigr),
    \end{align}
    and so
    \begin{align}
        \frac{u_k^{1+\eta_k}}{\eta_k^{(\beta+1)\eta_k}} &= \exp\Bigl((\beta - \frac{\beta^2+1}{2} \eta_k) \ln\ln(k)
        \Bigr).
        \label{eq:frac_u_eta_expand}
    \end{align}
    Substitution of \eqref{eq:frac_u_eta_expand} into \eqref{eq:n_episodes_notmixed} gives
    \begin{align}
        \P\Bigl( \sum_{m=\lceil C_\beta\rceil}^{\lceil C_\beta + u_k\rceil} \ind{\underline{\Omega}_{ij}^{ma}} < \lceil u_k-\eta_k\rceil \Bigr)
        &\leq \exp\Bigl(\beta \ln\ln(k) + \bigl(\beta +1 - \frac{\beta^2+1}{2} \ln\ln(k)\bigr)\eta_k  \Bigr)
        \\
        &\leq
        \exp\Bigl( \bigl[2\beta+1 - \frac{\beta^2+1}{2}\ln\ln(k)\bigr] \eta_k\Bigr).
        \label{eq:prob_Eija_part1}
    \end{align}
    where the last inequality follows from $\ln\ln(k) \leq \eta_k$ and $\beta > 1$.

    For the rightmost  probability in \eqref{eq:prob_Eija_split} we have by Boole's inequality that
    \begin{align}
        &
        \P
        \Bigl(
            (E_k^a)^c, \ \sum_{m=1}^{\lceil C_\beta + u_k\rceil} \ind{\Omega_{ij}^{ma}} \geq \lceil u_k-\eta_k\rceil
        \Bigr)
        \nonumber \\ 
        &\ \leq
        \sum_{(ij)\in\calL^a}
        \P
        \Bigl(
            \sum_{m=1}^{\lceil C_\beta + u_k\rceil} D_{ij}^{ma} < \lceil \sigma_{ijk}^a\rceil, \ \sum_{m=1}^{\lceil C_\beta + u_k\rceil} \ind{\Omega_{ij}^{ma}} \geq \lceil u_k-\eta_k\rceil
        \Bigr)
        \nonumber \\ 
        &\stackrel{\textrm{\eqref{eq:def_omega}}}{\leq}
        \sum_{(ij)\in\calL^a}
        \P
        \Bigl(
            \sum_{m=1}^{\lceil u_k - \eta_k \rceil} \hat{D}_{ij}^{ma} < \lceil \sigma_{ijk}^a\rceil
        \Bigr)
        .
    \end{align}
    Recall from~\eqref{eq:ucbqr_episode_length} that the length of an episode without the warmup period is $H_0$.
    From Burke's Theorem~\cite[II.2.4 Theorem 2.1]{Cohen1981} and Poisson split and merge properties \citep{Cinlar1968}, we have that
    $\hat{D}_{ij}^{ma}\stackrel{\textrm{i.i.d.}}{\sim}\poi{x_{ij}^a H_0}$,
    and hence
    $\sum_{m=1}^k \hat{D}_{ij}^{ma} \sim\poi{x_{ij}^a H_0 k}$.
    Moreover, $u_k -\eta_k > \eta_k$ for $k$ sufficiently large. 
    Therefore, we obtain from Lemma~\ref{lem:pois_tail} and \eqref{eq:def_seq_u_eta_sig_lem} that for $k$ sufficiently large, 
    \begin{align}
        \P\Bigl(\sum_{m=1}^{\lceil u_k - \eta_k \rceil} \hat{D}_{ij}^{ma} < \lceil \sigma_{ijk}^a\rceil \Bigr)
        &\leq   \exp\Bigl(\frac{-(x_{ij}^a H_0 \lceil u_k-\eta_k\rceil -\lceil \sigma_{ijk}^a\rceil )^2 }{2x_{ij}^aH_0\lceil u_k-\eta_k\rceil}\Bigr) 
        \leq  \exp\Bigl(\frac{-x_{ij}^aH_0\eta_k^2}{2u_k}\Bigr),
    \end{align}
    where we used $x_{ij}^a H_0 \lceil u_k-\eta_k\rceil -\lceil \sigma_{ijk}^a \rceil \geq x_{ij}^a H_0 \eta_k$ in the last equality. 
    We use \eqref{eq:ucbqr_episode_length}, the bounds $|\calL^a| \leq L$, $x_{ij}^a \geq \xmin$, together with the definitions of $\eta_k$ and $u_k$ in~\eqref{eq:def_seq_u_eta_sig_lem} to write
    \begin{align}
        \sum_{(ij)\in\calL^a} \P\Bigl(\sum_{m=1}^{\lceil u_k - \eta_k \rceil} \hat{D}_{ij}^{ma} < \lceil \sigma_{ijk}^a\rceil \Bigr)
        &
        \leq
        \sum_{(ij)\in\calL^a} \exp\Bigl(\frac{-x_{ij}^aH_0\eta_k^2}{2u_k}\Bigr)
        \leq
        L \exp\Bigl(- \frac 12  \xmin H_0  \ln(k)\Bigr)
         \\ &
        = L k^{-\xmin H_0/2}
        \leq
        L k^{-2}
        .
        \label{eq:prob_Eija_part2}
    \end{align}

    Finally, we obtain from \eqref{eq:prob_Eija_split}, \eqref{eq:prob_Eija_part1}, and \eqref{eq:prob_Eija_part2},
    \begin{align}
        \lim_{k\to\infty} k \P((E_k^a)^c)
        \leq
        \lim_{k\to\infty}
        \Bigl(
            L k^{-1}
            +
            k\exp
            {
                \Bigl(
                    \bigl[
                        2\beta+1 - \frac{\beta^2+1}{2}\ln\ln(k)
                    \bigr]
                    \eta_k
                \Bigr)
            }
        \Bigr)
        = 0.
        \label{eq:lim_Ekac_to_0}
    \end{align}
    Here, the last equality follows since
    \begin{align}
        k\exp
        {
            \Bigl(
                \bigl[
                    2\beta+1 - \frac{\beta^2+1}{2}\ln\ln(k)
                \bigr]
                \eta_k
            \Bigr)
        } \propto 
        k^{1-\log^{(\beta-1)/2}(k)\ln\ln(k)},
    \end{align}
    and $\beta > 1$. 
    \Halmos
\end{proof}

\newpage
\subsection{Proof of Lemma~\ref{lem:bnd_overestimation_suf_samples}.}
\label{app:bnd_overestimation_suf_samples}
In order to prove Lemma~\ref{lem:bnd_overestimation_suf_samples}, we provide an intermediate result in Lemma~\ref{lem:integral}.

\begin{lemma}
    \label{lem:integral}
        Let $(\sigma_k)_{k\in\R_{>0}} \in\R$ satisfy $\sigma_k = \omega(\ln(k))$
        and let $\gamma \in\R_{> 0}$.
        Then
        \begin{align}
            \lim_{k\to\infty} k \int_{\sigma_{k}}^\infty \exp\Bigl( -2m \Bigl(\gamma - \sqrt{\frac{\ln(k)}{m}}\Bigr)^2\Bigr) \mathrm{d}m &= 0.
        \end{align}
    \end{lemma}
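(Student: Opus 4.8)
The plan is to simplify the integrand by completing the square in the exponent, and then to exploit the hypothesis $\sigma_k = \omega(\ln(k))$ to reduce the integral to an elementary exponential tail.

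First I would write $L:=\ln(k)$ and observe that the exponent factorises exactly:
\[
    2m\Bigl(\gamma - \sqrt{\tfrac{L}{m}}\Bigr)^2 = 2\gamma^2 m - 4\gamma\sqrt{Lm} + 2L = 2\gamma^2\Bigl(\sqrt{m} - \tfrac{\sqrt{L}}{\gamma}\Bigr)^2,
\]
so the integrand equals $\exp\bigl(-2\gamma^2(\sqrt{m} - \sqrt{\ln(k)}/\gamma)^2\bigr)$. This is a Gaussian in the variable $\sqrt{m}$ centred at $\sqrt{\ln(k)}/\gamma$, and the integration range $m\ge\sigma_k$ will turn out to lie far to the right of this centre.

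Next comes the key observation. Since $\sigma_k = \omega(\ln(k))$, we have $\sqrt{\sigma_k}/\sqrt{\ln(k)}\to\infty$, so for all $k$ large enough $\sqrt{\sigma_k}\ge 2\sqrt{\ln(k)}/\gamma$. Hence for every $m\ge\sigma_k$ the centre $\sqrt{\ln(k)}/\gamma$ lies below $\tfrac12\sqrt{m}$, giving $\sqrt{m}-\sqrt{\ln(k)}/\gamma\ge\tfrac12\sqrt{m}\ge 0$ and therefore $(\sqrt{m}-\sqrt{\ln(k)}/\gamma)^2\ge m/4$. Substituting this into the exponent yields the pointwise bound $\exp\bigl(-2\gamma^2(\sqrt m-\sqrt{\ln(k)}/\gamma)^2\bigr)\le\exp(-\gamma^2 m/2)$ on the whole range $m\ge\sigma_k$. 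I would then integrate the dominating exponential,
\[
    \int_{\sigma_k}^\infty e^{-\gamma^2 m/2}\,\mathrm{d}m = \frac{2}{\gamma^2}e^{-\gamma^2\sigma_k/2},
\]
so that $k\int_{\sigma_k}^\infty(\cdots)\,\mathrm{d}m \le \tfrac{2}{\gamma^2}\,k\,e^{-\gamma^2\sigma_k/2} = \tfrac{2}{\gamma^2}\exp\bigl(\ln(k)-\gamma^2\sigma_k/2\bigr)$. Using $\sigma_k=\omega(\ln(k))$ once more, the term $\gamma^2\sigma_k/2$ dominates $\ln(k)$, so the exponent tends to $-\infty$ and the expression vanishes as $k\to\infty$.

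I expect no deep obstacle here: the argument is essentially one algebraic identity (the completed square) together with two elementary applications of the growth rate. The only point requiring care is ensuring the prefactor $k$ is absorbed, and that is precisely where $\sigma_k=\omega(\ln(k))$ enters — guaranteeing $k\,e^{-\gamma^2\sigma_k/2}\to 0$ uniformly in the (fixed but arbitrary) constant $\gamma$, without any need to track how large $\sigma_k/\ln(k)$ must be.
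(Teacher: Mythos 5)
Your proof is correct, but it takes a genuinely different route from the paper's. Both arguments start from the same key identity, namely that the exponent is a perfect square in $\sqrt{m}$: the paper realizes this through the substitution $\tau = \gamma\sqrt{m} - \sqrt{\ln(k)}$, which turns the integrand into $e^{-2\tau^2}$, whereas you rewrite it in place as $\exp\bigl(-2\gamma^2(\sqrt{m}-\sqrt{\ln(k)}/\gamma)^2\bigr)$. After that the two proofs diverge. The paper evaluates the transformed integral \emph{exactly}, obtaining a Gaussian term plus a term involving the complementary error function, and then invokes the bound $\mathrm{erfc}(x) \leq \exp(-x^2)/(x\sqrt{\pi})$ (cited from the literature) together with $\sigma_k = \omega(\ln(k))$ to show each piece is $o(1/k)$. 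You instead avoid special functions entirely: using $\sigma_k = \omega(\ln(k))$ \emph{before} integrating, you note that on the whole domain $m \geq \sigma_k$ one has $\sqrt{\ln(k)}/\gamma \leq \tfrac12\sqrt{m}$ for large $k$, hence $(\sqrt{m}-\sqrt{\ln(k)}/\gamma)^2 \geq m/4$, so the integrand is dominated pointwise by $e^{-\gamma^2 m/2}$; the remaining integral is elementary and the prefactor $k$ is killed by $k e^{-\gamma^2\sigma_k/2} = \exp\bigl(\ln(k)-\gamma^2\sigma_k/2\bigr) \to 0$. Your domination step is cruder than the paper's exact computation, but the lemma only asserts a limit, so nothing is lost; what you gain is a self-contained argument with no $\mathrm{erfc}$ machinery and no external inequality, while the paper's route would additionally yield the precise asymptotic decay rate of the tail integral if one ever needed it. One cosmetic remark: your closing claim that the convergence holds ``uniformly in $\gamma$'' is not quite accurate (the threshold on $k$ at which $\sqrt{\sigma_k} \geq 2\sqrt{\ln(k)}/\gamma$ and at which $\gamma^2\sigma_k/2$ overtakes $\ln(k)$ both depend on $\gamma$), but since $\gamma$ is a fixed constant in the lemma this has no bearing on correctness.
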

    \begin{proof}{Proof of Lemma~\ref{lem:integral}.}
        Substitution of $\tau = \gamma \sqrt{m} - \sqrt{\ln(k)}$
        with $\textrm{d}m/\textrm{d}\tau = 2(\tau+\sqrt{\ln(k)})/\gamma^2$
        gives
        \begin{align}
            \int_{\sigma_{k}}^\infty \exp\Bigl( -2m \Bigl(\gamma - \sqrt{\frac{\ln(k)}{m}}\Bigr)^2\Bigr) \textrm{d}m
            &= \int_{\gamma \sqrt{\sigma_k}-\sqrt{\ln(k)}}^\infty \exp\bigl( -2\tau^2 \bigr) \frac{2(\tau+\sqrt{\ln(k)})}{\gamma^2}\textrm{d}\tau.
        \end{align}
        It can be verified that
        \begin{align}
            &\int_{\gamma \sqrt{\sigma_k}-\sqrt{\ln(k)}}^\infty \exp\bigl( -2\tau^2 \bigr) \frac{2(\tau+\sqrt{\ln(k)})}{\gamma^2}\textrm{d}\tau \\
            &\qquad = \frac{\exp\bigl(-2(\gamma\sqrt{\sigma_k}-\sqrt{\ln(k)})^2\bigr)  + \sqrt{2\ln(k)\pi}\textrm{erfc}\bigl(\sqrt{2}(\gamma\sqrt{\sigma_k}-\sqrt{\ln(k)})\bigr)}{2\gamma^2},
            \label{eq:proof_integral_subs}
        \end{align}
        where $\textrm{erfc}(x) := 1-2/\sqrt{\pi}\int_0^x \exp(-t^2)\textrm{d}t$ is the complementary error function.
        Now since $\sigma_k = \omega(\ln(k))$, we have
        \begin{align}
            \exp\bigl(-2(\gamma\sqrt{\sigma_k}-\sqrt{\ln(k)})^2\bigr) &= O\bigl(\exp(-\sigma_k)\bigr) = o(1/k).
            \label{eq:proof_integral_part1}
        \end{align}
        Moreover, we use the bound $\textrm{erfc}(x) \leq \exp(-x^2)/(x\sqrt{\pi})$ \citep{Karagiannidis2007} to obtain
        \begin{align}
            \sqrt{2\ln(k)\pi}\textrm{erfc}\bigl(\sqrt{2}(\gamma\sqrt{\sigma_k}-\sqrt{\ln(k)})\bigr)
            &= O\Bigl( \frac{\sqrt{\ln(k)}\exp(-\sigma_k)}{\sqrt{\sigma_k}-\sqrt{\ln(k)}}\Bigr)
            = o(1/k).
            \label{eq:proof_integral_part2}
        \end{align}
        Substitution of \eqref{eq:proof_integral_part1} and \eqref{eq:proof_integral_part2} into \eqref{eq:proof_integral_subs} gives the result.
        \Halmos
    \end{proof}

We next present the proof of Lemma~\ref{lem:bnd_overestimation_suf_samples}.
\begin{proof}{Proof of Lemma~\ref{lem:bnd_overestimation_suf_samples}.}
    Let $\lceil C_\beta + u_k \rceil \leq s \leq k$.
    Recall from \eqref{eq:def_reward_action_a} and \eqref{eq:Uestimator_episodes} that
    \begin{align}
        \P\bigl(\overline{U}^a[s,k] - r^a  \geq d^a -\xi_k, \ E_k^a \bigr)
        &= \P\Bigl(\sum_{(ij)\in\calL^a} x_{ij}^a \Bigl(\overline{\theta}_{ij}\bigl[\overline{T}_{ij}^a[s]\bigr] - \theta_{ij} + \sqrt{\frac{\ln(k)}{\overline{T}_{ij}^a[s]}}\Bigr) \geq d^a -\xi_k, \ E_k^a  \Bigr).
        \label{eq:est_per_line}
    \end{align}
    From Boole's inequality,\footnote{For any random variables $X,Y$ and any $z\geq 0$, $\{X+Y \geq z\} \subseteq \{X \geq z/2\} \cup \{Y \geq z/2\}$. Therefore $\P(X+Y\geq z) \leq \P(\{X \geq z/2\} \cup \{Y\geq z/2\})$.
    Applying Boole's inequality  gives $\P(X+Y \geq z) \leq \P(X \geq z/2) + \P(Y \geq z/2)$.}
    we obtain that for any $n,m,z \geq 0$,
    \begin{align}
        \P\Bigl(\sum_{(ij)\in\calL^a} x_{ij}^a \Bigl(\overline{\theta}_{ij}[n] - \theta_{ij} + \sqrt{\frac{\ln(m)}{n}}\Bigr) \geq z \Bigr)
        &\leq \sum_{(ij)\in\calL^a} \P\Bigl(\overline{\theta}_{ij}[n] - \theta_{ij} + \sqrt{\frac{\ln(m)}{n}} \geq  \frac{z}{x_{ij}^a |\calL^a|}\Bigr).
        \label{eq:est_boole}
    \end{align}
    Since $E_k^a$ can be expressed as a union of disjoint events of the form $\{\overline{T}_{ij}^a[s]=m\}$, it follows from \eqref{eq:est_per_line} and \eqref{eq:est_boole} that
    \begin{align}
        \P\bigl(\overline{U}^a[s,k] - r^a  \geq d^a -\xi_k, \ E_k^a \bigr)
        &\leq
        \sum_{(ij)\in\calL^a} \P\Bigl(\overline{\theta}_{ij}\bigl[\overline{T}_{ij}^a[s]\bigr] - \theta_{ij} + \sqrt{\frac{\ln(k)}{\overline{T}_{ij}^a[s]}} \geq  \frac{d^a -\xi_k}{x_{ij}^a |\calL^a|}, \ E_k^a\Bigr).
    \end{align}
    Note that $\overline{T}_{ij}^a[\cdot]$ is non decreasing and that $s\geq \lceil C_\beta + u_k\rceil$, so on \eqref{eq:def_event_E_lem}, $\overline{T}_{ij}^a[s] \geq \lceil\sigma_{ijk}^a\rceil$ for all $(ij)\in\calL^a$. Therefore
    \begin{align}
        \P\bigl(\overline{U}^a[s,k] - r^a  \geq d^a -\xi_k, \ E_k^a \bigr)
        &\leq
        \sum_{(ij)\in\calL^a} \P\Bigl(\overline{\theta}_{ij}\bigl[\overline{T}_{ij}^a[s]\bigr] - \theta_{ij} + \sqrt{\frac{\ln(k)}{\overline{T}_{ij}^a[s]}} \geq  \frac{d^a -\xi_k}{x_{ij}^a |\calL^a|}, \ \overline{T}_{ij}[s] \geq \lceil \sigma_{ijk}^a\rceil \Bigr).
    \end{align}
    By partitioning the event based on the value of $\overline{T}_{ij}^a[s]$, we find that
    \begin{align}
        \P\bigl(\overline{U}^a[s,k] - r^a  \geq d^a -\xi_k, \ E_k^a \bigr)
        &\leq  \sum_{(ij)\in\calL^a} \sum_{m=\lceil\sigma_{ijk}^a\rceil}^\infty \P\Bigl(\overline{\theta}_{ij}\bigl[m\bigr] - \theta_{ij} + \sqrt{\frac{\ln(k)}{m}} \geq  \frac{d^a -\xi_k}{x_{ij}^a |\calL^a|}\Bigr).
        \label{eq:est_sum_partition}
    \end{align}
    Since the right side of \eqref{eq:est_sum_partition} is independent of $s$, we have
    \begin{align}
        \sum_{s= \lceil C_\beta + u_k \rceil}^k \P\bigl(\overline{U}^a[s,k] - r^a  \geq d^a -\xi_k, \ E_k^a\bigr)
        &\leq k \sum_{(ij)\in\calL^a} \sum_{m=\lceil\sigma_{ijk}^a\rceil}^\infty \P\Bigl(\overline{\theta}_{ij}\bigl[m\bigr] - \theta_{ij} + \sqrt{\frac{\ln(k)}{m}} \geq  \frac{d^a -\xi_k}{x_{ij}^a |\calL^a|}\Bigr).
        \label{eq:est_sum_partition_times_k}
    \end{align}
    Note that by construction $\sigma_{ijk}^a = \omega(\ln(k))$ (recall \eqref{eq:def_seq_u_eta_sig_lem}) and $\xi_k = o(1)$, so that $(d^a -\xi_k)/(x_{ij}^a |\calL^a|) > \sqrt{\ln(k)/\sigma_{ijk}^a}$ for $k\in\N_{\geq 1}$ sufficiently large. For such $k$, we have by (i) Hoeffding's inequality \cite[Theorem 2]{Hoeffding1963} and (ii) Lemma~\ref{lem:integral} that
    \begin{align}
        \sum_{(ij)\in\calL^a} \sum_{m=\lceil\sigma_{ijk}^a\rceil}^\infty \P\Bigl(\overline{\theta}_{ij}\bigl[m\bigr] - \theta_{ij} + \sqrt{\frac{\ln(k)}{m}} \geq  \frac{d^a -\xi_k}{x_{ij}^a |\calL^a|}\Bigr)
        &\stackrel{\textrm{(i)}}{\leq}
        \sum_{(ij)\in\calL^a} \sum_{m=\lceil\sigma_{ijk}^a\rceil}^\infty \exp\Bigl( -2m \Bigl(\frac{d^a -\xi_k}{x_{ij}^a |\calL^a|} - \sqrt{\frac{\ln(k)}{m}}\Bigr)^2\Bigr) \\
        &\leq
        \sum_{(ij)\in\calL^a} \int_{\sigma_{ijk}^a}^\infty \exp\Bigl( -2m \Bigl(\frac{d^a -\xi_k}{x_{ij}^a |\calL^a|} - \sqrt{\frac{\ln(k)}{m}}\Bigr)^2\Bigr) \textrm{d}m\\
        &\stackrel{\textrm{(ii)}}{=} o(1/k).
        \label{eq:est_asymp_bnd}
    \end{align}
    The result follows from \eqref{eq:est_sum_partition_times_k} and \eqref{eq:est_asymp_bnd}.
    \Halmos
\end{proof}

\newpage
\subsection{Proof of Lemma~\ref{lem:prob_underestimation}}
\label{app:prob_underestimation}

\begin{proof}{}
    By definitions \eqref{eq:def_reward_action_a}, \eqref{eq:UCB_update_algo_U_per_line}, and \eqref{eq:UCB_def_action}, we have that for any $m=1,\dots,k$,
    \begin{align} 
        \P(U^a(m) < r^a - \xi) &=
        \P\Bigl( \sum_{(ij)\in\calL^a} x_{ij}^a \Bigl(\hat{\theta}_{ij}(m) - \theta_{ij} + \sqrt{\frac{ \ln(m)}{T_{ij}(m)}} \Bigr)  < - \xi \Bigr).
    \end{align}
    We use the inequality $\P(A+B < z) \leq \P(A<z)+\P(B<z)$ for any events $A,B$ and any $z\in\R$ to write
    \begin{align}
        \P\bigl(U^a(m) < r^a - \xi\bigr) &\leq \sum_{(ij)\in\calL^a} \P\Bigl( \hat{\theta}_{ij}(m) - \theta_{ij} + \sqrt{\frac{ \ln(m)}{T_{ij}(m)}}   < - \frac{\xi}{x_{ij}^a} \Bigr).
    \end{align}
    By partitioning the event based on the value of $T_{ij}(m)$, we find that
    \begin{align}
        \P(U^a(m) < r^a - \xi)&\leq \sum_{(ij)\in\calL^a} \sum_{s=1}^\infty \P\Bigl( \hat{\theta}_{ij}(m) - \theta_{ij} + \sqrt{\frac{ \ln(m)}{T_{ij}(m)}}   < - \frac{\xi}{x_{ij}^a}, \ T_{ij}(m) = s \Bigr) \\
        &\leq \sum_{(ij)\in\calL^a} \sum_{s=1}^\infty \P\Bigl( \overline{\theta}_{ij}[s] - \theta_{ij} + \sqrt{\frac{ \ln(m)}{s}}   < - \frac{\xi}{x_{ij}^a} \Bigr).
    \end{align}
    Since the distribution of the payoff parameters are $[0,1]$-bounded and independent, we can apply Hoeffding's inequality \cite[Theorem 2]{Hoeffding1963} to obtain
    \begin{align}
        \P(U^a(m) < r^a - \xi)&\leq \sum_{(ij)\in\calL^a} \sum_{s=1}^\infty  \exp\Bigl(- 2 s \Bigl(\frac{\xi}{x_{ij}^a} +  \sqrt{\frac{ \ln(m)}{s}}\Bigr)^2 \Bigr) \\
        &\leq \sum_{(ij)\in\calL^a} \sum_{s=1}^\infty  \exp\Bigl(- 2 s \Bigl(\frac{\xi}{x_{ij}^a}\Bigr)^2 -2  \ln(m) \Bigr) \\
        &= \frac{1}{m^{2}}\sum_{(ij)\in\calL^a}  \sum_{s=1}^\infty  \exp\Bigl(- 2 s \Bigl(\frac{\xi}{x_{ij}^a}\Bigr)^2 \Bigr).
    \end{align}
    We use that for any $y > 0$ we have
    \begin{align}
        \sum_{s=1}^\infty \exp(-sy) = \frac{1}{\exp(y)-1} \leq \frac 1y,
    \end{align}
    to write
    \begin{align}
        \P(U^a(m) < r^a - \xi)&\leq \frac{1}{m^{2}} \sum_{(ij)\in\calL^a}  \frac{(x_{ij}^a)^2}{2\xi^2}
        \leq \frac{\|\lambda\|_2^2}{2 m^{2} \xi^2},
    \end{align}
    where we used that $x_{ij}^a \leq \lambda_i$ for all $(ij) \in\calL^a$.
    Taking the sum over all episodes gives
    \begin{align}
        \sum_{m=1}^k \P(U^a(m) < r^a - \xi)
        &\leq
        \sum_{m=1}^\infty \P(U^a(m) < r^a - \xi)
        \leq \sum_{m=1}^\infty \frac{\|\lambda\|_2^2}{2 m^{2} \xi^2}
        = \frac{\pi^2 \|\lambda\|_2^2}{12\xi^2}.
    \end{align}
    \Halmos
\end{proof}

\newpage
\subsection{Proof of Lemma~\ref{lem:n_bad_episodes}}
\label{app:n_bad_episodes}
\begin{proof}{}
    Let $k\in\N_{\geq 1}$ be such that $k\geq C_\beta$.
    Let $a\in\calA\setminus\{1\}$ be a suboptimal action and $x^a$ its corresponding nondegenerate basic feasible solution of $\LP{\theta,\eps}$ in \eqref{eq:LP_eps}.
    Starting from \eqref{eq:Sak_def}, we split the summation based on whether the UCB index $U^a(m)$ is smaller or larger than the true optimal mean $r^1$ minus some small error $\xi_k := \ln^{-\frac{\beta}{4}}(k)$:
    \begin{align}
        S^a(k) &= \sum_{m=1}^k \ind{U^a(m) < r^1 - \xi_k, \ A_m = a} + \sum_{m=1}^k \ind{U^a(m) \geq r^1 - \xi_k, \ A_m = a}.
        \label{eq:Sak_split}
    \end{align}
    By construction, action $a$ can be chosen only if the UCB index $U^a$ is as large as the indices of the other actions, including the optimal action $U^1$ (Line \ref{line:maxucb} in Algorithm \ref{alg:learning_alg}). Therefore,
    \begin{align}
        \sum_{m=1}^k \ind{U^a(m) < r^1 - \xi_k, \ U^1(m) \leq U^a(m)} 
        &\leq \sum_{m=1}^k \ind{U^1(m) < r^1 - \xi_k}.
    \end{align}
    Taking the expectation of \eqref{eq:Sak_split} yields
    \begin{align}
        \label{eq:ESak_bnd}
        \E(S^a(k))
        &\leq
        \sum_{m=1}^k \P(U^1(m) < r^1 - \xi_k) + \sum_{m=1}^k \P(U^a(m) \geq r^1 - \xi_k, \ A_m = a).
    \end{align}
    We will next analyze the two terms on the right side of \eqref{eq:ESak_bnd} more carefully.

    Note that $\xi_k \to 0$ as $k\to\infty$, so $\xi_k < r^1$ for $k$ sufficiently large. 
    From Lemma~\ref{lem:prob_underestimation}, and using the definition of $\xi_k$, we find
    \begin{align}
        \lim_{k\to\infty} \frac{\sum_{m=1}^k \P(U^1(m) < r^1 - \xi_k) }{\ln^\beta (k)}
        &\leq
        \lim_{k\to\infty} \frac{\pi^2 \|\lambda\|_2^2 \ln^{\beta/2}(k)}{12\ln^\beta (k) }
        = 0.
        \label{eq:UCB_underestimation_asymp_bnd}
    \end{align}

    For the second term in \eqref{eq:ESak_bnd}, we note that the summation can be simplified as follows.
    Suppose that $m_1\in\N_{\geq 1}$ is the first episode in which Algorithm 1 chooses action $a$ in Line~\ref{line:maxucb}. Then  $U^a(m_1) = \overline{U}^a[1,m_1]$.
    Similarly, $U^a(m_2) = \overline{U}^a[2,m_2]$ where $m_2$ is the second episode in which Algorithm \ref{alg:learning_alg} chooses action $a$; so on and so on.
    Moreover, for fixed a sample path, $\overline{U}[s,k]$ is nondecreasing in $k$ (recall  \eqref{eq:Uestimator_episodes}).
    Therefore,
    \begin{align}
        \sum_{m=1}^k \P(U^a(m) \geq r^1 - \xi_k, A_m = a)
        &\leq \sum_{s=1}^k \P\bigl(\overline{U}^a[s,k] \geq r^1 - \xi_k\bigr).
    \end{align}
    Substitution of $r^1 = r^a + d^a$ gives
    \begin{align}\label{eq:prob_split_overestimate}
        \sum_{m=1}^k \P(U^a(m) \geq r^1 - \xi_k, A_m = a)
        &\leq \sum_{s=1}^k \P\bigl(\overline{U}^a[s,k] - r^a \geq d^a - \xi_k\bigr).
    \end{align}
    We bound the probability of the first $C_\beta + u_k$ episodes in \eqref{eq:prob_split_overestimate} by one,
    \begin{align}\label{eq:Uam_bnd}
        \sum_{m=1}^k \P(U^a(m) \geq r^1 - \xi_k, A_m = a)
        &\leq C_\beta + u_k + \sum_{s= \lceil C_\beta + u_k \rceil}^k \P\bigl(\overline{U}^a[s,k] - r^a  \geq d^a - \xi_k\bigr).
    \end{align}
    By  \eqref{eq:def_event_E_lem} and the law of total probability, we obtain
    \begin{align}
        \sum_{s= \lceil C_\beta + u_k \rceil}^k \P\bigl(\overline{U}^a[s,k] - r^a  \geq d^a - \xi_k\bigr)
        &\leq
        \sum_{s= \lceil C_\beta + u_k \rceil}^k \P\bigl(\overline{U}^a[s,k] - r^a  \geq d^a - \xi_k, \ E_k^a\bigr) +
        \sum_{s= \lceil C_\beta + u_k \rceil}^k \P((E_k^a)^c).
        \label{eq:Uam_split}
    \end{align}

    From Lemma~\ref{lem:prob_insuf_samples} it follows that
    \begin{align}
        \lim_{k\to\infty}\sum_{s= \lceil C_\beta + u_k \rceil}^k \P((E_k^a)^c)
        &\leq
        \lim_{k\to\infty} k \P((E_k^a)^c)
        =
        0.
        \label{eq:prob_Eijac_astymp}
    \end{align}
    Moreover, Lemma~\ref{lem:bnd_overestimation_suf_samples} implies immediately that the middle  term in~\eqref{eq:Uam_split} converges to zero, hence
    \begin{align}
        \lim_{k\to\infty} \sum_{s= \lceil C_\beta + u_k \rceil}^k \P\bigl(\overline{U}^a[s,k] - r^a  \geq d^a - \xi_k\bigr) &= 0.
        \label{eq:UCB_underestimation_asymp_bnd2}
    \end{align} 

    We continue to bound \eqref{eq:Sak_split}, 
    \begin{align}
        \lim_{k\to\infty} \frac{\E(S^a(k))}{\ln^\beta(k)}
        &\stackrel{\textrm{\eqref{eq:ESak_bnd}}}{\leq}
        \lim_{k\to\infty} \frac{\sum_{m=1}^k \P(U^1(m) < r^1 ) + \sum_{m=1}^k \P(U^a(m) \geq r^1 , \ A_m = a)}{\ln^\beta(k)} \\
        &\leq
        \lim_{k\to\infty} \frac{\sum_{m=1}^k \P(U^1(m) < r^1 )}{\ln^\beta(k)} + \frac{C_\beta + u_k + \sum_{s= \lceil C_\beta + u_k \rceil}^k \P\bigl(\overline{U}^a[s,k] - r^a  \geq d^a - \xi_k\bigr)}{\ln^\beta(k)}.
        \label{eq:ucbqr_limit_twoterms}
    \end{align}
    Now, by~\eqref{eq:UCB_underestimation_asymp_bnd},
    \begin{align}
        \lim_{k\to\infty} \frac{\sum_{m=1}^k \P(U^1(m) < r^1 )}{\ln^\beta(k)} &= 0.
    \end{align}
    Moreover, $C_\beta$ is a constant and $\lim_{k\to\infty} u_k /\log^\beta (k) = 1$ (recall~\eqref{eq:def_seq_u_eta_sig_lem}).
    Lastly, by~\eqref{eq:UCB_underestimation_asymp_bnd2},
    \begin{align}
        \lim_{k\to\infty} \frac{\sum_{s= \lceil C_\beta + u_k \rceil}^k \P\bigl(\overline{U}^a[s,k] - r^a  \geq d^a - \xi_k\bigr)}{\log^\beta(k)} &= 0.
    \end{align} 
    So we find that
    \begin{align}
        \lim_{k\to\infty} \frac{\E(S^a(k))}{\ln^\beta(k)}
        &= 1,
    \end{align}
    which concludes the proof. 
    \Halmos
\end{proof}

\section*{Acknowledgements}
The authors would like to thank Thomas van Vuren for his insightful comments and fruitful discussions.\\

This work is part of \emph{Valuable AI}, a research collaboration between the Eindhoven University of Technology and the Koninklijke KPN N.V.
Parts of this research have been funded by the EAISI's IMPULS program, and by Holland High Tech | TKI HSTM via the PPS allowance scheme for public-private partnerships.

\bibliographystyle{unsrt}
\bibliography{Literature}
\end{document}